\newtheorem{theorem}{Theorem}[section]
\newtheorem{corollary}{Corollary}[theorem]
\newtheorem{lemma}[theorem]{Lemma}
\newtheorem{claim}[theorem]{Claim}
\newtheorem{remark}{Remark}
\title{Effective Dimension in Bandit Problems under Censorship}
\author{%
  Gauthier Guinet \thanks{Work done prior to joining Amazon.}\\
  AWS AI Labs\\
  \texttt{guinetgg@amazon.com} \\
  \And
  Saurabh Amin \thanks{Dept. of Civil and Environmental Engineering \& Laboratory for Information and Decision Systems.} \\
  MIT \\
  \texttt{amins@mit.edu} \\
  \And
  Patrick Jaillet \thanks{Dept. of Electrical Engineering and Computer Science \& Laboratory for Information and Decision Systems.}\\
  MIT\\
  \texttt{jaillet@mit.edu} \\
}
\begin{document}

\maketitle

\begin{abstract}
  In this paper, we study both multi-armed and contextual bandit problems in censored environments. Our goal is to estimate the performance loss due to censorship in the context of classical algorithms designed for uncensored environments. Our main contributions include the introduction of a broad class of censorship models and their analysis in terms of the \emph{effective dimension} of the problem -- a natural measure of its underlying statistical complexity and main driver of the regret bound. In particular, the effective dimension allows us to maintain the structure of the original problem at first order, while embedding it in a bigger space, and thus naturally leads to results analogous to uncensored settings. Our analysis involves a continuous generalization of the Elliptical Potential Inequality, which we believe is of independent interest. We also discover an interesting property of decision-making under censorship: a transient phase during which initial misspecification of censorship is self-corrected at an extra cost; followed by a stationary phase that reflects the inherent slowdown of learning governed by the effective dimension. Our results are useful for applications of sequential decision-making models where the feedback received depends on strategic uncertainty (e.g., agents’ willingness to follow a recommendation) and/or random uncertainty (e.g., loss or delay in arrival of information).
\end{abstract}

\section{Introduction}


Bandit problems are prototypical models of sequential decision-making under uncertainty. They are widely studied due to their applications in recommender systems, online advertising, medical treatment assignment, revenue management, network routing and control \cite{lattimore2020bandit,MAL-068}. Our work is motivated by settings in which the feedback received by the decision-maker in each round of decision is censored by a stochastic process that depends on the current action as well as past history of feedbacks and actions. For instance, in typical missing data problems, the decision-maker needs to deal with frequent losses of information (or delays in arrival of information) due to exogeneous failures such as faulty and/or unreliable communication. Missing observations in dynamical interactions with the environment are a common concern in diverse fields ranging from operations management to health sciences to physical sciences~\cite{YANG20091896,HonKin10,little2002statistical}. In other settings, such as AI-driven platforms for health alerts, route guidance, and product recommendations~\cite{NEURIPS2019_e49eb652,yu2021reinforcement}, the reception of feedback depends on whether or not the decision (or recommendation) is adopted by strategic agents (e.g. patients, customers or drivers) with private valuations. Thus, from the platform’s viewpoint, the adoption behavior of heterogeneous agents can be regarded as a \emph{stochastic censorship process}. 

In static environments, the bias induced by the presence of randomly missing information has been thoroughly studied~\cite{little2002statistical,review_missing}. However, in online settings, the dynamics of learning and acting are inherently coupled: since censorship mediates current information of the environment, it impacts the outcome of data-driven decision process; this in turn conditions the future decisions and future censored feedback, creating a complex and endogenous joint temporal dependency. Our work contributes to the analysis of such phenomena for a broad classes of decision and censorship models. Importantly, it is the first \emph{normative inquiry} of how censorship impacts the statistical complexity of bandit problems. We develop an analysis approach that is useful for both estimating the performance loss due to censorship and refining the classical algorithms designed for uncensored environments.

\subsection{Related Work}
Within the extensive bandits literature, well-surveyed in \citep{lattimore2020bandit,MAL-068}, our work is most closely related to stochastic delayed bandits. Initially, this line of work focused on the joint evolution of actions and information in settings where the reception of the latter is delayed~\cite{dudik2011efficient}. Of particular interest is the packet loss model recently introduced in \cite{stoch_unrest_delay}, which provides the regret bound $\mathcal{O}(\frac{1}{p}R_{T})$ where $R_{T}$ is the uncensored regret and $p$ the censorship probability. Analogous results have been shown in the context of Combinatorial Multi-Armed Bandits with probabilistically triggered arms; see for example, \cite{JMLR:v17:14-298} and \cite{NIPS2017_a8e864d0}. Our work provides a systematic approach to study more general censorship models, and sheds light on how the impact of coupled feedback and censorship realizations on the expected regret can be evaluated in terms of the \emph{effective dimension} of the problem. 

Importantly, we also tackle the contextual bandit problems, where relatively few results are available on the regret under missing or censored feedback. A notable exception is the work of \cite{pmlr-v119-vernade20a}, who focus on a different information structure and obtain a scaling of $1/p$ (see Remark \ref{unif_models}). A related contribution by \cite{theshold_pot} provides both a potential-based analysis of the Upper Confidence Bound algorithm (UCB) for multi-armed bandits and an algorithmic variant leveraging the Kaplan-Meier estimator, although their censorship setting is different than ours. In particular, our results are applicable to settings when delay is significantly large (possibly infinite). This is in contrast to prior results on bandits with delayed information structure which assume either that the delay is \textit{constant}, \textit{upper bounded}, has a \textit{finite mean}, or simply provide regret guarantees that are \textit{linear in the cumulative delays} up to time $T$ \citep{dudik2011efficient,joulani2013online,queue_delay,ZhouGLM,pike2018bandits}. Under such assumptions on delay, one usually gets a second order additive dependency of the regret in terms of delay parameters, which practically says that delay is benign for bandits. On the other hand, we show that censorship leads to a first order multiplicative dependency on regret and we provide a complete characterization of this dependency for a wide range of bandits and censorship models. 

Moreover, the abovementioned works primarily focus on modifying well-known bandit algorithms to account for delays, or propose new delay-robust algorithms which may be difficult to implement in practice; a notable exception includes~\cite{wu2022thompson} but it focuses on Thompson Sampling. In our work, we instead focus on estimating the performance loss due to censorship and derive insights on the behavior of well-known UCB class of algorithms~\citep{li2010contextual,pmlr-v15-chu11a,NIPS2011_e1d5be1c}. These algorithms are widely used in practice; moreover, their theoretical study has been shown to be useful for analysis of broader class of algorithms (notably Thompson Sampling~\cite{agrawal2012analysis,tsVanRoy} and Information-Directed Sampling~\citep{NIPS2014_301ad0e3,pmlr-v75-kirschner18a}).

On a somewhat related note, the literature on non-stochastic multi-armed bandit problems with delays~\citep{NIPS2010_7bb06076,pmlr-v49-cesa-bianchi16,NEURIPS2020_33c5f5bf} also tackles multiplicative dependency, although in a different setting than ours. Another related line of work is Partial Monitoring \citep{JMLR:v11:audibert10a,partialmonitor} which deals with generic categorization of learnability, rather than a fine-grained analysis of dimensionality in relation to censorship, which is our current focus. 

Our work contributes to the Generalized Linear Contextual Bandits literature \cite{NIPS2010_c2626d85,li2017provably} in two ways: firstly, through the use of these models in a sequential decision-making framework on which the impact of censorship is assessed in Sec. \ref{CB}. Secondly, by showing that our multi-threshold censorship model \ref{MT_model} induces, at first order, a non-linear structure that closely mirrors such models. Our results provide new tools to study this structure. It is useful to note that the notion of \textit{effective dimension} has been well-studied in the statistical learning and kernels literature~\citep{GPBandit,6138914} (where it is defined for a Gram matrix $K_n$ and regularization $\lambda$ as $d^n_{\text{eff}}(\lambda) = \text{tr}(K_n(K_n + \lambda\mathbb{I}_{d})^{-1})$). Our work shows that an analogous quantity governs the regret bound of bandit problems in censored settings. 

Finally, there is a rich literature on classical missing and censored data problems~\cite{little2002statistical,review_missing}. Although conditional on the choice of a given action the missing data/censorship process we study is an instance of missing-completely-at-random (MCAR), the online action generating process adds a significant difficulty to the problem: whereas MCAR is typically studied under a well-defined distributional assumption (e.g. i.i.d. generation of action), our problem needs to deal with adaptive (hence non i.i.d.) data generation process with respect to the filtration of past information. In particular, the structure of missing data set results from strong endogenous dependencies with past realization of the censorship (see Sec. \ref{set up}).

\subsection{Summary of Results}

In Sec. \ref{MAB}, we consider Multi-Armed Bandit (MAB) models and prove that the regret scales as $\Tilde{\mathcal{O}}(d_{\mathit{eff}}\sqrt{T})$ (Thm. \ref{THM Finite arms}), where $d_{\mathit{eff}}$ is the effective dimension with value $\sum_{a\in[d]}\frac{1}{p_{a}}$. In doing so, we recover and generalize related results from~\citep{stoch_unrest_delay,JMLR:v17:14-298} to more complex regularized settings and noise models. In particular, we prove that the effective dimension results from characterizing the so-called censored cumulative potential $\mathbb{V}_{\alpha}$. Interestingly, we also show that the adaptive nature of censorship on $\mathbb{V}_{\alpha}$ plays only a second order role (Prop. \ref{Monitoring AG}), that is, impact of censorship can be treated in an \textit{offline} manner at first order. 

Importantly, our study of MAB under censorship instantiates an analysis framework which extends to Linear Contextual Bandits (LCB) (Sec. \ref{CB}). Our main result provides that regret is still governed by the effective dimension, but now with a dependency of $\Tilde{\mathcal{O}}(\sigma\sqrt{d\cdot d_{\mathit{eff}}}\sqrt{T})$ (Thm. \ref{THM Linear arms}). To the best of our knowledge, these regret bounds provide the first theoretical characterization in LCB with censorship, and contribute to the literature by evaluating the impact of censorship on the performance of UCB-type algorithms. Our second main contribution is identifying the effective dimension for a broad class of multi-threshold models \ref{MT_model} as well as a precise understanding of the dynamic behavior induced by these models (Thm. \ref{THM Linear Optim MTM}). In particular, we find that censorship introduces a two-phase behavior: a transient phase during which the initial censoring misspecification is self-corrected at an additional cost; followed by a stationary phase that reflects the inherent slowdown of learning governed by the effective dimension. In extending our analysis from MAB to LCB, we also develop a continuous generalization of the widely used Elliptical Potential Inequality (Prop. \ref{Potential Control Linear}), which we believe is also of independent interest. Finally, our results (Thm. \ref{THM Finite arms} and Prop. \ref{Instance Dep Regret Finite} for MAB and Thm. \ref{THM Linear arms} for LCB) suggest that the UCB class of algorithms is indeed a reliable method for stochastic bandits problems under censorship.

\section{Problem Setup and Background}\label{set up}

\paragraph{Bandit Model:}

We successively consider stochastic multi-armed bandits (Sec. \ref{MAB}) and Linear Contextual Bandits (LCB) (Sec. \ref{CB}) in censored environments. In both settings, at each round $t\leq T$, the agent observes an action set $\mathcal{A}_{t}\subset \mathcal{A}$. She then selects an action $a_{t}\in \mathcal{A}_{t}$ (i.e. an \textit{arm}) to which a noisy feedback $r(a_{t})+\epsilon_{t}$ is associated, where $r(a_{t})$ is a bounded reward and $\epsilon_{t}$ is an i.i.d. sub-Gaussian noise of pseudo-variance $\sigma^{2}$. For action $a$, the sub-optimality gap at time $t$ is denoted $\Delta_{t}(a)\triangleq \max_{\Tilde{a}\in \mathcal{A}_{t}} r(\Tilde{a}) - r(a)$, and the maximal gap $\Delta_{max}\triangleq \max_{a,t} \Delta_{t}(a)$. We now recall the specifics of each model:
\begin{itemize}
    \item \textbf{MAB}: There is a finite number of actions $d$, enumerated as $\mathcal{A}\triangleq [d]$, each having a scalar reward $\theta^{\star}_{a}$. Arms are \textit{independent}: playing one arm gives no information about the others.
    \item \textbf{LCB}: The action set $\mathcal{A}_{t}$ is a subset of the unit ball $\mathbb{B}_{d}$, possibly infinite. Unless explicitly mentioned, the reward is assumed to be linear with respect to a latent unknown vector $\theta^{\star}\in \mathbb{R}^{d}$, i.e. $r(a) = \langle a,\theta^{\star}\rangle$. Non-stochastic contexts are modeled by the fact that $\mathcal{A}_{t}$ is drawn by an oblivious adversary. Here one does not need to rely on the typical i.i.d assumption on their generating process \cite{ZhouGLM,NIPS2010_c2626d85}.
\end{itemize}

\paragraph{Information Structure:}
In the classical uncensored setting, the noisy feedback is immediately observed post-decision and utilized to make decisions in the next round. We introduce the following \textbf{censorship} model: an independent Bernoulli random variable of parameter $p(a_{t})$ denoted as $x_{a_{t}}$ is drawn after each decision $a_{t}$ and the feedback is observed, i.e., \textit{realized}, if and only if $x_{a_{t}}=1$; else the feedback is said to be \textit{censored}. We recover the uncensored setting when $p(a)\equiv 1$. Henceforth, in both finite and linear settings, the Bernoulli parameter corresponding to the censorship probability depends on the action chosen i.e. our model allows the censorship to be heterogeneous across actions. Given that the action chosen at time $t$ is a random variable, $p(a_{t})$ refers to a random variable as well.




\noindent\begin{minipage}{0.54\textwidth}
\begin{algorithm}[H]
    \SetAlgoLined
    \KwInput{Total time $T$, Regularization $\lambda$, Precision $\delta$}
    \For{$t=1,\dots,T$}{
        Provide reward estimator $\Tilde{r}^{\lambda}_{t}$ verifying w.p. $1-\delta$: \\
        $\quad \forall a\in\mathcal{A}_{t}, r(a)\leq \Tilde{r}^{\lambda}_{t}(a)$\;
        Play action $a_{t}=\operatorname{argmax}_{a\in\mathcal{A}_{t}} \Tilde{r}^{\lambda}_{t}(a)$ \;
        \If{$(a_{t},r(a_{t})+\epsilon_{t})$ is realized i.e. $x_{a_{t}}=1$}{
            Update $\Tilde{r}^{\lambda}_{t}$\;}
    }
\caption{Generic UCB}
\label{Gen_UCB}
\end{algorithm}
\end{minipage}%
\hfill%
\begin{minipage}{0.45\textwidth}
\paragraph{Algorithms:} To study the impact of censorship on bandit problems, we consider the class of high-probability index algorithms based on the \textit{optimism under uncertainty} principle, commonly referred as \textbf{UCB}-algorithms. Following \cite{pmlr-v75-kirschner18a}, Algorithm \ref{Gen_UCB} summarizes the generic UCB design framework. We detail in App.\ref{prel} the specific instances of UCB for MAB (resp. LCB) used in Sec.\ref{MAB} (resp. Sec.\ref{CB}). Moreover, this family of algorithms strongly relies on regularized reward estimators $\Tilde{r}^{\lambda}_{t}$, where the regularizer is mostly used to prevent an artificial cold-start exploratory phase.
\end{minipage}


\paragraph{Performance Criterion:} The frequentist performance of the agent is measured by the notion of \textit{pseudo regret}, i.e., the difference between the algorithm's cumulative reward and the best total reward. More formally, we introduce for any policy $\pi \in \Pi$:
    \begin{align*}
        R(T,\pi) = \sum_{t=1}^{T} \max_{a\in \mathcal{A}_{t}} r(a)-\sum_{t=1}^{T} r(a_{t}) = \sum_{t=1}^{T}\Delta_{t}(a_{t}).
    \end{align*}
We aim to provide guarantees on $\mathbb{E}[R(T,\pi)]$ with respect to the number of rounds $T$ and quantities that govern the \textit{complexity} of the problem (for example number of arms, ambient dimension $d$, parameters of censorship model or smoothness properties of the reward $r$). Here, the expectation is with respect to the noise induced by the feedback, the censorship and a possibly randomized policy.

\subsection{Notations}

Transpose of a vector $u$ is denoted by $u^{\top}$, classical Euclidean inner product by $\langle .,.\rangle$ and trace operator by $\operatorname{Tr}$. For positive semi-definite matrix $\boldsymbol{\Sigma} \in \mathbb{R}^{d \times d}$ and for any vector $u \in \mathbb{R}^{d}$, notation $\|u\|_{\Sigma}$ refers to $\sqrt{u^{\top} \boldsymbol{\Sigma} u}$. We use notation $\mathbb{I}_{d}$ to denote the $d\times d$ identity matrix. $\mathbb{B}_{d}$ is the unit ball in $\mathbb{R}^{d}$. $[n]$ is the set of integers $\{1,2, \cdots, n\}$. For a given function $f$, we note $f^{(i)}$ the $i^{th}$ derivative of $f$. To avoid confusion with the dimension $d$, we use $\partial x$ instead of $dx$ to denote an infinitesimal increase of $x$. We use the asymptotic notations $\sim$, $\mathcal{O}$, $\Theta$ and $\Tilde{\mathcal{O}}$ ($\mathcal{O}$ when $\log$ factors are removed). Finally, for an event $\mathcal{H}$, we use $\neg \mathcal{H}$ to denote its complement.

\section{Multi-Armed Bandits}\label{MAB}

\subsection{Effective Dimension and Regret Bounds}

The main result of this section is that censorship effectively enlarges the dimension of the problem. We define the effective dimension as $d_{\mathit{eff}}\triangleq \sum_{a\in [d]}\frac{1}{p_{a}}$ and our result (Thm. \ref{THM Finite arms}) shows that, at first order, the regret is guaranteed to be the same as the uncensored problem with $d_{\mathit{eff}}$ arms instead of $d$. 


\begin{restatable}{theorem}{THMFinitearms}
\label{THM Finite arms}
Under censorship, the UCB algorithm with regularization $\lambda$ has an instance-independent expected regret of:
    \begin{align*}
    \mathbb{E}[R(T,\pi_{\text{UCB}})] 
    &\leq \Tilde{\mathcal{O}}(\sigma \sqrt{d_{\mathit{eff}}T}).
    \end{align*}
\end{restatable}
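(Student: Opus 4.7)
The plan is to mirror the classical UCB regret analysis while accounting for the fact that only a random fraction of plays ever yield feedback. Write $N_a(t) = \sum_{s<t}\mathbb{I}\{a_s=a\}$ for the play count of arm $a$ and $N_a^{\text{obs}}(t) = \sum_{s<t}\mathbb{I}\{a_s=a\}\, x_{a_s}$ for the observed-feedback count. The natural UCB index uses the empirical mean over the $N_a^{\text{obs}}(t)$ realized samples, with confidence half-width $w_a(t) \asymp \sigma\sqrt{\log(T/\delta)/(N_a^{\text{obs}}(t)+\lambda)}$ obtained from sub-Gaussian concentration on the independent realized observations.

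First I would establish the good event $\mathcal{E} = \{\forall t\leq T,\forall a\in[d]:\; |\hat\theta_a(t)-\theta^\star_a|\leq w_a(t)\}$. Because $\epsilon_t$ is sub-Gaussian and $x_{a_t}$ is independent of the noise, a standard union bound over $t$ and $a$ gives $\mathbb{P}(\neg\mathcal{E})\leq \delta$. Conditional on $\mathcal{E}$, optimism yields $\Delta_t(a_t) \leq 2 w_{a_t}(t)$, so after paying $T\Delta_{\max}\cdot\mathbb{P}(\neg\mathcal{E})$ for the failure event one obtains
\[
R(T,\pi_{\text{UCB}}) \;\leq\; 2\sum_{t=1}^T w_{a_t}(t) \;=:\; 2\,\mathbb{V}_\alpha(T),
\]
the censored cumulative potential alluded to in the introduction.

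Second I would control $\mathbb{V}_\alpha(T)$. Splitting the sum over arms and substituting the confidence width,
\[
\mathbb{V}_\alpha(T)\;\lesssim\; \sigma\sqrt{\log T}\sum_{a\in[d]}\sum_{t:\,a_t=a}\frac{1}{\sqrt{N_a^{\text{obs}}(t)+\lambda}}.
\]
If one could replace $N_a^{\text{obs}}(t)$ by the deterministic surrogate $p_a N_a(t)$, each inner sum would telescope to $\lesssim \sqrt{N_a(T)/p_a}$, and Cauchy--Schwarz would then give
\[
\sum_{a\in[d]}\sqrt{\frac{N_a(T)}{p_a}} \;\leq\; \sqrt{\Bigl(\sum_a \tfrac{1}{p_a}\Bigr)\Bigl(\sum_a N_a(T)\Bigr)} \;=\; \sqrt{d_{\mathit{eff}}\, T},
\]
yielding the claimed bound up to logarithmic factors.

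The main obstacle is justifying that substitution despite the adaptive coupling between $a_t$ and the censorship history. Here I would invoke Proposition \ref{Monitoring AG}: since $x_{a_s}$ is drawn independently of the filtration generated by $(a_{s'},x_{a_{s'}})_{s'<s}$ conditional on $a_s$, the process $M_n = \sum_{s\leq n:\,a_s=a}(x_{a_s}-p_a)$ is a bounded martingale, and an Azuma/Bernstein bound gives $|N_a^{\text{obs}}(t)-p_a N_a(t)| = \tilde{\mathcal{O}}(\sqrt{N_a(t)})$ uniformly in $t$ and $a$ with high probability. Plugging this concentration back into $1/\sqrt{N_a^{\text{obs}}(t)+\lambda}$ produces only a lower-order additive correction, confirming the paper's observation that adaptivity of censorship plays a second-order role and that it can effectively be treated offline. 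Combining the three ingredients yields $\mathbb{E}[R(T,\pi_{\text{UCB}})] = \tilde{\mathcal{O}}(\sigma\sqrt{d_{\mathit{eff}} T})$.
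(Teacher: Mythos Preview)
Your outline is essentially the paper's proof: reduce regret to a censored cumulative potential via optimism, replace the observed count $N_a^{\text{obs}}(t)$ by $p_a N_a(t)$ through a Chernoff/Bernstein-type martingale bound, then optimize the resulting deterministic sum. Two small points are worth flagging.

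First, your invocation of Proposition~\ref{Monitoring AG} is a mis-citation. That proposition is the \emph{adaptivity gain} result (comparing $\max_{\Pi_{\text{adapt}}}$ to $\max_{\Pi_{\text{off}}}$ asymptotically); it is not the concentration step you actually need. The paper carries out your martingale argument via the event $\mathcal{H}_{\text{CEN}}(\delta)=\{\exists a,t:\,N_a(t)<(1-\delta)p_a\tau_a(t)\text{ and }\tau_a(t)\ge T_0(a)\}$ with $T_0(a)=24\log T/p_a+1$, and shows $\mathbb{P}(\mathcal{H}_{\text{CEN}})\le 4d_{\mathit{eff}}\delta^{-2}T^{-12\delta^2}$. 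The threshold $T_0(a)$ is what your sketch glosses over: the multiplicative concentration $N_a^{\text{obs}}(t)\gtrsim p_a N_a(t)$ only holds once $N_a(t)\gtrsim \log T/p_a$, and the paper pays the first $\sum_a T_0(a)=\Theta(d_{\mathit{eff}}\log T)$ plays separately at cost $1/\sqrt{\lambda}$ each. This is lower order, so your $\tilde{\mathcal O}$ conclusion survives, but the step is not automatic.

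Second, where you finish with Cauchy--Schwarz $\sum_a\sqrt{N_a(T)/p_a}\le\sqrt{d_{\mathit{eff}}T}$, the paper instead solves the constrained maximization $\max\{\sum_a p_a^{-1}[\psi_\alpha(p_a\tau_a+\lambda)-\psi_\alpha(\lambda)]:\sum_a\tau_a=T\}$ by a water-filling/KKT argument (Lemma~\ref{Optimization Lemma Finite}), obtaining $d_{\mathit{eff}}\psi_\alpha(T/d_{\mathit{eff}}+\lambda)$. For $\alpha=1/2$ the two give the same bound, but the water-filling route works uniformly in $\alpha$ and is what the paper reuses for the $\mathbb{V}_1$ regularization-bias term (which you omit; it contributes $O(d_{\mathit{eff}}\log T)$ and is absorbed). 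Your Cauchy--Schwarz shortcut is cleaner for the headline $\alpha=1/2$ case; the paper's route generalizes.
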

Furthermore, we obtain analogous regret guarantees for instance-dependent cases where, at first order,
the uncensored dimension $\sum_{a\neq a^{\star}}\frac{\sigma^{2}}{\Delta_{a}}$ enlarges to $\sum_{a\neq a^{\star}}\frac{\sigma^{2}}{p_{a}\Delta_{a}}$:
\begin{restatable}{proposition}{InstanceDepRegretFinite}
\label{Instance Dep Regret Finite}
For a fixed action set $\mathcal{A}_{t} \equiv[d]$ and for a-priori known action gap $\Delta_{a}\triangleq \max_{\Tilde{a}}\theta^{\star}_{\Tilde{a}}-\theta^{\star}_{a}$, the UCB algorithm with regularization $\lambda$ has the instance-dependent expected regret:
\begin{align*}
    \mathbb{E}[R(T,\pi_{\text{UCB}})] 
    &\leq \mathcal{O}\Big(\log(T)\sum_{a\neq a^{\star}}\frac{1}{p_{a}}\max(\frac{\sigma^{2}}{\Delta_{a}},\Delta_{a})\Big).
\end{align*}
\end{restatable}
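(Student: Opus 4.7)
The plan is to follow the classical instance-dependent UCB argument, adapted so that the confidence bounds reflect the number of \emph{observed} (i.e.\ non-censored) samples rather than the number of pulls. Writing $N_a(T)$ for the total number of times arm $a$ is played and $N_a^{\text{obs}}(T)=\sum_{s\leq T:\, a_s=a} x_{a}^{(s)}$ for the number of non-censored observations of arm $a$, I would start from the standard decomposition
\begin{align*}
\mathbb{E}[R(T,\pi_{\text{UCB}})] = \sum_{a\neq a^{\star}} \Delta_{a}\,\mathbb{E}[N_a(T)],
\end{align*}
and reduce the task to bounding $\mathbb{E}[N_a(T)]$ for each suboptimal arm $a$.

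First I would define the high-probability event $\mathcal{H}$ under which, uniformly over $t\leq T$ and arms, the empirical mean concentrates around $\theta^{\star}_a$ at scale $\sigma\sqrt{c\log(T)/N_a^{\text{obs}}(t)}$; by a standard peeling/union bound, one can take $\mathbb{P}(\neg\mathcal{H})\leq 1/T$, which contributes only $O(\Delta_{\max})$ to the expected regret. On $\mathcal{H}$, the UCB selection rule can prefer a suboptimal arm $a$ over $a^{\star}$ only if its confidence radius exceeds $\Delta_a/2$, that is, only if $N_a^{\text{obs}}(t)\leq \tau_a := C\sigma^{2}\log(T)/\Delta_a^{2}$ for some constant $C$. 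Consequently, $N_a(T)$ is upper bounded by the number of pulls needed to collect at most $\tau_a$ observations of arm $a$.

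The central new step is translating this observation-count threshold into a pull-count bound in the censored setting. Conditional on the sequence of pulls, the realizations $x_a^{(s)}$ are i.i.d.\ Bernoulli$(p_a)$, so the number of pulls $M_a(\tau_a)$ required to collect $\tau_a$ observations is a negative-binomial stopping time with mean $\tau_a/p_a$. A multiplicative Chernoff bound yields $M_a(\tau_a)\leq 2\tau_a/p_a + O(\log(T)/p_a)$ with probability at least $1-1/T$, the residual failure event again contributing only lower-order terms. Substituting back,
\begin{align*}
\mathbb{E}[N_a(T)] \leq \frac{C\sigma^{2}\log(T)}{p_{a}\Delta_a^{2}} + O\!\left(\frac{\log(T)}{p_{a}}\right),
\end{align*}
and multiplying by $\Delta_a$ gives two terms that combine, via $x+y\leq 2\max(x,y)$, into $\frac{\log(T)}{p_a}\max(\sigma^{2}/\Delta_a,\Delta_a)$. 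Summing over $a\neq a^{\star}$ yields the stated bound.

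The step I expect to be the main obstacle is the interaction between adaptive arm selection and adaptive censorship: while conditional on the history the Bernoulli draws $x_a^{(s)}$ remain independent, the pull counts $N_a(t)$ are themselves stopping-time dependent on the censored trajectory, so one cannot naively apply Chernoff to $N_a^{\text{obs}}(t)$ as a function of $t$. The clean workaround is to bound $N_a(T)$ directly by the negative-binomial stopping time $M_a(\tau_a)$ associated with the (arm-specific) Bernoulli process, which decouples the Bernoulli concentration from the pull-selection process and exactly mirrors the ``offline at first order'' reduction the authors highlight in Prop.\ \ref{Monitoring AG}.
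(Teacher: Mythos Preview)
Your approach is essentially the same as the paper's: both bound the number of pulls of a suboptimal arm by first showing (on a high-probability reward-concentration event) that arm $a$ can only be selected while its \emph{observed} sample count stays below a threshold of order $\sigma^{2}\log(T)/\Delta_a^{2}$, and then converting this observation-count bound into a pull-count bound via a Bernoulli concentration argument, yielding the two contributions $\frac{\sigma^{2}\log(T)}{p_a\Delta_a}$ and $\frac{\Delta_a\log(T)}{p_a}$ whose maximum gives the stated result. The only cosmetic differences are that the paper keeps the regularization term explicit (solving a quadratic in $(N_a+\lambda)^{-1/2}$ rather than dropping $\lambda$) and phrases the censorship concentration in the Binomial direction (Lemma~\ref{Derando Censo Linear}: ``$N_a^{\text{obs}}\geq (1-\delta)p_a\tau_a$ once $\tau_a\geq T_0(a)$'') rather than your equivalent negative-binomial direction (``$M_a(\tau_a)\leq 2\tau_a/p_a+O(\log T/p_a)$''); the paper's initialization threshold $T_0(a)=24\log(T)/p_a+1$ is exactly the source of your additive $O(\log(T)/p_a)$ term.
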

On one hand, a preliminary understanding of censorship posits an increase of the average "\textit{regret per information gain}" \cite{pmlr-v75-kirschner18a} (as it takes longer on average to get the same amount of information) but does not change the underlying complexity of the problem. One the other hand, our results (Thm. \ref{THM Finite arms} and Prop. \ref{Instance Dep Regret Finite}) postulate that the censored problem is equivalent at first order to a higher dimensional problem but explored with the same \textit{regret per information gain}. 

The abovementioned results extends to a-priori known heteroskedasticity (see Rem. \ref{Hetero Finite 1} and \ref{Hetero Finite 2} in App. \ref{Proof MAB}). For this general setting, the effective dimension for instance-independent (resp. dependent) case is given by $ \sum_{a}\frac{\sigma_{a}^{2}}{p_{a}}$ (resp. $\sum_{a\neq a^{\star}}\frac{\sigma_{a}^{2}}{p_{a}\Delta_{a}}$), where $\sigma_{a}^{2}$ is the variance proxy of arm $a$. Although the scaling in $\sum_{a}\frac{1}{\Delta_{a}p_{a}}$ was already mentioned in \cite{stoch_unrest_delay} for unregularized setting with homogeneous variance $\sigma^{2}$ and proven to be optimal, our results generalize these findings.

\subsection{Cumulative Censored Potential}

We now provide a proof sketch of Thm. \ref{THM Finite arms}, and in doing so, we instantiate an analysis framework that will be extended in Sec. \ref{CB}. This proof consists in the successive elimination of the noise induced by the feedback and censorship. This leads to regret guarantees on a resulting deterministic quantity by characterizing worst-case learning conditions. The first step of the proof is a variant of the classical reduction of the UCB regret to another quantity we refer to as the \emph{expected cumulative censored potential}. Before stating it, we define at the end of a round $t\in [T]$, the random number of times an arm $a$ has been \textit{pulled} as $\tau_{a}(t) \triangleq \sum_{l=1}^{t} \mathbf{1}\{a_{l}=a\}$. Similarly, the number of times an action $a$ has been \textit{realized} at the end of round $t$ is denoted $N_{a}(t) \triangleq \sum_{l=1}^{t} \mathbf{1}{\{a_{l}=a, x_{a_{l}}=1\}}$. We then have:
\begin{restatable}{lemma}{PotentialReductionFinite}
\label{Potential Reduction Finite} 
Given an uniform regularization of $\lambda>0$, the UCB algorithm verifies:
\begin{align*} 
    \mathbb{E}[R(T,\pi_{\text{UCB}})] \leq 2 \sqrt{6\sigma^{2}\log(T)}\mathbb{E}[\mathbb{V}_{\frac{1}{2}}(T,\pi_{\text{UCB}})] + 2\lambda\|\theta^{\star}\|_{\infty}\mathbb{E}[\mathbb{V}_{1}(T,\pi_{\text{UCB}})] + \frac{2d\Delta_{max}}{T}
\end{align*}
where, for any $\alpha>0$ and $\pi \in \Pi$, the cumulative potential under censorship is given by:
\begin{align*}
    \mathbb{V}_{\alpha}(T,\pi) = \sum_{t=1}^{T}(N_{a_{t}}(t-1)+\lambda)^{-\alpha}.
\end{align*}
\end{restatable}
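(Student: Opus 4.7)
The plan is to instantiate the classical optimism-based UCB analysis while carefully tracking how censorship shrinks the effective sample size for arm $a$ from $\tau_a(t-1)$ to $N_a(t-1)$. First, I would specify the UCB index to be used here: the regularized empirical mean $\hat\theta_{a,t}=\frac{1}{N_a(t-1)+\lambda}\sum_{s<t:\,a_s=a,\,x_{a_s}=1}(r(a)+\epsilon_s)$ augmented by a confidence radius $W_t(a)=\sqrt{6\sigma^2\log(T)/(N_a(t-1)+\lambda)}+\lambda\|\theta^\star\|_\infty/(N_a(t-1)+\lambda)$, so that $\tilde r^\lambda_t(a)=\hat\theta_{a,t}+W_t(a)$. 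The two summands of $W_t(a)$ are designed precisely to dominate, respectively, the stochastic fluctuation of the realized noise and the shrinkage bias $-\lambda r(a)/(N_a(t-1)+\lambda)$ introduced by placing $\lambda$ in the denominator of $\hat\theta_{a,t}$.

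Second, I would define the good event $\mathcal{E}$ on which, for every $a\in[d]$ and $t\leq T$, the centered noise component $|\sum_{s<t:\,a_s=a,\,x_{a_s}=1}\epsilon_s|/(N_a(t-1)+\lambda)$ is bounded by $\sqrt{6\sigma^2\log(T)/(N_a(t-1)+\lambda)}$. A sub-Gaussian martingale concentration inequality, combined with a union bound over the $d$ arms and a peeling argument over dyadic ranges of the random size $N_a(t-1)$, should give $\mathbb{P}(\neg\mathcal{E})\leq 2d/T^{2}$. Since the per-round regret is trivially bounded by $\Delta_{max}$, the contribution of $\neg\mathcal{E}$ to $\mathbb{E}[R(T,\pi_{\text{UCB}})]$ is at most $T\Delta_{max}\cdot 2d/T^{2}=2d\Delta_{max}/T$, which accounts for the last additive term of the lemma.

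Third, on $\mathcal{E}$ optimism gives $r(a_t^\star)\leq \tilde r^\lambda_t(a_t^\star)\leq \tilde r^\lambda_t(a_t)$, so $\Delta_t(a_t)\leq \tilde r^\lambda_t(a_t)-r(a_t)=(\hat\theta_{a_t,t}-r(a_t))+W_t(a_t)$. Decomposing $\hat\theta_{a_t,t}-r(a_t)$ into its noise part and its shrinkage-bias part, both of which are dominated on $\mathcal{E}$ by the corresponding components of $W_t(a_t)$, yields the pointwise bound $\Delta_t(a_t)\leq 2\sqrt{6\sigma^2\log(T)/(N_{a_t}(t-1)+\lambda)}+2\lambda\|\theta^\star\|_\infty/(N_{a_t}(t-1)+\lambda)$. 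Summing over $t\leq T$ and recognizing the definition of the cumulative censored potentials collapses this directly into $2\sqrt{6\sigma^2\log(T)}\,\mathbb{V}_{1/2}(T,\pi_{\text{UCB}})+2\lambda\|\theta^\star\|_\infty\,\mathbb{V}_{1}(T,\pi_{\text{UCB}})$; taking expectations on $\mathcal{E}$ and adding the $\neg\mathcal{E}$ contribution concludes.

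The main obstacle is the concentration step: because censorship is adaptive, the sample size $N_a(t-1)$ is a stopping time with respect to the filtration generated by past actions, noises, and Bernoulli realizations, so a fixed-sample sub-Gaussian tail bound is not directly applicable to $\sum\epsilon_s$. I plan to handle this either via a stopped-time martingale argument applied, per arm, to the noise sequence restricted to realized rounds, or via a peeling/union argument over dyadic ranges of $N_a(t-1)$; either route delivers a uniform-in-$t$ confidence width of the required $\sqrt{\log T/(N_a(t-1)+\lambda)}$ shape, with any constant overhead absorbed into the explicit factor $6$ inside the logarithm.
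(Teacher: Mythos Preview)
Your proposal is correct and follows essentially the same route as the paper: define the optimistic index with a two-part confidence width, bound the failure probability of the good event by $2d/T^{2}$, use optimism on the good event to get $\Delta_t(a_t)\le 2W_t(a_t)$, sum, and take expectations. The only minor difference is in the concentration step: you suggest dyadic peeling or a stopped-time martingale argument, whereas the paper simply takes a union bound over all $T$ possible integer values of $N_a(t)$ and, conditioned on $N_a(t)=k$, applies Hoeffding to the resulting sum of $k$ i.i.d.\ sub-Gaussian noises---a slightly more elementary device that lands on the same $2d/T^{2}$ bound.
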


Without censorship, the cumulative potential translates the average rate of decay of uncertainty on the reward of different arms and is closely linked to the divergence between the true reward distribution and the empirical distribution of observed rewards \cite{pmlr-v119-shekhar20a}. Introducing censorship transforms the classical deterministic decay rate into a stochastic one. For a typical reward distribution, the rate of decay is proportional to a term in $n^{-\alpha}$ or can be upper bounded by such a term (see for e.g. \cite{pmlr-v119-shekhar20a}), where $n$ is the number of \textit{observed} rewards. Therefore, a higher $\alpha$ corresponds to faster learning.

In contrast to the classical non-regularized analysis or to the LCB case of Sec. \ref{CB}, we observe two different orders of $\alpha$ ($\nicefrac{1}{2}$ and $1$) coming from the use of the $L_{\infty}$-norm instead of the $L_{2}$-norm. Taken independently, they lead to respective contributions of $\mathcal{O}(d_{\mathit{eff}}\log(T))$ and $\mathcal{O}(\sqrt{d_{\mathit{eff}}T})$. Note that by working with a general $\alpha$, our analysis naturally extends beyond sub-Gaussian noise to more general assumptions about the Laplace transform of noise (e.g., lighter or heavier tails), as discussed in Rem.\ref{Tails}.
To further study $\mathbb{V}_{\alpha}$, we introduce the following property:
\begin{restatable}{proposition}{PotentialControlFinite}
\label{Potential Control Finite} 
For all $\alpha >0$, $\delta \in ]0,1]$ and given $\psi_{\alpha}$ a primitive of $x\mapsto x^{-\alpha}$, we have:
\begin{align*}
    \max_{\pi \in \Pi} \mathbb{E}[\mathbb{V}_{\alpha}(T,\pi)] &\leq 
    \frac{d_{\mathit{eff}}}{(1-\delta)^{\alpha}}\left[\psi_{\alpha}(\frac{T}{d_{\mathit{eff}}}+\frac{\lambda}{1-\delta}) - \psi_{\alpha}(\frac{\lambda}{1-\delta})\right] + \frac{24d_{\mathit{eff}}\log(T)+d}{\lambda^{\alpha}} + \frac{4d_{\mathit{eff}}}{\lambda^{\alpha}\delta^{2}T^{12\delta^{2}}}.
\end{align*}
\end{restatable}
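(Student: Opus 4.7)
The plan is to decompose each summand $(N_{a_t}(t-1)+\lambda)^{-\alpha}$ into a deterministic surrogate, where $N_a$ is replaced by a slightly shrunken version of its conditional mean $p_a\tau_a$, plus a correction term that absorbs the Bernoulli fluctuations. Concretely, I write
\begin{equation*}
(N_{a_t}(t-1)+\lambda)^{-\alpha}\le \bigl((1-\delta)p_{a_t}\tau_{a_t}(t-1)+\lambda\bigr)^{-\alpha}+\lambda^{-\alpha}\,\mathbf{1}\bigl\{N_{a_t}(t-1)<(1-\delta)p_{a_t}\tau_{a_t}(t-1)\bigr\}
\end{equation*}
and take expectations. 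Multiplicative Chernoff, applied conditionally on the pull count $\tau_{a_t}(t-1)$, controls the indicator by $e^{-\delta^2 p_{a_t}\tau_{a_t}(t-1)/2}$. This reduces the proof to bounding two separate quantities: a deterministic sum in the $p_a\tau_a$ variables, and a sum of exponentially decaying terms.

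For the deterministic part, I set $\mu=\lambda/(1-\delta)$ and reindex by arm:
\begin{equation*}
\sum_{t=1}^{T}\bigl((1-\delta)p_{a_t}\tau_{a_t}(t-1)+\lambda\bigr)^{-\alpha}=(1-\delta)^{-\alpha}\sum_{a\in[d]}\sum_{m=0}^{\tau_a(T)-1}(p_a m+\mu)^{-\alpha}.
\end{equation*}
Since $x\mapsto(p_a x+\mu)^{-\alpha}$ is decreasing, a standard sum-to-integral comparison gives, for each arm, the bound $\mu^{-\alpha}+p_a^{-1}\bigl[\psi_\alpha(p_a\tau_a(T)+\mu)-\psi_\alpha(\mu)\bigr]$. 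The $\mu^{-\alpha}$ part, multiplied by $(1-\delta)^{-\alpha}$ and summed over the $d$ arms, produces precisely the $d/\lambda^\alpha$ correction in the statement. It remains to upper bound
\begin{equation*}
(1-\delta)^{-\alpha}\sum_{a\in[d]}\frac{1}{p_a}\bigl[\psi_\alpha(p_a\tau_a(T)+\mu)-\psi_\alpha(\mu)\bigr]
\end{equation*}
uniformly over nonnegative allocations with $\sum_a\tau_a(T)\le T$. Because $\psi_\alpha'(x)=x^{-\alpha}$ is decreasing, $\psi_\alpha$ is concave; with weights $w_a=1/p_a$ (so total weight $d_{\mathit{eff}}$) and the identity $\sum_a w_a\cdot p_a\tau_a(T)=\sum_a\tau_a(T)\le T$, Jensen's inequality yields the main term $\frac{d_{\mathit{eff}}}{(1-\delta)^\alpha}[\psi_\alpha(T/d_{\mathit{eff}}+\lambda/(1-\delta))-\psi_\alpha(\lambda/(1-\delta))]$, and this bound holds for every policy.

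For the Chernoff correction, the same reindexing gives $\sum_t e^{-\delta^2 p_{a_t}\tau_{a_t}(t-1)/2}=\sum_a\sum_{m=0}^{\tau_a(T)-1}e^{-\delta^2 p_a m/2}$, and I split each inner sum at the threshold $K_a=24\log(T)/p_a$. Terms with $m<K_a$ are each at most $1$ and contribute at most $K_a$ per arm, totalling $24d_{\mathit{eff}}\log T$. Terms with $m\ge K_a$ are each at most $e^{-\delta^2 p_a K_a/2}=T^{-12\delta^2}$; summing the geometric tail with the estimate $(1-e^{-\delta^2 p_a/2})^{-1}\le 4/(\delta^2 p_a)$ yields a total contribution of $4d_{\mathit{eff}}/(\delta^2 T^{12\delta^2})$. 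Multiplying both pieces by $\lambda^{-\alpha}$ recovers the last two correction terms of the claim verbatim.

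The main subtlety is justifying the conditional multiplicative Chernoff bound when $\tau_a(t-1)$ is an adaptive random count depending on past realized feedback through the UCB policy. This is handled by conditioning on the realized trajectory $(a_1,\dots,a_{t-1})$: given that history, the censorship indicators $x_a^{(1)},\dots,x_a^{(\tau_a(t-1))}$ are iid Bernoulli$(p_a)$ by the independence structure of the censorship process, so the standard multiplicative Chernoff inequality applies conditionally and passes through the outer expectation. Beyond this measurability point, the proof is a routine assembly of the three estimates above.
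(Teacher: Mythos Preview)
Your argument follows the paper's two-step template (Chernoff concentration to eliminate censorship randomness, then deterministic optimisation over pull counts), with two cosmetic differences: you organise the concentration termwise via the indicator decomposition rather than by conditioning on the paper's good event $\neg\mathcal{H}_{\text{CEN}}^{I}(\delta)$, and you invoke Jensen directly for the optimisation step where the paper cites the water-filling Lemma~\ref{Optimization Lemma Finite}. With uniform regularisation these are equivalent, and your bookkeeping reproduces every constant in the statement exactly.

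There is, however, a genuine gap in your justification of the Chernoff step. You claim that conditioning on the action trajectory $(a_1,\dots,a_{t-1})$ leaves the censorship indicators $x_a^{(1)},\dots,x_a^{(\tau_a(t-1))}$ iid Bernoulli$(p_a)$. This is false: under UCB, $a_s$ is a measurable function of the \emph{realised} feedback, hence of the past censorship variables; conditioning on the actions therefore leaks information about those variables. (Trivial illustration: if the policy repeats arm $a$ at time $2$ precisely when the first pull was censored, then $\{a_2=a\}$ forces $x_a^{(1)}=0$.) Consequently you cannot conclude
\[
\mathbb{E}\bigl[\mathbf{1}\{N_{a}(t-1)<(1-\delta)p_a\tau_a(t-1)\}\bigm|\tau_a(t-1)=m\bigr]\le e^{-\delta^2 p_a m/2}
\]
from Chernoff in the way you state.

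The repair is standard and preserves all your numerics. Couple the censorship to pre-sampled iid arrays $(Y^a_i)_{i\ge1}\sim\text{Bern}(p_a)$, so that $N_a(t)=\sum_{i=1}^{\tau_a(t)}Y^a_i$. After your reindexing,
\[
\sum_{a}\sum_{m=0}^{\tau_a(T)-1}\mathbf{1}\Bigl\{\textstyle\sum_{i=1}^{m}Y^a_i<(1-\delta)p_a m\Bigr\}
\;\le\;
\sum_{a}\sum_{m=0}^{T-1}\mathbf{1}\Bigl\{\textstyle\sum_{i=1}^{m}Y^a_i<(1-\delta)p_a m\Bigr\},
\]
and now each summand has a \emph{deterministic} count $m$, so the multiplicative Chernoff bound $\mathbb{P}(\sum_{i=1}^{m}Y^a_i<(1-\delta)p_a m)\le e^{-\delta^2 p_a m/2}$ applies directly. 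Splitting at $K_a=24\log(T)/p_a$ then yields exactly your two correction terms. This is precisely the mechanism behind the paper's Lemma~\ref{Derando Censo Linear}, phrased there as a union bound over the values of $\tau_a(t)$.
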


The proof of this proposition involves two steps: firstly, we remove the stochastic dependence induced by the censorship through concentration properties (See App. \ref{Proof MAB}), and we then solve the resulting policy maximization problem (Lemma \ref{Optimization Lemma Finite}). In the first step, we consider for a given $\delta \in ]0,1]$ the event:
\begin{align*}
    \mathcal{H}_{CEN}(\delta) &= \left\{\exists a \in [d], t\in[T],N_{a}(t) < (1-\delta)p_{a}\tau_{a}(t) \quad \text{and} \quad \tau_{a}(t) \geq T_{0}(a) \right\},
\end{align*}
where $T_{0}(a)\triangleq 24\log(T)/p_{a}+1$ and claim that $\mathbb{P}(\mathcal{H}_{CEN}(\delta)) \leq \frac{4d_{\mathit{eff}}}{\delta^{2}}T^{-12\delta^{2}}$, improving a result of \cite{stoch_unrest_delay}. Here $\mathcal{H}_{CEN}$ denotes the event where there is a significant gap between the realized and expected number of observed rewards. We consider its complement in our analysis of the principal order of regret. This allows us to lower bound for each action, the realized number of reward observations by a multiple of the number of times that action was selected, thus eliminating the randomness induced by censoring.

Our second step makes use of the following lemma (also known as a \textit{water-filling process} in information theory \cite{ITbook}):
\begin{restatable}{lemma}{OptimizationLemmaFinite} \label{Optimization Lemma Finite}
For $\psi_{\alpha}$ a primitive of $x\mapsto x^{-\alpha}$ where $\alpha \in ]0,1]$, regularization $(\lambda_{a})_{a\in[d]}\in (\mathbb{R}_{>0})^{d}$ and censorship vector $(p_{a})_{a\in [d]}$, the solution of the optimization problem:
\begin{align*}
    \max_{\tau_{1}\dots,\tau_{d}\geq 0} \quad & \sum_{a\in[d]} \frac{1}{p_{a}}\Big(\psi_{\alpha}(p_{a}\tau_{a}+\lambda_{a})-\psi_{\alpha}(\lambda_{a})\Big) \quad \textrm{s.t.} \quad  \sum_{a\in[d]}\tau_{a}=T
\end{align*}
is given by $\tau^{\star}_{a}=\frac{1}{p_{a}}[C-\lambda_{a}]^{+}$, where $C$ ensures the total budget constraint $ \sum_{a\in[d]}\tau^{\star}_{a}=T$. In particular, with $\lambda_{\text{eff}}\triangleq \frac{1}{d_{\mathit{eff}}}\sum_{a\in[d]}\frac{\lambda_{a}}{p_{a}}$ and $\lambda_{a}^{0}\triangleq d_{\mathit{eff}}(\lambda_{a}-\lambda_{\text{eff}})$, the optimal solution is given by $\tau^{\star}_{a}\triangleq \frac{1}{p_{a}d_{\mathit{eff}}}(T-\lambda_{a}^{0})$ for $T\geq \displaystyle\max_{a} \lambda_{a}^{0}$ and the optimal value is $d_{\mathit{eff}}\psi_{\alpha}(\frac{T}{d_{\mathit{eff}}}+\lambda_{\text{eff}}) - \sum_{a\in [d]}\frac{1}{p_{a}}\psi_{\alpha}(\lambda_{a})$.
\end{restatable}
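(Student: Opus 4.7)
The plan is to recognize this as a concave maximization problem (each $\tau_a \mapsto \psi_\alpha(p_a \tau_a + \lambda_a)/p_a$ is concave since $\psi_\alpha'(x) = x^{-\alpha}$ is positive and strictly decreasing for $\alpha \in (0,1]$) subject to a linear equality constraint and nonnegativity constraints, and then to solve it by standard KKT analysis. The resulting structure is exactly a water-filling type solution, where either an arm is "filled" up to a common level $C$, or it is not filled at all because its regularizer $\lambda_a$ already exceeds $C$.

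First I would form the Lagrangian
\begin{equation*}
\mathcal{L}(\tau, \mu, \nu) = \sum_{a \in [d]} \frac{1}{p_a}\Big(\psi_\alpha(p_a \tau_a + \lambda_a) - \psi_\alpha(\lambda_a)\Big) - \mu\Big(\sum_{a \in [d]} \tau_a - T\Big) + \sum_{a \in [d]} \nu_a \tau_a,
\end{equation*}
and write the stationarity condition $(p_a \tau_a + \lambda_a)^{-\alpha} = \mu - \nu_a$ together with complementary slackness $\nu_a \tau_a = 0$ and $\nu_a \geq 0$. Setting $C \triangleq \mu^{-1/\alpha}$, this yields two cases: either $\tau_a^\star > 0$ and $\nu_a = 0$, forcing $p_a \tau_a^\star + \lambda_a = C$, or $\tau_a^\star = 0$ and the dual feasibility condition $\nu_a \geq 0$ becomes $\lambda_a \geq C$. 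Combining both cases yields the advertised form $\tau_a^\star = \frac{1}{p_a}[C - \lambda_a]^+$, with $C$ determined by the budget equation $\sum_a \tau_a^\star = T$. Concavity of the objective and linearity of the constraints ensure the KKT point is a global maximizer.

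Next I would analyze the regime $T \geq \max_a \lambda_a^0$, in which I claim every arm is active, so that $[C - \lambda_a]^+ = C - \lambda_a$ for all $a$. Plugging this into the budget constraint gives $T = C \sum_a 1/p_a - \sum_a \lambda_a/p_a = C \cdot d_{\mathit{eff}} - d_{\mathit{eff}} \lambda_{\mathit{eff}}$, hence $C = T/d_{\mathit{eff}} + \lambda_{\mathit{eff}}$. Substituting back yields
\begin{equation*}
\tau_a^\star = \frac{1}{p_a}\Big(\frac{T}{d_{\mathit{eff}}} + \lambda_{\mathit{eff}} - \lambda_a\Big) = \frac{1}{p_a d_{\mathit{eff}}}(T - \lambda_a^0),
\end{equation*}
which is nonnegative precisely when $T \geq \lambda_a^0$ for every $a$, i.e.\ $T \geq \max_a \lambda_a^0$, so the assumption is consistent with the all-active regime.

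Finally I would compute the optimal value: since $p_a \tau_a^\star + \lambda_a = C = T/d_{\mathit{eff}} + \lambda_{\mathit{eff}}$ is independent of $a$ for every active arm,
\begin{equation*}
\sum_{a \in [d]} \frac{1}{p_a}\Big(\psi_\alpha(p_a \tau_a^\star + \lambda_a) - \psi_\alpha(\lambda_a)\Big) = d_{\mathit{eff}}\, \psi_\alpha\!\Big(\frac{T}{d_{\mathit{eff}}} + \lambda_{\mathit{eff}}\Big) - \sum_{a \in [d]} \frac{1}{p_a} \psi_\alpha(\lambda_a),
\end{equation*}
matching the claim. There is no real obstacle here; the only mildly delicate point is the case split between active and inactive arms and checking that in the $T \geq \max_a \lambda_a^0$ regime the all-active solution is self-consistent, which follows from the identification $\lambda_a^0 = d_{\mathit{eff}}(\lambda_a - \lambda_{\mathit{eff}})$. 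The broader usefulness of the lemma in Proposition~\ref{Potential Control Finite} will come from applying it with $\lambda_a \equiv \lambda/(1-\delta)$, so that $\lambda_{\mathit{eff}} = \lambda/(1-\delta)$ and $\lambda_a^0 = 0$, which trivially satisfies the range condition.
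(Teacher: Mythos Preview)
Your proposal is correct and follows essentially the same Lagrangian/KKT water-filling approach as the paper. If anything, your version is more complete: you explicitly introduce the nonnegativity multipliers $\nu_a$, verify concavity, check self-consistency of the all-active regime, and spell out the optimal-value computation, whereas the paper's proof leaves several of these steps implicit.
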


For unregularized algorithms, this framework can be easily applied to provide instances-dependent guarantees by adding constraints of type $\tau_{a} \leq f(\Delta_{a})$ within Lemma \ref{Optimization Lemma Finite}. Optimal guarantees under regularization such as the ones given in Prop. \ref{Instance Dep Regret Finite} require however to consider both orders of $\mathbb{V}_{\alpha}$ ($\nicefrac{1}{2}$ and $1$) simultaneously and not independently, leading to slight variations as shown in the proof of Prop. \ref{Instance Dep Regret Finite}. Next, we further discuss the properties of $\mathbb{V}_{\alpha}$ given its importance in our analysis.

\subsection{Evaluating Adaptivity Gain}


It is well known that adaptivity is a key feature of sequential decision problems: optimal policies use feedback from previous decisions to decide the next action to take based on the data, and in comparison non-adaptive policies can be quite suboptimal. Somewhat interestingly, the main result of this section is that adaptivity in the context of censoring does not provide a significant advantage to the decision maker. More precisely, being able to observe which decisions have been censored and adapting to this information does not bring more than a second order gain. In proving this result, we quantify and gain insight into the expected performance of policies that are adaptive to the realization of the censorship process, in comparison to a class of non-adaptive (i.e., offline) policies. 

In fact, through the introduction of $\mathcal{H}_{CEN}(\delta)$ and for any $\alpha \in [0,1]$, $\delta \in ]0,1]$, we showed in Prop. \ref{Potential Control Finite} the upper bound $\frac{d_{\mathit{eff}}}{(1-\delta)^{\alpha}}\psi_{\alpha}(\frac{T}{d_{\mathit{eff}}} + \frac{\lambda}{1-\delta})$ for the learning complexity $\max \mathbb{E}[\mathbb{V}_{\alpha}(T,\pi)]$ where the maximum is taken over the class of adaptive policies $\Pi_{\mathit{adapt}}$, i.e., measurable with respect to the censorship. Note that the exact value of such maximum is notoriously difficult to study due to the adaptive nature of censorship induced by the decision-making process. Next, we introduce $\Pi_{\mathit{off}}$, the class of policies that are not adaptive with respect to the censorship and we prove that :
\begin{restatable}{lemma}{asymptoff}\label{asympt_off} For $\alpha \in ]0,1]$ and $\lambda>0$, we have $\displaystyle \max_{\pi \in \Pi_{\text{off}}} \mathbb{E}[\mathbb{V}_{\alpha}(T,\pi)] \sim d_{\mathit{eff}}\psi_{\alpha}(\frac{T}{d_{\mathit{eff}}}+\lambda)$.
\end{restatable}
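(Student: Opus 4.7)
The strategy is to establish matching asymptotic upper and lower bounds for $\max_{\pi \in \Pi_{\text{off}}} \mathbb{E}[\mathbb{V}_\alpha(T,\pi)]$. The key structural observation is that, for an offline policy, the pull schedule is independent of the censorship realizations $(x_{a_t})_t$. Since randomization cannot increase a maximum, I restrict without loss of generality to deterministic schedules and denote by $\tau_a$ the total pulls of arm $a$, with $\sum_a \tau_a = T$. Conditionally on such a schedule, the realized count $N_a(\cdot)$ just before the $k$-th pull of arm $a$ is distributed as $\mathrm{Bin}(k-1, p_a)$, independently across arms, so that
\begin{align*}
    \mathbb{E}[\mathbb{V}_\alpha(T,\pi)] = \sum_{a\in[d]}\sum_{k=1}^{\tau_a}\mathbb{E}\bigl[(\mathrm{Bin}(k-1,p_a)+\lambda)^{-\alpha}\bigr].
\end{align*}

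For the \emph{lower bound}, I would apply Jensen's inequality: since $x\mapsto x^{-\alpha}$ is convex on $\mathbb{R}_{>0}$, each summand is at least $(p_a(k-1)+\lambda)^{-\alpha}$, and a standard Riemann-sum comparison on the decreasing integrand $x\mapsto(p_a x+\lambda)^{-\alpha}$ then yields $\mathbb{E}[\mathbb{V}_\alpha(T,\pi)] \geq \sum_a \tfrac{1}{p_a}[\psi_\alpha(p_a\tau_a+\lambda)-\psi_\alpha(\lambda)]$. Invoking Lemma~\ref{Optimization Lemma Finite} with uniform regularization $\lambda_a\equiv\lambda$ and maximizing the RHS over feasible $(\tau_a)$ gives $\max_{\pi\in\Pi_{\text{off}}} \mathbb{E}[\mathbb{V}_\alpha(T,\pi)] \geq d_{\mathit{eff}}[\psi_\alpha(T/d_{\mathit{eff}}+\lambda)-\psi_\alpha(\lambda)]$, which is asymptotically equivalent to $d_{\mathit{eff}}\psi_\alpha(T/d_{\mathit{eff}}+\lambda)$ because $\psi_\alpha(\lambda)=O(1)$ while $\psi_\alpha(T/d_{\mathit{eff}}+\lambda)\to\infty$.

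For the \emph{upper bound}, a direct invocation of Prop.~\ref{Potential Control Finite} is not sharp enough when $\alpha=1$, since its additive $O(d_{\mathit{eff}}\log T)$ correction has the same order as the leading term. I would therefore redo the concentration step from scratch, exploiting the offline structure. Fix $\epsilon>0$ and split each expectation according to whether $\mathrm{Bin}(k-1,p_a)\geq (1-\epsilon)(k-1)p_a$: the typical part is bounded by $((1-\epsilon)(k-1)p_a+\lambda)^{-\alpha}$, while the atypical part is controlled by $\lambda^{-\alpha}$ times a multiplicative Chernoff probability of at most $e^{-(k-1)p_a\epsilon^2/2}$. Summing over $k$, the typical contribution integrates to at most $\tfrac{1}{(1-\epsilon)p_a}\psi_\alpha((1-\epsilon)p_a\tau_a+\lambda)$ up to an $O(\lambda^{-\alpha})$ boundary correction, and the atypical contribution telescopes into a geometric sum bounded by $O(\lambda^{-\alpha}/(p_a\epsilon^2))$; crucially, neither correction depends on $T$. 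A second application of Lemma~\ref{Optimization Lemma Finite} with rescaled parameters $q_a=(1-\epsilon)p_a$ (so that $\sum_a 1/q_a = d_{\mathit{eff}}/(1-\epsilon)$) then yields
\begin{align*}
    \max_{\pi\in\Pi_{\text{off}}}\mathbb{E}[\mathbb{V}_\alpha(T,\pi)] \leq \frac{d_{\mathit{eff}}}{1-\epsilon}\psi_\alpha\!\Bigl(\frac{(1-\epsilon)T}{d_{\mathit{eff}}}+\lambda\Bigr) + C(\epsilon,\lambda,d,p,\alpha).
\end{align*}

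Dividing both bounds by $d_{\mathit{eff}}\psi_\alpha(T/d_{\mathit{eff}}+\lambda)$ and letting $T\to\infty$, the constant corrections vanish and the ratio of the upper bound tends to $(1-\epsilon)^{-\alpha}$ when $\alpha<1$ (using $\psi_\alpha(x)=x^{1-\alpha}/(1-\alpha)$) or to $1/(1-\epsilon)$ when $\alpha=1$ (using $\psi_1=\log$); letting $\epsilon\to 0$ afterwards sends both limits to $1$ and matches the lower ratio, proving $\sim$. The main obstacle I anticipate is precisely the $\alpha=1$ case: the slow logarithmic growth of $\psi_1$ forces one to keep \emph{all} $O(\log T)$ corrections under control, which is why the ad hoc concentration argument above is needed in place of a direct appeal to Prop.~\ref{Potential Control Finite}. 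The upshot is that the leading asymptotic coefficient is identical to that achievable by adaptive policies, making quantitative the claim that adaptivity provides only a second-order gain under censorship.
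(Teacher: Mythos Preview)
Your argument is correct. The lower bound via Jensen and Lemma~\ref{Optimization Lemma Finite} matches the paper's proof essentially verbatim (the paper simply plugs in the allocation $\tau_a=T/(p_a d_{\mathit{eff}})$ rather than invoking the optimization lemma, but this is the maximizer when $\lambda_a\equiv\lambda$, so the outcome is identical).

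The upper bound, however, takes a genuinely different route. The paper establishes a pointwise multiplicative concentration bound (their Claim~\ref{concentration Bin}): $\mathbb{E}[(X^a_n+\lambda)^{-\alpha}]\le(1+C^a_2/(np_a)^{1/4})(p_a n+\lambda)^{-\alpha}$, proved via a Bernstein inequality with an $n$-dependent threshold $\theta=A\log(np_a+\lambda)/(np_a)$; the correction factor then contributes a term of order $\psi_{\alpha+1/4}$, which is $o(\psi_\alpha)$. Your approach instead fixes $\epsilon>0$, applies a plain multiplicative Chernoff bound, and takes a two-parameter limit ($T\to\infty$ then $\epsilon\to0$). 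This is more elementary and entirely sufficient for the asymptotic equivalence claimed. What the paper's sharper inequality buys is a single-limit argument with an explicit rate in $n$, which could feed into finer expansions (as in Prop.~\ref{Monitoring AG}); your version trades that quantitative control for a cleaner proof of the $\sim$ statement alone. Your remark that Prop.~\ref{Potential Control Finite} is not sharp enough at $\alpha=1$ because of the additive $O(d_{\mathit{eff}}\log T)$ term is exactly the right diagnosis, and the paper likewise bypasses it here.
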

In other words, restricting attention to offline policies is sufficient to obtain the correct scaling. The next step to complete our claim is the asymptotic expansion: 
\begin{restatable}{proposition}{MonitoringAG}\label{Monitoring AG} For $\alpha \in ]0,1]$, by denoting $\displaystyle \gamma_{\alpha}(\mathbf{p}) \triangleq \frac{\alpha}{2d_{\mathit{eff}}^{1-\alpha}}\sum_{a\in [d]}\frac{1}{p_{a}}\Big(\sum_{\Tilde{a}\neq a}\frac{1-p_{\Tilde{a}}}{p_{\Tilde{a}}}\Big)$, we have:
\begin{align*}
    \max_{\pi \in \Pi_{\text{adapt}}} \mathbb{E}[\mathbb{V}_{\alpha}(T,\pi)] - \max_{\pi \in \Pi_{\text{off}}} \mathbb{E}[\mathbb{V}_{\alpha}(T,\pi)] = \gamma_{\alpha}(\mathbf{p})\frac{1}{T^{\alpha}} + o(\frac{1}{T^{\alpha}}).  \tag{$\star $}\label{Constant}
\end{align*}
Moreover, if for a given $\beta \in ]0,1[$, we introduce $\Pi_{\text{single}}(\beta T)$ the policy class whose censorship information set has a single updating at time $\lfloor\beta T\rfloor$, we have:
\begin{align*}
    \max_{\pi \in \Pi_{\text{single}}(\beta T)} \mathbb{E}[\mathbb{V}_{\alpha}(T,\pi)] - \max_{\pi \in \Pi_{\text{off}}} \mathbb{E}[\mathbb{V}_{\alpha}(T,\pi)] =  \gamma_{\alpha}(\mathbf{p})\frac{\beta}{T^{\alpha}} + o(\frac{1}{T^{\alpha}}). \tag{$\star \star$}\label{One-Shot}
\end{align*}
\end{restatable}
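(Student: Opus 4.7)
The plan is a perturbative analysis around the offline optimum of Lemma \ref{Optimization Lemma Finite}. I would establish the (simpler) single-update statement $(\star\star)$ first by direct computation, and then obtain $(\star)$ as a $\beta\to 1$ limit of $(\star\star)$ together with a matching Lagrangian-based upper bound.

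For $(\star\star)$, fix the first-phase allocation at its offline-optimal value $\tau^{(1)}_a = \beta T/(p_a d_{\mathit{eff}})$ and let $\xi_a := N_a(\lfloor\beta T\rfloor) - \beta T/d_{\mathit{eff}}$, so the $\xi_a$'s are mean-zero, independent across arms, with $\mathrm{Var}(\xi_a) = (1-p_a)\beta T/d_{\mathit{eff}}$. Conditionally on the observations at time $\lfloor\beta T\rfloor$, maximizing the expected future contribution to $\mathbb{V}_\alpha$ over the second-phase allocation $(\tau^{(2)}_a)_a$ is, to leading order, the water-filling problem of Lemma \ref{Optimization Lemma Finite} with shifted regularizers $\lambda_a\mapsto N_a(\beta T)+\lambda$ and budget $(1-\beta)T$. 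Applying that lemma yields a tidy identity: all post-play effective loads $N_a(\beta T)+p_a\tau^{(2)\star}_a$ collapse to the common value $C_\xi := T/d_{\mathit{eff}} + d_{\mathit{eff}}^{-1}\sum_b\xi_b/p_b$, independent of $a$.

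The resulting expected gain over pure offline then reads
\[
\mathbb{E}[\mathrm{Gain}] = d_{\mathit{eff}}\,\mathbb{E}[\psi_\alpha(C_\xi+\lambda)] - \sum_a\frac{1}{p_a}\,\mathbb{E}[\psi_\alpha(T/d_{\mathit{eff}}+\xi_a+\lambda)].
\]
A second-order Taylor expansion around $T/d_{\mathit{eff}}+\lambda$ using $\psi_\alpha''(x) = -\alpha x^{-\alpha-1}$ kills the zeroth- and first-order terms (since $\mathbb{E}[\xi_a]=\mathbb{E}[C_\xi-T/d_{\mathit{eff}}]=0$), and substituting $\mathrm{Var}(\xi_a)$ and $\mathrm{Var}(C_\xi-T/d_{\mathit{eff}}) = \beta T\,d_{\mathit{eff}}^{-3}\sum_b(1-p_b)/p_b^2$ reduces the remainder to $\tfrac{\alpha\beta\,d_{\mathit{eff}}^{\alpha-1}}{2T^\alpha}\big[d_{\mathit{eff}}\sum_a\tfrac{1-p_a}{p_a}-\sum_a\tfrac{1-p_a}{p_a^2}\big] + o(T^{-\alpha})$. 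The bracket equals $\sum_a\tfrac{1}{p_a}\sum_{\tilde a\neq a}\tfrac{1-p_{\tilde a}}{p_{\tilde a}}$ by rearrangement, yielding $\beta\,\gamma_\alpha(\mathbf{p})/T^\alpha$; a similar expansion shows that perturbing the first-phase allocation away from $\tau^{(1)}_a$ only contributes at $o(T^{-\alpha})$, so this strategy is single-update-optimal.

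For $(\star)$, the inclusion $\Pi_{\mathrm{single}}(\beta T)\subset\Pi_{\mathrm{adapt}}$ applied with $\beta = 1 - T^{-1/4}$ delivers the lower bound $\gamma_\alpha(\mathbf{p})/T^\alpha$. For the matching upper bound, I would relax the budget constraint $\sum_a n_a = T$ via a Lagrangian multiplier $\mu^\star \sim (T/d_{\mathit{eff}})^{-\alpha}$: the adaptive problem then decouples across arms into $d$ independent optimal-stopping problems with rule $n_a^{ad}=\min\{k:M_{a,k}\ge B_a\}$, $B_a = \lceil\mu^{\star -1/\alpha}-\lambda\rceil$. Using the sojourn/level-passage identity $\sum_{k=0}^{n_a^{ad}-1}(M_{a,k}+\lambda)^{-\alpha} = \sum_{m=0}^{B_a-1}G_{a,m}(m+\lambda)^{-\alpha}$ with $G_{a,m}\stackrel{\mathrm{iid}}{\sim}\mathrm{Geom}(p_a)$, the per-arm adaptive value collapses to $\tfrac{1}{p_a}\sum_{m=0}^{B_a-1}(m+\lambda)^{-\alpha}$, which is directly comparable to the Taylor-expanded offline value using $\mathrm{Var}(M_{a,k})=p_a(1-p_a)k$. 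The main obstacle is controlling the Lagrangian-to-hard-budget gap at order $o(T^{-\alpha})$ rather than the naive $O(1)$: I would sidestep the need for sharp concentration of $\sum_a n_a^{ad}$ by exhibiting the feasible near-optimal adaptive policy from the lower-bound construction, whose value already matches the Lagrangian upper bound to within $o(T^{-\alpha})$, so the sandwich closes.
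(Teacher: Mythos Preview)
Your treatment of $(\star\star)$ coincides with the paper's: both water-fill the second phase conditional on $(N_a(\beta T))_a$, recognise the resulting gain as a Jensen/second-order gap for $\bar\psi_\alpha(x)=\psi_\alpha((T-T_0)/d_{\mathit{eff}}+x)$, and read off $\beta\gamma_\alpha(\mathbf p)/T^\alpha$ from $\psi_\alpha''$ and the binomial variances.

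For $(\star)$ your lower bound via $\beta\to1$ is fine, but the Lagrangian upper bound is \emph{not} tight at order $T^{-\alpha}$, so the sandwich you describe does not close. With $B=T/d_{\mathit{eff}}$ your level-passage identity gives $L_{\mu^\star}=d_{\mathit{eff}}\sum_{m=0}^{B-1}(m+\lambda)^{-\alpha}$; comparing this to the offline value at $\tau_a=B/p_a$ (which is offline-optimal up to $o(T^{-\alpha})$) through the signed sojourn measure $R_{a,m}=\rho_{a,m}-p_a^{-1}\mathbf 1\{m<B\}$ yields, after one Taylor step,
\[
L_{\mu^\star}-\max_{\Pi_{\mathrm{off}}}\mathbb E[\mathbb V_\alpha]=\frac{\alpha d_{\mathit{eff}}^{\alpha}}{2T^\alpha}\sum_a\frac{1-p_a}{p_a}+o(T^{-\alpha}),
\]
which exceeds $\gamma_\alpha(\mathbf p)/T^\alpha$ by exactly $\tfrac{\alpha d_{\mathit{eff}}^{\alpha-1}}{2T^\alpha}\sum_a\tfrac{1-p_a}{p_a^2}$. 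This surplus is the value to the Lagrangian player of letting the total budget $\sum_a n_a^{ad}$ fluctuate (negative-binomial variance $B\sum_a(1-p_a)/p_a^2$): a genuine duality gap of order $\Theta(T^{-\alpha})$, not $o(T^{-\alpha})$. Hence the single-update policy cannot match $L_{\mu^\star}$ within $o(T^{-\alpha})$, and the workaround in your last sentence is circular. A one-arm check makes this transparent: for $d=1$ one has $\gamma_\alpha=0$ (nothing to adapt to), yet the Lagrangian gap is $\tfrac{\alpha(1-p)}{2p^{1+\alpha}T^\alpha}>0$.

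The paper avoids the duality gap by first identifying the optimal adaptive policy as greedy, $a_t=\operatorname{argmax}_a(N_a(t-1)+\lambda)^{-\alpha}$, which enforces $|N_a(t)-N_b(t)|\le1$ under the hard budget. It then compares adaptive and offline on the \emph{same} realization via the geometric waiting-time deviations $S^a(N_a(T))$, Taylor-expands $\psi_\alpha$ around the common terminal level $\bar N(T)$, and recovers the variance formula through Wald's identity. Your level-passage identity is in fact the right tool once applied to the greedy policy rather than to the Lagrangian stopping rule: the terminal level $\bar N$ is then random and coupled to the $G_{a,m}$'s through $\sum_a\sum_{m<\bar N}G_{a,m}=T$, and it is precisely this coupling that cancels the spurious $\sum_a(1-p_a)/p_a^2$ term.
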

Thus, $\gamma_{\alpha}(\mathbf{p})$ can be viewed as an adaptivity gain resulting from the continuous correction of the cumulative variance induced by the action selection process. Essentially, it is closely related to the Jensen Gap of an appropriate random variable and the proof involves the study of the Taylor expansion of the potential function $\psi_{\alpha}$. (\ref{One-Shot}) tells us that a single observation of the censorship realization is sufficient to obtain a  near-optimal \textit{gain in adaptivity}.  We present a  proof sketch of Prop. \ref{Monitoring AG} in App. \ref{Proof MAB}. This shows that censorship in MAB can be treated in an \textit{offline} manner at first order.

\section{Contextual Bandit} \label{CB}


In this section, we study Linear Contextual Bandits (LCBs) under censorship. The regret analysis for the generic censorship model in Sec.~\ref{set up} is significantly more complex for LCB than for MAB. This is due to the fact that different actions contribute differently to the information acquisition, leading to a non-linear phenomenon governing the trade-off between reward and information gain~(see Sec.\ref{temp dyn}). 



\subsection{Multi-threshold Models and Regret Bounds}

To address the abovementioned challenge, we now introduce a simple  \emph{multi-threshold} censorship model, which enables a precise regret analysis.  In particular, we consider that feedback is censored according to the following action-dependant probability:
\begin{align*}
    p:a \in \mathbb{B}_{d} \mapsto  \sum_{j=0}^{k} \mathbf{1}\{\sin(\phi_{j}) \leq \langle a,u\rangle <\sin(\phi_{j+1}) \}p_{j}, \tag{$\mathcal{MT}$}\label{MT_model}
\end{align*}
where $(\phi_{j})_{j\leq k+1}$ is an increasing sequence verifying $\phi_{0}=-\frac{\pi}{2}$, $\phi_{k+1}=\frac{\pi}{2}$ and $u\in \mathbb{R}^{d}$ is a unit vector. We assume that $(p_{j})_{j\leq k}$ is decreasing,  i.e. the censorship is increasing with $j$ in direction $u$. Henceforth, we refer to the interval $[\sin(\phi_{j}),\sin(\phi_{j+1})[$ as \emph{region $j$}. Note that simple models such as uniform censorship are subsumed by this family (for $k$ equals $0$). 

\begin{wrapfigure}{r}{0.5\textwidth}
  \begin{center}
    \includegraphics[width=0.4\textwidth]{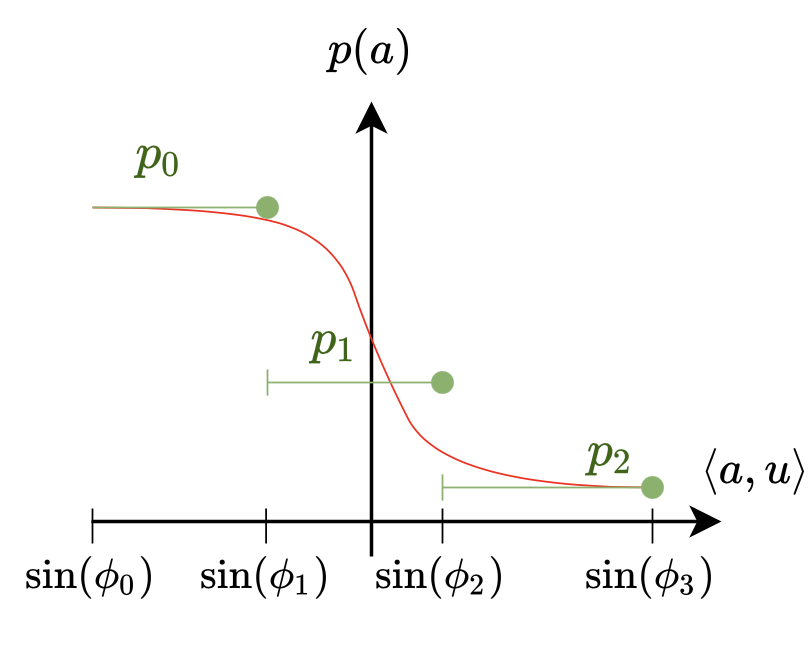}
  \end{center}
  \caption{Example of a multi-threshold model for $k=2$ (Green). Logistic censorship model (Red)}
	\label{fig:MT}
\end{wrapfigure}
The two main features of the multi-threshold model are: the \emph{radial} aspect (the censorship probability depends on the action through a scalar product with a given vector) and the \emph{monotonicity} (the censorship is monotone in the value of this scalar product). Note that \ref{MT_model} can be seen as a piecewise constant approximation of any Generalized Linear Model (GLM) \cite{mccullagh1989generalized}. Thus, the simplicity of this censorship model is not an inherently limiting factor on the generality of our subsequent results. 

Moreover, \ref{MT_model} admits a natural behavioral interpretation: Such a distribution can be seen as induced by a population model of heterogeneous random-utility maximizing agents. A single threshold model (i.e. $k$ equals $1$) corresponds to a given agent type, and the multi-threshold model naturally results from aggregate responses of heterogeneous population~\cite{DynamicDiscreteChoice}. 

We now state the main result of this section:

\begin{restatable}{theorem}{THMLineararms}\label{THM Linear arms}
For a given multi-threshold censorship model \ref{MT_model}, there exits $d_{\mathit{eff}}$ such that the UCB algorithm with regularization $\lambda$ has an instance-independent expected regret of:
\begin{align*}
    \mathbb{E}[R(T,\pi_{\text{UCB}})] 
    &\leq\Tilde{\mathcal{O}}(\sigma\sqrt{d\cdot d_{\mathit{eff}}}\sqrt{T}).
\end{align*}
\end{restatable}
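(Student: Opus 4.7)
The plan is to follow the three-step blueprint of Sec.~\ref{MAB} adapted to the linear setting. Write $V_t \triangleq \lambda \mathbb{I}_d + \sum_{l<t} x_{a_l}\, a_l a_l^\top$ for the censored information matrix and $\hat\theta_t$ for the corresponding ridge estimator. By standard self-normalized concentration, which only requires $\epsilon_t$ to be conditionally sub-Gaussian given the full history of actions \emph{and} censorship realizations, one obtains $\|\hat\theta_t - \theta^\star\|_{V_t} \leq \beta_t = \tilde{\mathcal{O}}(\sigma\sqrt{d})$ with high probability. Optimism then yields $\Delta_t(a_t)\leq 2\beta_t \|a_t\|_{V_t^{-1}}$, so that, after Cauchy--Schwarz,
\[
\mathbb{E}[R(T,\pi_{\text{UCB}})] \leq 2\beta_T\, \mathbb{E}\!\left[\sqrt{T \textstyle\sum_{t=1}^T \|a_t\|_{V_t^{-1}}^2}\,\right].
\]
This is the LCB analog of Lemma~\ref{Potential Reduction Finite}: the task is now to bound the continuous potential $\sum_t \|a_t\|_{V_t^{-1}}^2$, playing the role of $\mathbb{V}_{1/2}$, keeping in mind that $V_t$ only updates on realized rounds while the sum runs over every round.

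Next I would strip out the randomness of censorship. Introduce a high-probability event $\mathcal{H}_{CEN}^{LCB}(\delta)$ asserting that, uniformly over a fine net of unit directions and over $t\in[T]$,
\[
v^\top V_t v \geq (1-\delta)\, v^\top \tilde V_t\, v, \qquad \tilde V_t \triangleq \lambda\mathbb{I}_d + \sum_{l<t} p(a_l)\, a_l a_l^\top.
\]
Because $p$ is piecewise constant on the multi-threshold partition, conditionally on the pulled actions the rank-one contributions within each region reduce to i.i.d.\ Bernoulli$(p_j)$ weightings and a Bernstein/multiplicative Chernoff bound combined with a union bound over the net yields $\mathbb{P}(\mathcal{H}_{CEN}^{LCB}(\delta))\leq T^{-c\delta^2}$ after a per-region burn-in, in direct parallel with the proof of Prop.~\ref{Potential Control Finite}. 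On the complement, $V_t^{-1}\preceq (1-\delta)^{-1}\tilde V_t^{-1}$, so the expected potential is controlled by $\sum_t \|a_t\|_{\tilde V_t^{-1}}^2$, which depends on the pulls only — this is the LCB avatar of the offline principle of Prop.~\ref{Monitoring AG}.

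Finally I would bound the deterministic weighted potential $\sum_t \|a_t\|_{\tilde V_t^{-1}}^2$ via the continuous Elliptical Potential Inequality of Prop.~\ref{Potential Control Linear}. The classical EPI would give $d\log T$ were all $p_j$ equal to one. To extract the correct dependence on the $p_j$, I would exploit the radial and monotone structure of \ref{MT_model}: foliate $\mathbb{B}_d$ into the slabs $\{a : \sin\phi_j\leq \langle a,u\rangle<\sin\phi_{j+1}\}$ and apply the change of variable $\tilde a_l\triangleq \sqrt{p(a_l)}\,a_l$, so that $\tilde V_t = \lambda\mathbb{I}_d + \sum_{l<t}\tilde a_l \tilde a_l^\top$. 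One then plays the classical EPI on $\tilde a_l$ and pays the $1/p(a_t)$ factor per slab when returning from $\tilde a_t$ to $a_t$. Solving the resulting constrained maximum over pull profiles $(\tau_0,\dots,\tau_k)$ subject to $\sum_j\tau_j=T$ — the LCB counterpart of the water-filling in Lemma~\ref{Optimization Lemma Finite} — yields $\sum_t\|a_t\|_{\tilde V_t^{-1}}^2 \leq \tilde{\mathcal{O}}(d_{\mathit{eff}})$ for an effective dimension $d_{\mathit{eff}}$ aggregating the $d/p_j$ contributions of each slab, and combining with the Cauchy--Schwarz step delivers $\tilde{\mathcal{O}}(\sigma\sqrt{d\cdot d_{\mathit{eff}}}\sqrt{T})$.

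The principal obstacle is the third step, namely the continuous EPI itself: unlike MAB where arms contribute independently and the potential factorises over coordinates, here the rank-one updates interact through the shared matrix $\tilde V_t$, and the optimisation over pull profiles has to be performed while respecting the geometric coupling induced by the slabs — this is precisely why a continuous relaxation of the discrete EPI (Prop.~\ref{Potential Control Linear}) is needed rather than a coordinate-wise argument. A secondary subtlety concerns the transient regime alluded to in Thm.~\ref{THM Linear Optim MTM}: until each region has been visited $\Theta(\log T/p_j)$ times the confidence ellipsoid can be badly miscalibrated along $u$, producing additive corrections that must be absorbed into the $\tilde{\mathcal{O}}$ through the per-region burn-in already built into $\mathcal{H}_{CEN}^{LCB}$.
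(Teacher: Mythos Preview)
Your three-step architecture --- reduce regret to the censored potential via optimism plus Cauchy--Schwarz, derandomize censorship to pass from $V_t$ to the expected matrix $\tilde V_t$, then bound the resulting deterministic potential --- matches the paper's proof exactly. Step 1 is correct as written. For step 2, the paper uses an anytime matrix martingale inequality (Lemma~\ref{CEN Lemma Linear}) to obtain $\mathbb{W}^C_t \succeq c_\delta^{-1}\mathbb{W}_t$ in one stroke, rather than your net-plus-scalar-Bernstein route; your approach could be made to work, but the claim that within-region contributions are ``i.i.d.\ Bernoulli given actions'' glosses over the adaptivity of the action sequence and would itself need a martingale version.

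The substantive gap is in step 3. The change of variable $\tilde a_l = \sqrt{p(a_l)}\,a_l$ is exactly right and is what underlies the paper's identity $\operatorname{Tr}\bigl(\int \mathbb{W}(t)^{-1}a(t)a(t)^\top\,\partial t\bigr) = \int \frac{1}{p(a(t))}\,\partial_t\log\det\mathbb{W}(t)\,\partial t$. But Prop.~\ref{Potential Control Linear} only reduces the problem to this continuous integral; it does \emph{not} solve the maximization over policies. Your proposed water-filling over slab budgets $(\tau_0,\dots,\tau_k)$ does not work here: unlike MAB, the integrand does not factorize across slabs, because the increment $\partial_t\log\det\mathbb{W}(t)$ depends on the full spectrum of $\mathbb{W}(t)$, which couples all past actions across all regions. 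The paper's actual solution (Thm.~\ref{THM Linear Optim MTM} and the supporting Lemmas~\ref{One-step Transient Analysis}--\ref{Bi-Region Effective Dimension}) is an explicit trajectory analysis: one shows the optimal path first traverses a decreasing sequence of regions (the transient phase), then settles into either a single region or a two-region mixture (the steady state), and $d_{\mathit{eff}}$ is read off from the steady-state integrand. This requires a Kiefer--Wolfowitz characterization of the optimal within-region direction, a first-order \emph{reachability} criterion between regions, and a \emph{dual-reachability} criterion distinguishing the two steady-state cases --- none of which is captured by a water-filling ansatz. Your last paragraph correctly diagnoses that the slabs are geometrically coupled, but the remedy you point to (Prop.~\ref{Potential Control Linear}) is only the starting point of that analysis, not its resolution.
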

Importantly, note the mapping from the original dimension $d$ to the enlarged $\sqrt{d \cdot d_{\mathit{eff}}}$, in contrast to the previous dilation $d\mapsto d_{\mathit{eff}}$ for the case of MAB problems. 
An extension to Generalized Linear Contextual Bandits is provided in App. \ref{gen linear} where we show that the dimension is governed by $\sqrt{d \cdot d_{\mathit{eff}}}/\kappa$, with $\kappa$ corresponding to a minimum of the derivative of the link function (encompassing the smoothness of the GLM at its maximum)~\citep{li2017provably,NIPS2010_c2626d85}. We conjecture that this result still holds if we relax the monotonicity property of \ref{MT_model} although it will require some modifications in the proofs of section \ref{Proof Multi}. On the other hand, we believe that the radial property is necessary, considering the related literature on GLMs (further discussed in App. \ref{gen linear}) where it appears prominently. 

\subsection{Generalized Cumulative Censored Potential}
Analogous to the MAB case, we now introduce for LCB the random matrices corresponding to the effective realization $\mathbb{W}^{C}_{t}\triangleq \lambda\mathbb{I}_{d} + \sum_{n=1}^{t} x_{a_{t}}a_{t}a_{t}^{\top}$ and the expected realization $\mathbb{W}_{t} \triangleq \lambda\mathbb{I}_{d} + \sum_{n=1}^{t} p(a_{t})a_{t}a_{t}^{\top}$. 
We also introduce the continuous counterpart of $\mathbb{W}_{t}$ defined as $\mathbb{W}(t)\triangleq \lambda\mathbb{I}_{d}+\int_{u=0}^{t}p(a(u))a(u)a(u)^{\top}\partial u$, where $(a(u))_{u\leq T}$ is an integrable deterministic path.\footnote{In this section, the generic notation $X(t)$ is used for continuous time quantities and $X_{t}$ for  discrete time.} We emphasise that the use of continuous counterpart is key in enabling our next results. As in the MAB case, we bound the regret although now using a generalization of $\mathbb{V}_{\alpha}$:
\begin{restatable}{lemma}{PotentialReductionLinear} \label{Potential Reduction Linear} For all $\delta \in ]0,1]$, there exists a constant $\Tilde{\beta_{\delta}}(T)=\Theta(\sqrt{d\log(T)})$ such that
\begin{align*}
    \mathbb{E}[R(T,\pi_{\text{UCB}})] \leq 2\Tilde{\beta_{\delta}}(T) \sqrt{T\mathbb{E}[\mathbb{V}_{1}(T,\pi_{\textit{UCB}})]} + \delta T\Delta_{max},
\end{align*}
where, for $\alpha>0$ and $\pi \in \Pi$, the linear extension of the cumulative censored potential is given by:
\begin{align*}
    \mathbb{V}_{\alpha}(T,\pi) \triangleq \sum_{t=1}^{T}\|a_{t}\|^{2}_{(\mathbb{W}^{C}_{t-1})^{-\alpha}} = \sum_{t=1}^{T}\operatorname{Tr}((\mathbb{W}^{C}_{t-1})^{-\alpha}a_{t}a_{t}^{\top}).
\end{align*}
\end{restatable}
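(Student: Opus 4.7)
The plan is to mimic the standard LinUCB regret analysis (Abbasi-Yadkori et al., Chu et al.) while carefully accounting for the fact that only realized rounds enter the ridge regression. Let $\hat{\theta}_{t} = (\mathbb{W}^{C}_{t})^{-1}\sum_{s=1}^{t} x_{a_{s}} a_{s}\bigl(r(a_{s})+\epsilon_{s}\bigr)$ be the censored ridge estimator; since Algorithm \ref{Gen_UCB} only refreshes $\Tilde r^{\lambda}_{t}$ on realized feedback, this is exactly the object underlying the UCB index.

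First, I would establish a self-normalized concentration inequality of the form
\begin{align*}
\mathbb{P}\Bigl(\exists\,t\leq T:\ \|\hat{\theta}_{t-1}-\theta^{\star}\|_{\mathbb{W}^{C}_{t-1}} > \Tilde{\beta}_{\delta}(T)\Bigr)\ \leq\ \delta,
\end{align*}
with $\Tilde{\beta}_{\delta}(T) = \Theta(\sigma\sqrt{d\log(T/\delta)}) = \Theta(\sqrt{d\log T})$. The noise process $M_{t} = \sum_{s\leq t} x_{a_{s}}\epsilon_{s}\, a_{s}$ is a martingale with respect to the filtration $\mathcal{F}_{t}=\sigma(a_{1},x_{a_{1}},\epsilon_{1},\dots,a_{t},x_{a_{t}},\epsilon_{t})$: conditional on $\mathcal{F}_{t-1}$ and $a_{t}$, both $x_{a_{t}}$ and $\epsilon_{t}$ are independent, $\epsilon_{t}$ is mean-zero sub-Gaussian, and $x_{a_{t}}a_{t}$ is $\sigma(\mathcal{F}_{t-1},a_{t},x_{a_{t}})$-measurable. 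Hence Laplace's method / the Abbasi-Yadkori self-normalized bound applies verbatim with the censored design matrix $\mathbb{W}^{C}_{t}$ playing the role of the usual Gram matrix.

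Second, on the good event $\mathcal{E}_{\delta}=\{\forall t,\ \|\hat\theta_{t-1}-\theta^\star\|_{\mathbb{W}^{C}_{t-1}}\leq \Tilde\beta_{\delta}(T)\}$, the UCB index $\Tilde{r}^{\lambda}_{t}(a) = \langle a,\hat{\theta}_{t-1}\rangle + \Tilde{\beta}_{\delta}(T)\|a\|_{(\mathbb{W}^{C}_{t-1})^{-1}}$ dominates $r(a)$ for every $a\in\mathcal{A}_{t}$, and optimism of $a_{t}$ together with Cauchy--Schwarz yields the instantaneous regret bound
\begin{align*}
\Delta_{t}(a_{t}) \ \leq\ \Tilde{r}^{\lambda}_{t}(a_{t}) - r(a_{t}) \ \leq\ 2\Tilde{\beta}_{\delta}(T)\,\|a_{t}\|_{(\mathbb{W}^{C}_{t-1})^{-1}}.
\end{align*}
Summing over $t$ and applying Cauchy--Schwarz in $t$ gives, still on $\mathcal{E}_{\delta}$,
\begin{align*}
\sum_{t=1}^{T} \Delta_{t}(a_{t}) \ \leq\ 2\Tilde{\beta}_{\delta}(T)\sqrt{T\,\sum_{t=1}^{T}\|a_{t}\|^{2}_{(\mathbb{W}^{C}_{t-1})^{-1}}} \ =\ 2\Tilde{\beta}_{\delta}(T)\sqrt{T\,\mathbb{V}_{1}(T,\pi_{\text{UCB}})}.
\end{align*}

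Finally, I would split the expectation as $\mathbb{E}[R] = \mathbb{E}[R\,\mathbf{1}_{\mathcal{E}_{\delta}}] + \mathbb{E}[R\,\mathbf{1}_{\neg\mathcal{E}_{\delta}}]$. The first term is controlled by the display above combined with Jensen's inequality $\mathbb{E}[\sqrt{X}]\leq\sqrt{\mathbb{E}[X]}$; the second term is at most $T\Delta_{\max}\cdot\mathbb{P}(\neg\mathcal{E}_{\delta}) \leq \delta T\Delta_{\max}$, yielding the claimed bound. The main obstacle is the first step: verifying that the self-normalized martingale inequality survives the censorship, i.e. that treating $x_{a_{t}}a_{t}$ as the effective feature preserves both the martingale structure of the noise and the conditional sub-Gaussianity needed for Laplace's method. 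Once that is in place, the remaining steps are essentially a transcription of the classical LinUCB argument with $\mathbb{W}^{C}_{t}$ substituted for the standard design matrix.
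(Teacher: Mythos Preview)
Your proposal is correct and follows essentially the same approach as the paper: the paper likewise applies the Abbasi-Yadkori self-normalized bound to the censored features $x_{a_t}a_t$ (their Lemma~\ref{Optimistic Lemma Linear}), uses optimism plus Cauchy--Schwarz for the per-round bound, a second Cauchy--Schwarz over $t$ to pass from $\sum_t\|a_t\|_{(\mathbb{W}^C_{t-1})^{-1}}$ to $\sqrt{T\,\mathbb{V}_1}$, Jensen for the expectation of the square root, and the $\delta T\Delta_{\max}$ term for the failure event. The only cosmetic difference is that the paper first works with a data-dependent radius $\beta_\delta(t)$ and then upper-bounds it by the deterministic $\Tilde\beta_\delta(T)$, whereas you use $\Tilde\beta_\delta(T)$ from the outset.
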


The proof idea is analogous (albeit more complex) than in the finite action case (see App. \ref{Proof LCB}).
In order to get a handle on $\mathbb{V}_{\alpha}$, we again leverage a two-step approach: first we eliminate the randomness due to censorship (here, we utilize matrix martingale inequalities) and then optimize the resulting deterministic quantity seen through a continuous lens. The first step requires the following result:
\begin{restatable}{proposition}{PotentialControlLinear} \label{Potential Control Linear}
For any $\delta \in ]0,1]$, $\lambda>0$, $\alpha>0$ and policy $\pi \in \Pi$, we have: 
\begin{align*}
    \mathbb{E}[V_{\alpha}(T,\pi)] \leq \frac{\delta}{\lambda^{\alpha}} + C(\delta)^{\alpha} \operatorname{Tr}\Big(\int_{0}^{T}\mathbb{W}(t)^{-\alpha}a(t)a(t)^{\top}\partial t\Big),
\end{align*}
where $C(\delta)\triangleq 8(\lambda+1)\max(\log(d/\delta))/\lambda,1)/\lambda$.
\end{restatable}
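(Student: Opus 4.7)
My plan is to reduce the stochastic potential $V_\alpha(T,\pi)$ to the stated deterministic continuous integral in three moves: (i) replace the censored matrix $\mathbb{W}^C_t$ with its predictable counterpart $\mathbb{W}_t$ via matrix-martingale concentration, (ii) convert the resulting discrete deterministic sum into the continuous integral using a continuous-time analog of the elliptical potential identity, and (iii) absorb the contribution of the low-probability failure event into the additive slack $\delta/\lambda^\alpha$.

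For step (i), the increments $(x_{a_s}-p(a_s))a_s a_s^\top$ form a bounded symmetric matrix martingale difference sequence with operator norm bounded by $\|a_s\|^2\le 1$ and predictable variance dominated by $\mathbb{W}_t-\lambda\mathbb{I}_d$. A matrix Freedman inequality, combined with a stopping-time trick for simultaneous control over $t\in[T]$ at failure probability $\delta/T$, yields a good event $\mathcal{G}$ of probability at least $1-\delta/T$ on which $\mathbb{W}^C_t\succeq C(\delta)^{-1}\mathbb{W}_t$ for every $t$; the $(\lambda+1)/\lambda^2$ factor in $C(\delta)$ arises from the variance-to-mean ratio of this martingale, which only remains controlled thanks to the $\lambda\mathbb{I}_d$ offset. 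Löwner--Heinz operator monotonicity of $x\mapsto x^{-\alpha}$ (direct for $\alpha\in(0,1]$, and extended via the Laplace representation $X^{-\alpha}=\Gamma(\alpha)^{-1}\int_0^\infty s^{\alpha-1}e^{-sX}\partial s$ together with operator monotonicity of $e^{-sx}$ for $\alpha>1$) then upgrades this to $(\mathbb{W}^C_{t-1})^{-\alpha}\preceq C(\delta)^\alpha\mathbb{W}_{t-1}^{-\alpha}$, so that on $\mathcal{G}$ we have $V_\alpha\le C(\delta)^\alpha\sum_{t=1}^T\|a_t\|^2_{\mathbb{W}_{t-1}^{-\alpha}}$.

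For step (ii), taking the piecewise-constant extension $a(u)=a_{\lceil u\rceil}$, the continuous matrix $\mathbb{W}(u)=\mathbb{W}_{s-1}+(u-(s-1))p(a_s)a_sa_s^\top$ interpolates linearly between $\mathbb{W}_{s-1}$ and $\mathbb{W}_s$ on $u\in(s-1,s]$. For the canonical case $\alpha=1$, Sherman--Morrison yields the closed form
\[
\int_{s-1}^{s}\|a_s\|^2_{\mathbb{W}(u)^{-1}}\,\partial u \;=\; \tfrac{1}{p(a_s)}\log\!\big(1+p(a_s)\|a_s\|^2_{\mathbb{W}_{s-1}^{-1}}\big),
\]
and combining $\log(1+x)\ge x/(1+x)$ with the regularization bound $\|a_s\|^2_{\mathbb{W}_{s-1}^{-1}}\le 1/\lambda$ gives the one-step inequality $\|a_s\|^2_{\mathbb{W}_{s-1}^{-1}}\le\tfrac{\lambda+1}{\lambda}\int_{s-1}^s\|a_s\|^2_{\mathbb{W}(u)^{-1}}\partial u$, which telescopes into the continuous integral; the prefactor $(\lambda+1)/\lambda$ is absorbed into $C(\delta)$. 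For general $\alpha$, the same strategy extends via the Laplace representation by swapping the Sherman--Morrison step with a calculus computation on $u\mapsto e^{-s\mathbb{W}(u)}$. For step (iii), on $\neg\mathcal{G}$ we use $\mathbb{W}^C_{t-1}\succeq\lambda\mathbb{I}_d$ together with $\|a_t\|\le 1$ to bound each summand trivially by $\lambda^{-\alpha}$, so the total contribution to $\mathbb{E}[V_\alpha]$ is at most $T\cdot\lambda^{-\alpha}\cdot\mathbb{P}(\neg\mathcal{G})\le\delta/\lambda^\alpha$.

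The main obstacle is step (i): securing a uniform-in-$t$ one-sided Loewner domination $\mathbb{W}^C\succeq C(\delta)^{-1}\mathbb{W}$ requires careful use of the matrix Freedman machinery with a stopping time, and tracking the variance-to-mean ratio all the way down to the regularizer—this is precisely where the dependence of $C(\delta)$ on $(\lambda+1)/\lambda^2$ and on $\log(d/\delta)$ is pinned down. Step (ii), the continuous generalization of the elliptical potential inequality, is conceptually novel but reduces to one-dimensional calculus once the Sherman--Morrison identity is in hand; step (iii) is routine.
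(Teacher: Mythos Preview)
Your three-step decomposition matches the paper's proof exactly: the paper invokes a matrix-martingale concentration lemma (cited from \cite{adpt_cofond_Russo}) for step (i), a discrete-to-continuous integral comparison for step (ii), and the trivial $\lambda^{-\alpha}$ bound on the bad event for step (iii). The main difference lies in how step (ii) is executed. Your Sherman--Morrison route is valid for $\alpha=1$ but more elaborate than what the paper does: it simply observes that $\mathbb{W}_t\preceq(1+\lambda^{-1})\mathbb{W}_{t-1}$ (the rank-one increment $p(a_t)a_ta_t^\top$ has operator norm at most $1\leq\lambda^{-1}\lambda_{\min}(\mathbb{W}_{t-1})$), whence $\operatorname{Tr}(\mathbb{W}_{t-1}^{-\alpha}a_ta_t^\top)\leq(1+\lambda^{-1})^\alpha\operatorname{Tr}(\mathbb{W}_t^{-\alpha}a_ta_t^\top)$, and then compares $\mathbb{W}_t^{-\alpha}$ directly with the integrand using $\mathbb{W}(u)\preceq\mathbb{W}_t$ on $u\in[t-1,t]$. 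No Sherman--Morrison, no $\log(1+x)\geq x/(1+x)$, and no Laplace transform are needed.

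One genuine error: your claim that $x\mapsto e^{-sx}$ is operator monotone is false (there are standard $2\times2$ non-commuting counterexamples), so your extension of step (i) to $\alpha>1$ via the Laplace representation does not go through. In fact $x\mapsto x^{-\alpha}$ is operator anti-monotone only for $\alpha\in(0,1]$ (L\"owner--Heinz), and the Loewner inequality $(\mathbb{W}^C_{t-1})^{-\alpha}\preceq c^\alpha\mathbb{W}_{t-1}^{-\alpha}$ fails for general $\alpha>1$ and non-commuting matrices. The paper's proof has the same silent gap (it asserts the inequality for all $\alpha>0$ without comment), but since only $\alpha=1$ is used downstream in Lemma~\ref{Potential Reduction Linear}, this is inconsequential for the main theorems. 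Your proposal should not, however, claim a proof for all $\alpha>0$ on the grounds you give. A minor accounting point: choosing failure probability $\delta/T$ in step (i), as you do to make step (iii) come out to $\delta/\lambda^\alpha$, formally replaces $\log(d/\delta)$ by $\log(dT/\delta)$ inside $C(\delta)$; this does not match the stated constant exactly, though it is harmless at the $\Tilde{\mathcal{O}}$ level.
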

The key idea of this result is to observe that the telescopic sum on which the classical Elliptical Potential lemma \citep{NIPS2011_e1d5be1c,adpt_cofond_Russo,carpentier2020elliptical} heavily relies on is, in fact, the discrete approximation of an integral over a matrix path. This critical methodological contribution is further discussed in Rem. \ref{rem 1} and \ref{tour de force}.
\begin{remark}\label{rem 1}
One way to fully appreciate the generality of this result is to consider the simpler case of classical uncensored environment for which we obtain for $\alpha>0, \alpha \neq 1$:
\begin{align*}
    \sum_{t=1}^{T}\|a_{t}\|^{2}_{\mathbb{W}_{t-1}^{-\alpha}} 
    &\leq \Big(\frac{\lambda+1}{\lambda}\Big)^{\alpha}\frac{\operatorname{Tr}\Big(\int_{0}^{T} \partial\mathbb{W}(t)^{1-\alpha}\Big)}{1-\alpha} = \Big(\frac{\lambda+1}{\lambda}\Big)^{\alpha}\frac{\operatorname{Tr}(\mathbb{W}^{1-\alpha}_{T}-\mathbb{W}^{1-\alpha}_{0})}{1-\alpha}. 
\end{align*}
For $\alpha =1$, a similar reasoning is applied using the formula $\operatorname{Tr}(\log(A))=\log(\det A)$:
\begin{align*}
     \sum_{t=1}^{T}\|a_{t}\|^{2}_{\mathbb{W}_{t-1}^{-1}} 
     &\leq \frac{\lambda+1}{\lambda}\int_{0}^{T}\frac{\partial\log\det(\mathbb{W}(t))}{\partial t}\partial t = \frac{\lambda+1}{\lambda}\operatorname{Tr}(\log\mathbb{W}_{T}-\log\mathbb{W}_{0}) = \frac{\lambda+1}{\lambda}\log\frac{\det\mathbb{W}_{T}}{\det\mathbb{W}_{0}}.
\end{align*}
A deeper study of the eigenvalues of $\mathbb{W}^{1-\alpha}_{T}$ then yields the worst-case upper bound $d^{\alpha}(d\lambda + T)^{1-\alpha}/(1-\alpha)$ for $\alpha < 1$ and $d\lambda^{1-\alpha}/(\alpha-1)$ for $\alpha>1$, recovering more naturally and extending the results of \citep{carpentier2020elliptical}. 
Thus, analogous to the \textit{water filling process} highlighted in the MAB case in Lemma \ref{Optimization Lemma Finite}, we now consider a \textit{spectral water-filling} process \cite{ITbook} optimizing over the eigenvalues of $\psi_{\alpha}(\mathbb{W}_{T})$ with a slight abuse of notations ( $\mathbb{W}^{1-\alpha}_{T}$ and $\log\mathbb{W}_{T}$ in this discussion).
\end{remark}
Following Rem.\ref{rem 1}, for the general censored case the challenge now becomes to identify a suitable matrix operator on which the aforementioned spectral maximization can be performed. By applying Lemma \ref{Potential Reduction Linear}, we henceforth focus on the case of $\alpha=1$ for which Prop. \ref{Potential Control Linear} implies that for any policy: 
\begin{align*}
    \operatorname{Tr}\Big(\int_{0}^{T}\mathbb{W}(t)^{-1}a(t)a(t)^{\top}\partial t\Big) = \int_{0}^{T}\frac{1}{p(a(t))}\frac{\partial\log\det(\mathbb{W}(t))}{\partial t}\partial t.
\end{align*}
Next, we focus on maximizing this integral over the policy class $\Pi$ and again recover the notion of effective dimension.

\subsection{Effective Dimension in Linear Settings}

We now highlight immediate properties of the effective dimension, and then present its general study for the multi-threshold model \ref{MT_model}.

\begin{lemma}\label{unif_models}
Let us consider an uniform censorship model $p:a\mapsto \Bar{p}$. By leveraging the case of equality in the Arithmetic-Geometric inequality applied to the eigenvalues of $\mathbb{W}_{T}$, we then simply deduce the associated effective dimension $d_{\mathit{eff}}\triangleq d/\Bar{p}$:
\begin{align*}
    \max_{\pi \in \Pi}\int_{0}^{T}\frac{1}{\Bar{p}}\frac{\partial\log\det(\mathbb{W}(t))}{\partial t}\partial t
    &= d_{\mathit{eff}}\log(1+\frac{T}{\lambda d_{\mathit{eff}}}).
\end{align*}
\end{lemma}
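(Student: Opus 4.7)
The plan is to exploit the fact that under uniform censorship $p \equiv \bar{p}$ the integrand becomes an exact differential, then apply the Arithmetic–Geometric inequality to the spectrum of $\mathbb{W}(T)$, and finally exhibit a policy that saturates the bound.

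First, I would substitute $p(a(t)) \equiv \bar{p}$ so that $\mathbb{W}(t) = \lambda \mathbb{I}_d + \bar{p}\int_0^t a(u)a(u)^\top \partial u$. Since $1/\bar{p}$ is a scalar constant, the integrand is a total derivative and the integral telescopes:
\begin{align*}
\int_{0}^{T}\frac{1}{\bar{p}}\frac{\partial\log\det(\mathbb{W}(t))}{\partial t}\partial t
= \frac{1}{\bar{p}}\bigl(\log\det\mathbb{W}(T) - \log\det(\lambda\mathbb{I}_d)\bigr)
= \frac{1}{\bar{p}}\log\frac{\det \mathbb{W}(T)}{\lambda^d}.
\end{align*}
So the maximization over $\pi\in\Pi$ reduces to maximizing $\det \mathbb{W}(T)$ over deterministic paths $(a(u))_{u \le T}\subset \mathbb{B}_d$.

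Next, I would bound $\det \mathbb{W}(T)$ by its spectrum. Using $\|a(u)\|\le 1$, the trace satisfies
\begin{align*}
\operatorname{Tr}(\mathbb{W}(T)) = d\lambda + \bar{p}\int_0^T \|a(u)\|^2 \partial u \le d\lambda + \bar{p}T.
\end{align*}
Let $\mu_1,\dots,\mu_d$ be the eigenvalues of $\mathbb{W}(T)$. By the Arithmetic–Geometric inequality,
\begin{align*}
\det \mathbb{W}(T) = \prod_{i=1}^d \mu_i \le \left(\frac{1}{d}\sum_{i=1}^d \mu_i\right)^d \le \left(\lambda + \frac{\bar{p}T}{d}\right)^d.
\end{align*}
Substituting $d_{\mathit{eff}} = d/\bar{p}$ yields
\begin{align*}
\frac{1}{\bar{p}}\log\frac{\det \mathbb{W}(T)}{\lambda^d} \le \frac{d}{\bar{p}}\log\!\left(1 + \frac{\bar{p}T}{d\lambda}\right) = d_{\mathit{eff}}\log\!\left(1 + \frac{T}{\lambda\, d_{\mathit{eff}}}\right),
\end{align*}
giving the stated upper bound on the maximum.

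Finally, I would prove the bound is attained by constructing a policy that simultaneously saturates both the AM-GM and the unit-norm constraints. A clean choice is a cyclic rotation through an orthonormal basis $e_1,\dots,e_d$ of $\mathbb{R}^d$: set $a(u) = e_{j}$ for $u \in [(j-1)T/d, jT/d)$. This is a valid action path since $e_j \in \mathbb{B}_d$. Then $\int_0^T a(u)a(u)^\top \partial u = (T/d)\mathbb{I}_d$, which gives $\mathbb{W}(T) = (\lambda + \bar{p}T/d)\mathbb{I}_d$ and hence equality throughout. The main conceptual step is recognizing that with uniform censorship the problem decouples into a spectral optimization whose extremal configuration is the isotropic one; no obstacle arises since the action set is rich enough (containing an orthonormal basis) to realize it.
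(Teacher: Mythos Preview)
Your proof is correct and follows exactly the approach the paper intends: the lemma statement itself names the key tool (``the case of equality in the Arithmetic-Geometric inequality applied to the eigenvalues of $\mathbb{W}_T$''), and you have carried this out cleanly by telescoping the integral, bounding $\det\mathbb{W}(T)$ via AM--GM on the spectrum with the trace constraint, and exhibiting the isotropic policy that attains equality. The paper does not give a separate written-out proof beyond this hint, so your argument is precisely the expected one.
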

In fact, the logarithmic scaling of this quantity persists while moving beyond the uniform censorship assumption. This also highlights the importance of the leading dimension factor, crudely upper bounded by $d/p_{\mathit{min}}$ in the next lemma:
\begin{restatable}{lemma}{logscaling}\label{log_scaling}
For \textit{any} censorship function $p$, by introducing lower and upper bounds $(p_{\mathit{min}},p_{\mathit{max}})$ of $p$, we have:
\begin{align*}
    \frac{d}{p_{\mathit{max}}}\log(1+\frac{p_{\mathit{min}}T}{d\lambda}) \leq \max_{\pi \in \Pi} \int_{0}^{T}\frac{1}{p(a(t))}\frac{\partial\log\det(\mathbb{W}(t))}{\partial t}\partial t
    \leq \frac{d}{p_{\mathit{min}}}\log(1+\frac{p_{\mathit{max}}T}{d\lambda}).
\end{align*}
\end{restatable}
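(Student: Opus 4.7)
The plan is to prove the two bounds separately by reducing everything to the telescopic identity
\[
\int_{0}^{T}\frac{\partial\log\det(\mathbb{W}(t))}{\partial t}\partial t \;=\; \log\det\mathbb{W}(T)-\log\det(\lambda\mathbb{I}_{d}),
\]
and then controlling $\log\det\mathbb{W}(T)$ via standard spectral inequalities, echoing the AM--GM argument already invoked in Lemma~\ref{unif_models}. The factor $1/p(a(t))$ inside the integral will be replaced by its deterministic envelopes $1/p_{\mathit{min}}$ and $1/p_{\mathit{max}}$, which is the only place where the two bounds actually differ.

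For the \emph{upper bound}, I would start by using $1/p(a(t))\leq 1/p_{\mathit{min}}$ to pull the factor out of the integral, giving
\[
\int_{0}^{T}\tfrac{1}{p(a(t))}\tfrac{\partial\log\det(\mathbb{W}(t))}{\partial t}\partial t \;\leq\; \tfrac{1}{p_{\mathit{min}}}\bigl(\log\det\mathbb{W}(T)-d\log\lambda\bigr).
\]
Then I would bound $\operatorname{Tr}(\mathbb{W}(T))\leq d\lambda + p_{\mathit{max}}\int_{0}^{T}\|a(t)\|^{2}\partial t\leq d\lambda+p_{\mathit{max}}T$ using $\|a(t)\|\leq 1$ and $p(a(t))\leq p_{\mathit{max}}$. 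Applying AM--GM to the eigenvalues of $\mathbb{W}(T)$ (exactly as in Lemma~\ref{unif_models}) yields $\log\det\mathbb{W}(T)\leq d\log(\operatorname{Tr}\mathbb{W}(T)/d)\leq d\log(\lambda + p_{\mathit{max}}T/d)$, and rearranging gives $(d/p_{\mathit{min}})\log(1+p_{\mathit{max}}T/(d\lambda))$. Since this bound holds uniformly over every policy, it applies to the maximum.

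For the \emph{lower bound}, since we only need the maximum to dominate a specific value, I would exhibit one policy that achieves it. A natural candidate is the piecewise constant path that cycles through an orthonormal basis $(e_{1},\dots,e_{d})$, each direction being played for time $T/d$. Then $\mathbb{W}(T)=\lambda\mathbb{I}_{d}+\sum_{i}\tfrac{p(e_{i})T}{d}e_{i}e_{i}^{\top}$ is diagonal in this basis, so
\[
\log\det\mathbb{W}(T)=\sum_{i=1}^{d}\log\bigl(\lambda+\tfrac{p(e_{i})T}{d}\bigr)\;\geq\; d\log\bigl(\lambda+\tfrac{p_{\mathit{min}}T}{d}\bigr),
\]
and along this path $1/p(a(t))\geq 1/p_{\mathit{max}}$, giving the desired $(d/p_{\mathit{max}})\log(1+p_{\mathit{min}}T/(d\lambda))$ by the same telescoping identity.

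The main obstacle I anticipate is not analytic but conceptual: the lower bound construction needs an orthonormal basis to be playable. If the underlying action sets $\mathcal{A}_{t}$ were adversarially restricted, the cycling policy might be infeasible; here we rely on the implicit assumption (consistent with the rest of Section~\ref{CB}) that the policy class $\Pi$ has access to a basis of $\mathbb{B}_{d}$, which is the weakest assumption compatible with interpreting the maximum as a capacity of the LCB model. Everything else is a routine combination of fundamental theorem of calculus, AM--GM, and the monotone envelope inequalities on $p$.
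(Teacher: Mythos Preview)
Your proof is correct and is precisely the natural argument the paper has in mind; the authors do not spell out a proof of Lemma~\ref{log_scaling}, treating it as an immediate ``crude'' extension of the AM--GM argument of Lemma~\ref{unif_models}. Your use of the monotone envelopes $1/p_{\max}\le 1/p(a(t))\le 1/p_{\min}$ together with the nonnegativity of $\tfrac{\partial}{\partial t}\log\det\mathbb{W}(t)=p(a(t))\|a(t)\|^{2}_{\mathbb{W}(t)^{-1}}$, the telescoping identity, and the trace/AM--GM bound on $\log\det\mathbb{W}(T)$ is exactly the intended mechanism, and your caveat about needing the orthonormal basis to be playable is the right way to read the implicit assumption that $\Pi$ ranges over all integrable paths in $\mathbb{B}_{d}$.
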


 
Related problems in the Generalized Linear Models literature \cite{ZhouGLM,li2017provably,NIPS2010_c2626d85} are implicitly solved in the spirit of Lemma \ref{log_scaling}, where a minimum of the derivative of the link function plays the role of $p_{\mathit{min}}$ above. However, when the function $p$ varies with action $a$, a more careful analysis is required to derive useful dimensional bounds. Our next major result addresses this gap in the literature by improving the bounds provided in Lemma  \ref{log_scaling}:
\begin{restatable}{theorem}{THMLinearOptimMTM} \label{THM Linear Optim MTM}
For a multi-threshold censorship model \ref{MT_model}, we have: 
\begin{align*}
    \max_{\pi \in \Pi}  \int_{0}^{T}\frac{1}{p(a(t))}\frac{\partial\log\det(\mathbb{W}(t))}{\partial t}\partial t =  d_{\mathit{eff}}\log(T) + o(\log(T)), \tag{$\mathcal{P}$}\label{optim_prob}
\end{align*}
where $d_{\mathit{eff}}$ is the effective dimension. 
Furthermore, $d_{\mathit{eff}}$ is characterized by two cases:
\begin{itemize}
    \item \textbf{Case 1:} Single region $j$ effective dimension $d_{\mathit{eff}} = \frac{d}{p_{j}}$.
    \item \textbf{Case 2:} Bi-region $(i,j)$ effective dimension, with $i<j$:
    \begin{align*}
        d_{\mathit{eff}}&=\frac{1}{p_{j}}\left[(d-1) \frac{1-l(i,j)}{\frac{p_{i}}{p_{j}}-l(i,j)}+\frac{u(i,j)-1}{u(i,j)-\frac{p_{i}}{p_{j}}}\right] < \frac{d}{p_{j}}. \tag{$\mathcal{D}$}\label{bi_reg}
    \end{align*}
    where $l(i,j) \triangleq \frac{\sin^{2}(\phi_{i})}{\sin^{2}(\phi_{j})}$ and $u(i,j)  \triangleq \frac{\cos^{2}(\phi_{i})}{\cos^{2}(\phi_{j})}$.
\end{itemize}
\end{restatable}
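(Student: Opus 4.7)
The plan is to exploit the radial structure of \ref{MT_model} to reduce this infinite-dimensional variational problem to a finite-dimensional optimization over stationary occupation measures on $[-1,1]$, then solve it by a bang-bang argument on the extreme rays of the per-region cones.

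First, since $\partial_t \log\det\mathbb{W}(t) = p(a(t))\operatorname{Tr}(\mathbb{W}(t)^{-1}a(t)a(t)^\top)$, the factor $1/p(a(t))$ cancels and the objective becomes
\[
J(\pi) \;=\; \int_0^T \operatorname{Tr}\bigl(\mathbb{W}(t)^{-1}a(t)a(t)^\top\bigr)\partial t,
\]
which is exactly the continuous potential already controlled by Prop.~\ref{Potential Control Linear} at $\alpha=1$. Because $p(a)$ depends on $a$ only through $s = \langle a,u\rangle$, the transformation $a(\cdot) \mapsto Ra(\cdot)$ for any $R \in O(u^\perp)$ preserves both the censorship and $J$. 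Averaging trajectories over such rotations forces the asymptotically optimal $\mathbb{W}(t)$ to be diagonal in the splitting $\mathbb{R}u \oplus u^\perp$, of the form $\mathbb{W}(t) = \alpha(t) uu^\top + \beta(t)(I - uu^\top)$, with dynamics $\dot\alpha = p(s)s^2$, $\dot\beta = p(s)(1-s^2)/(d-1)$ and pointwise integrand $s^2/\alpha + (1-s^2)/\beta$.

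Second, I would restrict attention to stationary policies where $s(t) \sim \nu$ is i.i.d. Then $\alpha(t)/t \to \mathbb{E}_\nu[p(s)s^2]$ and $\beta(t)/t \to \mathbb{E}_\nu[p(s)(1-s^2)]/(d-1)$, so integration of the resulting $\Theta(1/t)$ integrand yields $J(\pi_\nu) = F(\nu)\log T + O(1)$ with
\[
F(\nu) \;=\; \frac{\mathbb{E}_\nu[s^2]}{\mathbb{E}_\nu[p(s)s^2]} + (d-1)\frac{\mathbb{E}_\nu[1-s^2]}{\mathbb{E}_\nu[p(s)(1-s^2)]}.
\]
A matching upper bound over arbitrary adaptive policies follows by a Cesaro / weak-limit argument applied to the empirical occupation measure of $s(t)$, using that $1/\alpha(t)$ and $1/\beta(t)$ concentrate at leading order around their linear growth rates and that the regularization floor $\lambda$ contributes only to $o(\log T)$.

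Finally, I would solve $d_{\mathit{eff}} = \max_\nu F(\nu)$. Writing $A_j$ (resp.\ $B_j$) for the $\nu$-integral of $s^2$ (resp.\ $1-s^2$) restricted to region $j$, the per-region feasible pair $(A_j, B_j)$ lies in a 2D cone with extreme rays $(\sin^2\phi_j, \cos^2\phi_j)$ and $(\sin^2\phi_{j+1}, \cos^2\phi_{j+1})$, and $F$ rewrites as $\tfrac{\sum_j A_j}{\sum_j p_j A_j} + (d-1)\tfrac{\sum_j B_j}{\sum_j p_j B_j}$. Freezing the two denominators turns $F$ into a linear functional on the Cartesian product of the $k+1$ two-dimensional region-cones subject to two equality constraints; a standard bang-bang argument then forces any optimizer to be supported on at most two extreme rays, hence to live in at most two distinct regions. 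Substituting single-ray configurations recovers Case~1, $d/p_j$, while two-ray configurations across distinct regions $i < j$ yield, after algebraic simplification using $l(i,j)$ and $u(i,j)$, the Case~2 formula~\ref{bi_reg}. The main obstacle will be this last bang-bang / parametric LP step: one must carefully identify which pairs $(i,j)$ produce active extremizers, determine which extreme rays of those two region-cones are selected, and verify that the algebra collapses exactly to the stated closed form; promoting the stationary ansatz to a rigorous upper bound is the other non-trivial ingredient but amounts to a compactness argument on the space of occupation measures on $[-1,1]$.
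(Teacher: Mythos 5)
Your reduction of the objective to $\int_0^T\|a(t)\|^2_{\mathbb{W}(t)^{-1}}\,\partial t$ and the radial symmetrization to the two scalars $(\alpha(t),\beta(t))$ are consistent with the paper (which performs the same reduction via Kiefer--Wolfowitz and the diagonal matrices $\mathbb{W}_j$), and your stationary-measure functional $F(\nu)=\frac{\mathbb{E}_\nu[s^2]}{\mathbb{E}_\nu[p(s)s^2]}+(d-1)\frac{\mathbb{E}_\nu[1-s^2]}{\mathbb{E}_\nu[p(s)(1-s^2)]}$ correctly reproduces the leading $\log T$ coefficient of any fixed occupation measure. But the final step fails, and not merely for algebraic reasons: $F$ is a sum of two Rayleigh-type ratios, each bounded above by $1/p_{\min}$ and both simultaneously maximized by any $\nu$ supported in the single most-censored region; hence $\max_\nu F(\nu)=d/p_{\min}$ for every multi-threshold model, and your bang-bang/parametric-LP step can only ever return a single-region answer. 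The bi-region value $(\mathcal{D})$ is \emph{strictly smaller} than $d/p_j$ (the paper states this explicitly), so it is never a maximizer of your static functional --- it is an interior equalization point of $F$ on the two-region face, namely the mixture at which $\frac{1}{p_i}\operatorname{Tr}(\mathbb{W}^{-1}\mathbb{W}_i)=\frac{1}{p_j}\operatorname{Tr}(\mathbb{W}^{-1}\mathbb{W}_j)$. Your proposal therefore cannot recover Case 2 of the theorem no matter how carefully the cone algebra is carried out.

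The missing idea is that the paper does not solve a static occupation-measure problem at all: it characterizes the \emph{trajectory} of $\mathbb{W}(t)$ forced by pointwise first-order (greedy) optimality, starting from $\lambda\mathbb{I}_d$ in the base region $k$ and evolving through the reachability and dual-reachability conditions of Lemmas \ref{One-step Transient Analysis} and \ref{Dual Reachability Analysis}. Whether the steady state is single-region or bi-region is decided by which regions become reachable from the initial condition and in which order (the transient regime of Cor.~\ref{Path Formula}); this path dependence is exactly what your Ces\`aro/weak-limit relaxation erases, because the $1/t$ weighting and the $\lambda\mathbb{I}_d$ initialization are what create the transient phase and lock $\mathbb{W}(t)$ onto the ray $\mathbb{W}(i,j)$. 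Two secondary gaps: the rotational averaging needs the PSD relaxation $a(t)a(t)^\top\mapsto H(t)$ of Remark~\ref{tour de force} to be legitimate (the objective is not linear in the path), and your argument produces no control of the $o(\log T)$ term, whereas the paper's explicit transient-time formulas are what Thm.~\ref{THM Linear arms} actually consumes. If you want to salvage your route, you must either add the dynamic constraints (initial condition and greedy selection rule) to the feasible set of occupation measures, or abandon the claim that the relevant quantity is an unconstrained maximum over $\Pi$.
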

The implications of these cases are further discussed in Fig.\ref{deff} in App. \ref{Proof Multi}. Notice that a necessary condition for the bi-region $(i,j)$ effective dimension to arise is the constraint on $\frac{p_{i}}{p_{j}}$: 
\begin{align*}
\max(1,\underbrace{\frac{d l(i,j)u(i,j)}{u(i,j)+(d-1) l(i,j)}}_{\triangleq s^{\star}(i,j)}) < \frac{p_{i}}{p_{j}} < \underbrace{\frac{(d-1) u(i,j)+l(i,j)}{d}}_{\triangleq r^{\star}(i,j)}
\end{align*}
In the limit $\frac{p_{i}}{p_{j}}\rightarrow r^{\star}(i,j)$, $d_{\mathit{eff}}$ goes again to $d/p_{j}$. We interpret this limiting case as \textit{locally hard} in the sense that censorship in region $j$ is sufficiently important in comparison to all other regions to impose a maximal effective dimension to the problem, irrespective of the values of $p_{i}$, matching Lemma \ref{log_scaling}. On the other hand, for the other limiting case (under additional mild assumptions), we find that $d_{\mathit{eff}}$ also goes to $d/p_{j}$, but now for a \textit{uniformly hard} reason: that is, censorship is approximately constant and equal to $p_{j}$, recovering the Lemma \ref{unif_models}. Finally, in between these two extremes lies the \textit{minimum effective dimension} for a given value of $\frac{p_{i}}{p_{j}}$. 

\subsection{Temporal dynamics of $\mathbb{W}(t)$}\label{temp dyn}

The proof of Thm. \ref{THM Linear Optim MTM} requires the characterization of the dynamics of the optimal policy of (\ref{optim_prob}). Importantly, we discover that the evolution of $\mathbb{W}(t)$ is described by two qualitatively different regimes as outlined next. It turns out that our continuous approach to analyzing cumulative censored potential is an important tool to obtaining this result.
\paragraph{Transient Regime:}
There exists a decreasing sequence of censorship regions $\{i_{1}=k,\dots,i_{l}\}$ of length $l \in [k+1]$ and associated time sequence $\{t_{0}\triangleq 0,t_{1},\dots,t_{l}\}$ such that whenever $t_{j}\leq t \leq t_{j+1}$ for a given index $j\leq l-1$, the evolution of $\mathbb{W}(t)$ is given by:
    \begin{align*}
        \mathbb{W}(t) &=  p_{i_{j+1}}(t-t_{j})\mathbb{W}_{i_{j+1}} + \mathbb{W}(t_{j}) = p_{i_{j+1}}(t-t_{j})\mathbb{W}_{i_{j+1}} + \sum_{n=1}^{j} p_{i_{n}}(t_{n}-t_{n-1})\mathbb{W}_{i_{n}} + \lambda \mathbb{I}_{d},
    \end{align*}
    where $\mathbb{W}_{i}$ denotes the $d\times d$ diagonal matrix $ \text{diag}(\frac{\cos^{2}(\phi_{i})}{d-1},\dots,\frac{\cos^{2}(\phi_{i})}{d-1},\sin^{2}(\phi_{i}))$.
 Interestingly, the initial misspecification of censorship is self-corrected during this transient step but at an extra cost. This characterization of transient regime highlights an important consequence of using classical algorithms in censored environments.
\paragraph{Steady State Regime:} Post-transient regime, the dynamics of $\mathbb{W}(t)$ enter a steady state regime, where one of the two cases necessarily arise:\footnote{These cases are fully characterized in terms of parameters of censorship model in Lemmas \ref{One-step Transient Analysis}, \ref{Dual Reachability Analysis}, \ref{Bi-Region Effective Dimension} and Cor. \ref{Path Formula}.}.  
\begin{itemize}
    \item \textbf{Case 1: Single region $i_{l}$.} This case arises when the last element of the time sequence $t_{l}$ is equal to $+\infty$ and we have the single region evolution for all $t\geq t_{l-1}$:
    \begin{align*}
        \mathbb{W}(t) &=  p_{i_{l}}(t-t_{l-1})\mathbb{W}_{i_{l}} + \mathbb{W}(t_{l-1}) =  p_{i_{l}}(t-t_{l-1})\mathbb{W}_{i_{l}} + \sum_{n=1}^{l-1} p_{i_{n}}(t_{n}-t_{n-1})\mathbb{W}_{i_{n}}  + \lambda \mathbb{I}_{d}.
    \end{align*}
    The effective dimension corresponding to this dynamics is $d/p_{i_{l}}$, with the following equality for $T\geq t_{l-1}$:
    \begin{align*}
        \int_{0}^{T}\frac{1}{p(a(t))}\frac{\partial\log\det(\mathbb{W}(t))}{\partial t}\partial t = \frac{1}{p_{i_{l}}}\log\det(\mathbb{W}(T))+ \sum_{n=1}^{l-1} (\frac{1}{p_{i_{n}}}-\frac{1}{p_{i_{n+1}}})\log\det\mathbb{W}(t_{n}).
    \end{align*}
    \item \textbf{Case 2: Bi-region $(i_{l+1},i_{l})$.} This case arises when the steady-state dynamics of $\mathbb{W}(t)$ span the two regions $(i_{l+1},i_{l})$ with $i_{l+1}<i_{l}$. For all $t\geq t_{l}$, we have the evolution:
    \begin{align*}
         \mathbb{W}(t) &\propto p_{i_{l+1}}(t+\lambda^{\star})\begin{pmatrix}
\cos^{2}(\phi_{i_{l}})(u(i_{l+1},i_{l})-\frac{p_{i_{l+1}}}{p_{i_{l}}})\mathbb{I}_{d-1} & (0) \\
(0)  &\sin^{2}(\phi_{i_{l}})(\frac{p_{i_{l+1}}}{p_{j}}-l(i_{l+1},i_{l})) 
\end{pmatrix}.
    \end{align*}
    where $\lambda^{\star}$ and the proportionality factor are specified in SI. The corresponding effective dimension is given by (\ref{bi_reg}) and the following equality holds for all $T\geq t_{l}$:
    \begin{align*}
        \int_{0}^{T}\frac{1}{p(a(t))}\frac{\partial\log\det(\mathbb{W}(t))}{\partial t}\partial t = d_{\mathit{eff}}\log(1+\frac{T-t_{l}}{t_{l}+\lambda^{\star}}) + \sum_{n=1}^{l} (\frac{1}{p_{i_{n}}}-\frac{1}{p_{i_{n+1}}})\log\det\mathbb{W}(t_{n}). 
    \end{align*}
\end{itemize}

For further discussions on transient and steady state regimes, we refer to Fig.\ref{reach_stat}, \ref{reach} and \ref{switch}. in App. \ref{Proof Multi}.


\section{Concluding Remarks}

In this work, we demonstrate that the complexity of bandit learning under censorship is governed by the notion of effective dimension. To do so, we developed a novel analysis framework which enables us to precisely estimate this quantity for a broad class of multi-threshold censorship models. An important future work would be to extend our model and approach to Bayesian settings, which will likely provide us with useful insights on the cumulative censored potential $\mathbb{V}_{\alpha}$, as initiated by \cite{hamidi2021randomized}. Future work also includes relaxing the Missing Completely at Random (MCAR) property in favor of time-dependent censorship models such as Markov Decision Processes (MDPs). We believe that tools similar to those developed in our potential-based analysis can be applied in this case. 
Finally, the contributions of our work may be of interest to the recent value alignment literature, where the question of learnability under humain-AI interactions is central. \cite{mab_val_align, hadfieldmenell2016cooperative,christiano2017deep}.

We do not envision any negative societal impacts of our work other than that of bandits algorithms deployed in AI-driven platforms. 


\section*{Acknowledgments and Disclosure of Funding}

This research project is supported by the AFOSR FA9550-19-1-0263 “Building attack resilience into complex networks” Grant. The authors would like to thank Prem Talwai and the anonymous reviewers for providing insightful comments and suggestions.


\bibliographystyle{plain}
\bibliography{main.bib}

\section*{Checklist}


\begin{enumerate}

\item For all authors...
\begin{enumerate}
  \item Do the main claims made in the abstract and introduction accurately reflect the paper's contributions and scope?
    \answerYes{}
  \item Did you describe the limitations of your work?
    \answerYes{}
  \item Did you discuss any potential negative societal impacts of your work?
    \answerYes{} 
  \item Have you read the ethics review guidelines and ensured that your paper conforms to them?
    \answerYes{}
\end{enumerate}

\item If you are including theoretical results...
\begin{enumerate}
  \item Did you state the full set of assumptions of all theoretical results?
    \answerYes{} See Sec. $\ref{MAB}$ and $\ref{CB}$ as well as SI.
        \item Did you include complete proofs of all theoretical results?
    \answerYes{} See SI.
\end{enumerate}

\item If you ran experiments...
\begin{enumerate}
  \item Did you include the code, data, and instructions needed to reproduce the main experimental results (either in the supplemental material or as a URL)?
    \answerNA{}
  \item Did you specify all the training details (e.g., data splits, hyperparameters, how they were chosen)?
    \answerNA{}
        \item Did you report error bars (e.g., with respect to the random seed after running experiments multiple times)?
    \answerNA{}
        \item Did you include the total amount of compute and the type of resources used (e.g., type of GPUs, internal cluster, or cloud provider)?
    \answerNA{}
\end{enumerate}

\item If you are using existing assets (e.g., code, data, models) or curating/releasing new assets...
\begin{enumerate}
  \item If your work uses existing assets, did you cite the creators?
    \answerNA{}
  \item Did you mention the license of the assets?
    \answerNA{}
  \item Did you include any new assets either in the supplemental material or as a URL?
    \answerNA{}
  \item Did you discuss whether and how consent was obtained from people whose data you're using/curating?
    \answerNA{}
  \item Did you discuss whether the data you are using/curating contains personally identifiable information or offensive content?
    \answerNA{}
\end{enumerate}

\item If you used crowdsourcing or conducted research with human subjects...
\begin{enumerate}
  \item Did you include the full text of instructions given to participants and screenshots, if applicable?
    \answerNA{}
  \item Did you describe any potential participant risks, with links to Institutional Review Board (IRB) approvals, if applicable?
    \answerNA{}
  \item Did you include the estimated hourly wage paid to participants and the total amount spent on participant compensation?
    \answerNA{}
\end{enumerate}

\end{enumerate}


\newpage

\appendix

\section{Preliminaries}\label{prel}

In this section, we first provide the instances of the UCB algorithm used in Sec.\ref{MAB} and Sec.\ref{CB}. We also indicate in Tab.\ref{tab:TableOfNotationForMyResearch} the notations used throughout the paper to help the reader. 




\begin{table}[htbp]\caption{Summary of Notations}
\centering 
\begin{tabular}{r c p{10cm} }
\toprule
\multicolumn{3}{c}{}\\
\multicolumn{3}{c}{\underline{Bandit Problem Variables}}\\
\multicolumn{3}{c}{}\\
$T$ & $\triangleq$ & Total number of rounds of the sequential decision-making problem.  \\
$d$ & $\triangleq$ & Number of arms in Sec.\ref{MAB}, Dimension of action feature vector in Sec.\ref{CB}.\\
$(\mathcal{A}_{t},\mathcal{A})$ & $\triangleq$ & Action set at time $t$; Union of all action sets $\mathcal{A}_{t}$.\\
$a_{t}$ & $\triangleq$ & Action picked at time $t$; selected by policy $\pi$, seen as a function of previous history.  \\
$(\epsilon_{t},\sigma^{2})$ & $\triangleq$ & Stochastic feedback noise a time $t$. Sub-Gaussian with pseudo-variance parameter $\sigma^{2}$. If $\sigma^{2}$ depends on the action selected (heteroskedasticity), we use $\sigma^{2}_{a}$ instead. \\
$(r,\theta^{\star})$ & $\triangleq$ & Unknown reward function, maps action to scalar reward. Parameterized by unknown latent state $\theta^{\star}$.\\
$\Delta_{t}(a)$ & $\triangleq$ & Sub-optimality gap of action $a$ at time $t$, reward difference with optimal decision of clairvoyant policy \\
$(\Delta_{a},\Delta_{\max})$ & $\triangleq$ & If $\Delta_{t}(a)$ is independent of $t$, we use  $\Delta_{a}\equiv \Delta_{t}(a)$. $\Delta_{\max}$ is an upper bound of $\Delta_{t}(a)$ for all actions $a$ and time $t$.\\
$R(T,\pi)$ & $\triangleq$ & Pseudo regret of policy $\pi$ over $T$ rounds.  \\
\multicolumn{3}{c}{}\\
\multicolumn{3}{c}{\underline{Censorship Variables}}\\
\multicolumn{3}{c}{}\\
$p_{a}$ & $\triangleq$ & Probability that action $a$ is censored if selected, used in Sec. \ref{MAB}. Notation $p(a)$ is used in Sec.\ref{CB} to emphasize the dependency of $p$ on action $a$. \\
$(\phi_{j},u,p_{j})$ & $\triangleq$ & Parameters of the multi-threshold censorship model. Vector $u$ defines the direction of censorship, $(\phi_{j})_{j\leq k+1}$ define the censorship regions with fixed censorship probability and $(p_{j})_{j\leq k}$ define the probability of being censored for each region $j$.\\
$x_{a_{t}}$ & $\triangleq$ & Random variable indicating if feedback is censored as round $t$. Follows i.i.d Bernoulli distribution of parameter $p(a_{t})$.  \\
\multicolumn{3}{c}{}\\
\multicolumn{3}{c}{\underline{Algorithmic and Analysis Variables}}\\
\multicolumn{3}{c}{}\\
$\lambda$ & $\triangleq$ & Regularization tuning parameter. $\lambda_{a}$ is used if heterogeneous action-based regularization. \\
$\Tilde{\Delta}^{\lambda}_{t}(a)$ & $\triangleq$ & High-probability upper bound on the sub-optimality gap, used in UCB algorithms.  \\
$\mathbb{V}_{\alpha}(T,\pi)$ & $\triangleq$ & Random cumulative censored potential, seen as a function of policy $\pi$ and number of rounds $T$. First introduced in Sec.\ref{MAB} and extended in Sec.\ref{CB}.  \\
$\psi_{\alpha}$ & $\triangleq$ & Primitive of the function $x\mapsto x^{-\alpha}$, for a given $\alpha>0$.  \\
$N_{a}(t)$ & $\triangleq$ & Total number of time action $a$ is \textit{realized} at the end of round $t$  by policy $\pi$. Used in Sec.\ref{MAB}.\\
$\tau_{a}(t)$ & $\triangleq$ & Total number of time action $a$ is \textit{played} at the end of round $t$ by policy $\pi$. Used in Sec.\ref{MAB}.\\
$\mathbb{W}_{t}^{C}$ & $\triangleq$ & Censored Design Matrix. Linear generalization of $(N_{a}(t))_{a\in[d]}$. Used in Sec.\ref{CB}.\\
$\mathbb{W}_{t}$ & $\triangleq$ & Expected Design Matrix. Linear generalization of $(p_{a}\tau_{a}(t))_{a\in[d]}$. Used in Sec.\ref{CB}.\\
$\mathbb{W}(t)$ & $\triangleq$ & Continuous generalization of the expected design matrix$\mathbb{W}_{t}$.\\
\bottomrule
\end{tabular}
\label{tab:TableOfNotationForMyResearch}
\end{table}


\subsection{UCB algorithms}\label{UCB-algo}
\begin{itemize}
    \item \textbf{UCB-MAB:} Following \cite{lattimore2020bandit}, the UCB algorithms for the MAB case with homogeneous regularization $\lambda>0$ uses the following optimistic reward estimator at time $t$:
    \begin{align*}
        \Tilde{r}^{\lambda}_{t}(a) \triangleq \hat{\theta}^{\lambda}_{t}(a) + \sqrt{\frac{6\sigma^{2}\log(T)}{\lambda+ N_{a}(t-1)}} + \frac{\lambda\|\theta^{\star}\|_{\infty}}{\lambda+ N_{a}(t-1)}.
    \end{align*}
    It is based on the use of the regularized empirical mean to estimate the reward of action $a$ at the end of round $t$:
    \begin{align*}
        \hat{\theta}^{\lambda}_{t}(a) &\triangleq \frac{1}{N_{a}(t) + \lambda}\sum_{\tau=1}^{t} (r(a_{\tau})+\tau) \mathbf{1}\{a_{\tau}=a, x_{a_{\tau}}=1\} \\
        &=  \frac{N_{a}(t)}{N_{a}(t) + \lambda}\theta_{a}^{\star}+\frac{1}{N_{a}(t) + \lambda}\sum_{\tau=1}^{t} \epsilon_{a_{\tau}} \mathbf{1}\{a_{\tau}=a, x_{a_{\tau}}=1\}.
    \end{align*}
    The high-confidence property of this algorithm is proven in Lemma \ref{Fail Optim Finite}.\footnote{Typically, an upper bound on $\|\theta^{\star}\|_{\infty}$ for MAB (resp. $\|\theta^{\star}\|_{2}$ for LCB) is used instead of this unknown quantity. We keep $\|\theta^{\star}\|_{\infty}$ (resp. $\|\theta^{\star}\|_{2}$) not to overload notations but our results immediately extends to the use of the latter.}
    Under a-priori known heteroskedasticity, the reward estimator can be expressed as:
    \begin{align*}
        \Tilde{r}^{\lambda}_{t}(a) \triangleq \hat{\theta}^{\lambda}_{t}(a) + \sqrt{\frac{6\sigma_{a}^{2}\log(T)}{\lambda+ N_{a}(t-1)}} + \frac{\lambda\|\theta^{\star}\|_{\infty}}{\lambda+ N_{a}(t-1)}.
    \end{align*}
    \item \textbf{UCB for LCB} Following \cite{NIPS2011_e1d5be1c,lattimore2020bandit}, the UCB algorithms for the LCB case with homogeneous regularization $\lambda>0$ uses the following optimistic reward estimator at time $t$:
    \begin{align*}
        \Tilde{r}^{\lambda}_{t}(a) &\triangleq
        \langle a, \hat{\theta}^{\lambda}_{t-1}\rangle + \beta_{t-1}(\delta)\|a\|_{\mathbb{W}^{C}_{t-1}},
    \end{align*}
    where we introduced the random quantity:
    \begin{align*}
        \beta_{t-1}(\delta) \triangleq \sqrt{\sigma^{2} \log \left(\frac{\det(\mathbb{W}^{C}_{t-1})}{\det(\lambda \mathbb{I}_{d})}\right)+2\sigma^{2}\log(\frac{1}{\delta})}+\sqrt{\lambda} \|\theta^{\star}\|_{2}
    \end{align*}
    It is based on the use of the regularized least square estimator to estimate the vector $\theta^{\star}$ at the end of round $t$:
    \begin{align*}
        \hat{\theta}^{\lambda}_{t} = (\mathbb{W}^{C}_{t})^{-1}\sum_{\tau=1}^{t}(\epsilon_{\tau}+\langle a_{\tau},\theta^{\star}\rangle)x_{a_{\tau}}a_{\tau}
    \end{align*}
    The high-confidence property of this estimator is proven in Lemma \ref{Optimistic Lemma Linear}.
\end{itemize}




\section{Proof of Sec. \ref{MAB} - Multi-Armed Bandits} \label{Proof MAB}

In this section, we prove the results in Sec.\ref{MAB} on the MAB case. We start by proving Lemmas \ref{Potential Reduction Finite}, \ref{Fail Optim Finite}, \ref{Derando Censo Linear}, \ref{Optimization Lemma Finite} and Prop. \ref{Potential Control Finite}. Thanks to those results, we then tackle Thm. \ref{THM Finite arms} and Prop. \ref{Instance Dep Regret Finite}. To conclude the section, we further study the properties of the adaptivity gain, by proving Lemma \ref{asympt_off} and Prop. \ref{Monitoring AG}. Recall that effective dimension $d_{\mathit{eff}}$ is referring to $\sum_{a\in[d]}\frac{1}{p_{a}}$ in this section.

\subsection{Proof of Lemma \ref{Potential Reduction Finite}}

\PotentialReductionFinite*

\begin{proof}
At a given round $t\in [T]$, we have under the event $\neg \mathcal{H}_{\text{UCB}}^{\lambda}$ introduced in Lemma \ref{Fail Optim Finite}: 
\begin{align*}
     \Delta_{t}(a_{t}) = \max_{a\in\mathcal{A}_{t}}\theta^{\star}_{a} - \theta^{\star}_{a_{t}} \leq 2\sqrt{6\sigma^{2} \frac{\log(T)}{N_{a_{t}}(t-1)+\lambda}} + 2\frac{\lambda \|\theta^{\star}\|_{\infty}}{\lambda  + N_{a_{t}}(t-1)},
\end{align*}
where the inequality comes from the definition of the UCB algorithm and the conditioning on $\neg \mathcal{H}_{\text{UCB}}^{\lambda}$. We find there the origin of the two different orders of $N_{a}$ ($\nicefrac{1}{2}$ and $1$). Taken independently, those lead to a contribution of respectively $\mathcal{O}(d_{\mathit{eff}}\log(T))$ and $\mathcal{O}(\sqrt{d_{\mathit{eff}}T})$ . More precisely, we have:
\begin{align*}
    R(T,\pi_{\text{UCB}}|\neg \mathcal{H}_{\text{UCB}}^{\lambda}) &\leq 2 \sqrt{6\sigma^{2}\log(T)} \sum_{t=1}^{T}\sqrt{\frac{1}{N_{a_{t}}(t-1)+\lambda}} + 2\lambda\|\theta^{\star}\|_{\infty} \sum_{t=1}^{T}\frac{1}{N_{a_{t}}(t-1)+\lambda} \\
    &= 2 \sqrt{6\sigma^{2}\log(T)}\mathbb{V}_{\frac{1}{2}}(T,\pi_{\text{UCB}}) + 2\lambda\|\theta^{\star}\|_{\infty}\mathbb{V}_{1}(T,\pi_{\text{UCB}}).
\end{align*}
Therefore, thanks to Lemma \ref{Fail Optim Finite}, we deduce that:
\begin{align*}
    R(T,\pi_{\text{UCB}}) &\leq (1-\mathbb{P}(\mathcal{H}_{\text{UCB}}^{\lambda}))R(T,\pi_{\text{UCB}}|\neg \mathcal{H}_{\text{UCB}}^{\lambda}) + \mathbb{P}(\mathcal{H}_{\text{UCB}}^{\lambda})\Delta_{max}T\\
    &\leq 2 \sqrt{6\sigma^{2}\log(T)}\mathbb{V}_{\frac{1}{2}}(T,\pi_{\text{UCB}}) + 2\lambda\|\theta^{\star}\|_{\infty}\mathbb{V}_{1}(T,\pi_{\text{UCB}}) + \frac{2d\Delta_{max}}{T}.
\end{align*}
Finally, we conclude that:
\begin{align*}
    \mathbb{E}[R(T,\pi_{\text{UCB}})] \leq 2 \sqrt{6\sigma^{2}\log(T)}\mathbb{E}[\mathbb{V}_{\frac{1}{2}}(T,\pi_{\text{UCB}})] + 2\lambda\|\theta^{\star}\|_{\infty}\mathbb{E}[\mathbb{V}_{1}(T,\pi_{\text{UCB}})] + \frac{2d\Delta_{max}}{T}.
\end{align*}
\end{proof}
\subsection{Statement and Proof of Lemma \ref{Fail Optim Finite}}

The main step in this reduction from regret to cumulative censored potential is the study of the \textit{failure of optimism} event thanks to the following result: 
\begin{lemma}\label{Fail Optim Finite} For a regularization $\lambda>0$ and $\delta \in ]0,1]$, we introduce the event:
\begin{align*}
    \mathcal{H}_{\text{UCB}}^{\lambda} = \Big\{\exists a \in [d], t\in[T],  |\hat{\theta}^{\lambda}_{t}(a) -\theta^{\star}_{a}| > \sqrt{\frac{6\sigma^{2}\log(T)}{\lambda+ N_{a}(t)}} + \frac{\lambda\|\theta^{\star}\|_{\infty}}{\lambda+ N_{a}(t)}\Big\}.
\end{align*}
We then have $\mathbb{P}(\mathcal{H}_{\text{UCB}}^{\lambda}) \leq \frac{2d}{T^{2}}$.
\end{lemma}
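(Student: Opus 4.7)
The plan is to split the estimation error $\hat{\theta}^{\lambda}_{t}(a) - \theta^{\star}_{a}$ into a deterministic regularization bias and a stochastic noise term, and control each separately. Using the expression for $\hat{\theta}^{\lambda}_{t}$ given in App.\ref{UCB-algo}, we write
\begin{align*}
\hat{\theta}^{\lambda}_{t}(a) - \theta^{\star}_{a} = \underbrace{-\frac{\lambda\,\theta^{\star}_{a}}{N_{a}(t)+\lambda}}_{=:B_{t}^{a}} + \underbrace{\frac{1}{N_{a}(t)+\lambda}\sum_{\tau=1}^{t}\epsilon_{\tau}\mathbf{1}\{a_{\tau}=a,\,x_{a_{\tau}}=1\}}_{=:Z_{t}^{a}}.
\end{align*}
The bias is bounded deterministically by $|B_{t}^{a}|\le \lambda\|\theta^{\star}\|_{\infty}/(N_{a}(t)+\lambda)$, matching exactly the second term in the confidence width. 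Thus it suffices to prove that the noise term $|Z_{t}^{a}|$ exceeds $\sqrt{6\sigma^{2}\log(T)/(N_{a}(t)+\lambda)}$ only with probability at most $2d/T^{2}$ uniformly in $a$ and $t$.

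For the noise, the key obstacle is that $N_{a}(t)$ is itself random, coupling with the action-selection and censorship processes, so a direct Hoeffding bound does not apply. I would resolve this by conditioning on the trajectory of actions and censorship realizations $\mathcal{T}\triangleq\{(a_{\tau},x_{a_{\tau}})\}_{\tau=1}^{T}$. Since the noise sequence $\{\epsilon_{\tau}\}$ is i.i.d.\ sub-Gaussian with parameter $\sigma^{2}$ and independent of $\mathcal{T}$, the sum $S_{t}^{a}\triangleq\sum_{\tau=1}^{t}\epsilon_{\tau}\mathbf{1}\{a_{\tau}=a,x_{a_{\tau}}=1\}$ becomes, conditionally on $\mathcal{T}$, a sum of $N_{a}(t)$ independent $\sigma^{2}$-sub-Gaussian variables. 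Azuma--Hoeffding then yields, for every fixed $t$ and $a$,
\begin{align*}
\mathbb{P}\bigl(|S_{t}^{a}|>\sqrt{6\sigma^{2}N_{a}(t)\log T}\,\bigm|\,\mathcal{T}\bigr) \le 2\exp(-3\log T)=\frac{2}{T^{3}},
\end{align*}
with the trivial bound when $N_{a}(t)=0$. Dividing through by $N_{a}(t)+\lambda$ and using $\sqrt{N_{a}(t)}\le\sqrt{N_{a}(t)+\lambda}$ converts this into the required threshold $\sqrt{6\sigma^{2}\log(T)/(N_{a}(t)+\lambda)}$ on $|Z_{t}^{a}|$.

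A union bound over the $T$ rounds and the $d$ arms, followed by taking expectation over $\mathcal{T}$, gives $\mathbb{P}(\mathcal{H}_{\text{UCB}}^{\lambda})\le 2dT\cdot T^{-3}=2d/T^{2}$, which is the desired conclusion. I expect the main delicate point to be the measurability/independence argument justifying the conditioning on $\mathcal{T}$: one must check that $x_{a_{\tau}}$ (drawn after $a_{\tau}$) and $\epsilon_{\tau}$ are independent conditional on past history, so that the selected noise subsequence is genuinely an i.i.d.\ sub-Gaussian sample under the conditioning. Once this is in place the concentration step is routine.
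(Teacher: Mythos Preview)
Your decomposition into bias plus noise and the final arithmetic $dT\cdot 2T^{-3}=2d/T^{2}$ match the paper. The gap is exactly at the step you flagged as delicate, but the obstruction is not the one you name. The problem is not whether $x_{a_\tau}$ and $\epsilon_\tau$ are conditionally independent given the past; it is that the full trajectory $\mathcal{T}=\{(a_\tau,x_{a_\tau})\}_{\tau\le T}$ is \emph{not} independent of the noise sequence, because the UCB action $a_{\tau+1}$ is a deterministic function of the observed rewards up to time $\tau$, hence of $\epsilon_1,\dots,\epsilon_\tau$. Conditioning on $\mathcal{T}$ therefore leaks information about every $\epsilon_\tau$ through the \emph{future} actions, and the conditional law of $S_t^a$ is no longer that of a sum of $N_a(t)$ fresh sub-Gaussians. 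Your conditional Azuma--Hoeffding step does not go through as written.

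The paper circumvents this by union-bounding over the \emph{value} $k\in[T]$ taken by $N_a(t)$ rather than over the time index $t$, combined with the standard stack-of-rewards re-indexing: since the $\epsilon_\tau$ are i.i.d., one may pre-draw a sequence $(\epsilon^a_1,\epsilon^a_2,\dots)$ per arm and reveal $\epsilon^a_k$ at the $k$-th realized pull of $a$. Then the event $\bigl\{\exists t:\,N_a(t)=k,\,|S_t^a|>\sqrt{6\sigma^2(k+\lambda)\log T}\bigr\}$ is contained in the non-adaptive event $\bigl\{|\sum_{l=1}^{k}\epsilon^a_l|>\sqrt{6\sigma^2(k+\lambda)\log T}\bigr\}$, to which Hoeffding applies directly and yields $2/T^{3}$ irrespective of how the algorithm reached $N_a(t)=k$. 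Summing over $a\in[d]$ and $k\in[T]$ gives $2d/T^{2}$. Replacing your conditioning on $\mathcal{T}$ by this union over $k$ plus re-indexing is what closes the gap.
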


\begin{proof}
Although this event is similar to the one introduced in the classical UCB proof idea, the subtlety comes from the randomness induced by the censorship as well as the impact of regularization. The main idea is adopt a worst-case agnostic approach. First, let's note that for a given $t \in [T],a \in [d]$, we have:
\begin{align*}
    |\hat{\theta}^{\lambda}_{t}(a) -\theta^{\star}_{a}| &= |\frac{1}{N_{a}(t) +\lambda}\sum_{\tau=1}^{t} \epsilon_{\tau} \mathbf{1}\{a_{\tau}=a, x_{a_{\tau}}=1\} -\frac{\lambda}{N_{a}(t)+\lambda}\theta^{\star}_{a}| \\
    &\leq |\frac{1}{N_{a}(t) +\lambda}\sum_{\tau=1}^{t} \epsilon_{\tau} \mathbf{1}\{a_{\tau}=a, x_{a_{\tau}}=1\}| + \frac{\lambda}{N_{a}(t) +\lambda}\|\theta^{\star}\|_{\infty} .
\end{align*}
Therefore, for a given $a\in[d], t\in [T]$, by introducing the event $\mathcal{B}_{(t,a)}\triangleq\Big\{|\hat{\theta}^{\lambda}_{t}(a) -\theta^{\star}_{a}| > \sqrt{\frac{6\sigma^{2}\log(T)}{\lambda+ N_{a}(t)}} + \frac{\lambda\|\theta^{\star}\|_{\infty}}{\lambda+ N_{a}(t)}\Big\}$, we deduce:
\begin{align*}
    \mathcal{B}_{(t,a)} &\subset \Big\{|\frac{1}{N_{a}(t) +\lambda}\sum_{\tau=1}^{t} \epsilon_{\tau} \mathbf{1}\{a_{\tau}=a, x_{a_{\tau}}=1\}| + \frac{\lambda}{N_{a}(t) +\lambda}\|\theta^{\star}\|_{\infty}  > \sqrt{\frac{6\sigma^{2}\log(T)}{\lambda+ N_{a}(t)}} + \frac{\lambda\|\theta^{\star}\|_{\infty}}{\lambda+ N_{a}(t)}\Big\} \\
     &\subset \Big\{|\frac{1}{N_{a}(t) +\lambda}\sum_{\tau=1}^{t} \epsilon_{\tau} \mathbf{1}\{a_{\tau}=a, x_{a_{\tau}}=1\}|> \sqrt{\frac{6\sigma^{2}\log(T)}{\lambda+ N_{a}(t)}} \Big\}.
\end{align*}
Then, we have:
\begin{align*}
    \mathbb{P}(\mathcal{H}_{\text{UCB}}^{\lambda}) &= \displaystyle \mathbb{P}\Big(\bigcup_{a\in[d]}\bigcup_{t\in[T]} \mathcal{B}_{(t,a)}\Big)\\
    &\leq \mathbb{P}\Big(\bigcup_{a\in[d]}\bigcup_{t\in[T]}\Big\{|\frac{1}{N_{a}(t) +\lambda} \sum_{\tau=1}^{t} \epsilon_{\tau} \mathbf{1}\{a_{\tau}=a, x_{a_{\tau}}=1\}|> \sqrt{\frac{6\sigma^{2}\log(T)}{\lambda+ N_{a}(t)}} \Big\}\Big)\\
    &\leq \sum_{a\in[d]} \mathbb{P}\Big(\bigcup_{t\in[T]} \Big\{|\frac{1}{N_{a}(t) +\lambda} \sum_{\tau=1}^{t} \epsilon_{\tau} \mathbf{1}\{a_{\tau}=a, x_{a_{\tau}}=1\}|> \sqrt{\frac{6\sigma^{2}\log(T)}{\lambda+ N_{a}(t)}} \Big\}\Big) \\
    & = \sum_{a\in[d]}\mathbb{P}\Big(\bigcup_{k\in[T]} \bigcup_{t\in[T]} \Big\{|\frac{1}{N_{a}(t) +\lambda} \sum_{\tau=1}^{t} \epsilon_{\tau} \mathbf{1}\{a_{\tau}=a, x_{a_{\tau}}=1\}|^{2}> \frac{6\sigma^{2}\log(T)}{\lambda+ N_{a}(t)}; N_{a}(t)=k\Big\}\Big)\\
    & = \sum_{a\in[d]}\sum_{k\in[T]}\mathbb{P}(N_{a}(t)=k)\mathbb{P}\Big(\bigcup_{t\in[T]} \Big\{ |\frac{1}{k +\lambda} \sum_{\tau=1}^{t} \epsilon_{\tau} \mathbf{1}\{a_{\tau}=a, x_{a_{\tau}}=1\}|^{2} >\frac{6\sigma^{2}\log(T)}{k}\Big| N_{a}(t)=k\Big\}\Big)\\
    & \leq \sum_{a\in[d]} \sum_{k\in[T]}\mathbb{P}\Big(\bigcup_{t\in[T]}\Big\{|\frac{1}{k +\lambda} \sum_{\tau=1}^{t} \epsilon_{\tau} \mathbf{1}\{a_{\tau}=a, x_{a_{\tau}}=1\}|^{2} >\frac{6\sigma^{2}\log(T)}{\lambda+k}\Big| N_{a}(t)=k\Big\}\Big) \\
    &= \sum_{a\in[d]} \sum_{k\in[T]}\mathbb{P}\Big(|\frac{\sum_{l=1}^{k} \epsilon_{l}}{k +\lambda}|^{2} >\frac{6\sigma^{2}\log(T)}{\lambda+k}\Big),
\end{align*}
where we successively used union bounds over the action set and number of realizations and conditioned over number of realizations $k$. We re-indexed the random sub-Gaussian variables $(\epsilon_{t})$ for last expression thanks to the i.i.d property. Then, for a given $k$, using Hoeffding inequality for sub-Gaussian variables, we have:
\begin{align*}
    \mathbb{P}\Big(|\frac{\sum_{l=1}^{k}\epsilon_{l}}{k+\lambda}|^{2}>\frac{6\sigma^{2}\log(T)}{k+\lambda}\Big) &= \mathbb{P}\Big(|\sum_{l=1}^{k}\epsilon_{l}|>\sqrt{6\sigma^{2}(k+\lambda)\log(T)}\Big)
    &\leq 2\exp\{-\frac{6\sigma^{2}(k+\lambda)\log(T)}{2k\sigma^{2}}\} \\
    &\leq \frac{2}{T^{3}}
\end{align*}
where the used that fact that $\sum_{l=1}^{k}\epsilon_{l}$ is sub-Gaussian of pseudo-variance parameter $k\sigma^{2}$
Therefore, this yields:
\begin{align*}
    \sum_{a\in[d]} \sum_{k\in[T]} \mathbb{P}\Big(|\frac{\sum_{l=1}^{k}\epsilon_{l}}{k+\lambda}|^{2}>\frac{6\sigma^{2}\log(T)}{k+\lambda}\Big) \leq \frac{2d}{T^{2}}.
\end{align*}
Finally, we conclude that $\mathbb{P}(\mathcal{H}_{\text{UCB}}^{\lambda}) \leq \frac{2d}{T^{2}}$. 
\end{proof}
\begin{remark}\label{Tails}
We note that assuming tails distribution for the reward noise $\epsilon$ of the form:
\begin{align*}
    \mathbb{P}\left(\epsilon \geq x\right) \leq\exp\Big\{\frac{-x^{1+q}}{2\sigma^{2}}\Big\}
\end{align*}
for a given $q>0$, as suggested for instance in \cite{ZhouGLM}, would lead the use of the confidence interval:
\begin{align*}
    \mathcal{H}_{\text{UCB}}^{\lambda,q} = \Big\{\exists a \in [d], t\in[T],  |\hat{\theta}^{\lambda}_{t}(a) -\theta^{\star}_{a}| > \Big(6\sigma^{2}\log(T)\Big)^{\frac{1}{1+q}}\Big(\lambda+ N_{a}(t)\Big)^{-\frac{q}{1+q}} + \frac{\lambda\|\theta^{\star}\|_{\infty}}{\lambda+ N_{a}(t)}\Big\}.
\end{align*}
Indeed, the same reasoning as above would then yield:
\begin{align*}
    \mathbb{P}\Big(|\frac{\sum_{l=1}^{k}\epsilon_{l}}{k+\lambda}|>(6\sigma^{2}\log(T))^{\frac{1}{1+q}}(k+\lambda)^{-\frac{a}{1+q}}\Big) &= \mathbb{P}\Big(|\sum_{l=1}^{k}\epsilon_{l}|>(6\sigma^{2}(k+\lambda)\log(T))^{\frac{1}{1+q}}\Big)\\ &\leq 2\exp\{-\frac{6\sigma^{2}(k+\lambda)\log(T)}{2k\sigma^{2}})\} \leq \frac{2}{T^{3}}
\end{align*}
and therefore $\mathbb{P}(\mathcal{H}_{\text{UCB}}^{\lambda,q})\leq \frac{2d}{T^{2}}$. For $q=1$, we recover the sub-Gaussian case, which in turns lead to the study of $\mathbb{V}_{1/2}$, as done in Lemma \ref{Potential Reduction Finite}. For general $q>0$, we would would then consider $\mathbb{V}_{q/(1+q)}$, which lead to the upper bound $\mathcal{O}(d_{\mathit{eff}}^{q/(1+q)}T^{1/(1+q)})$ through the use of Prop. \ref{Potential Control Finite}.
\end{remark}
\subsection{Statement and Proof of Lemma \ref{Derando Censo Linear}}

\begin{lemma}\label{Derando Censo Linear}  For any $\delta \in ]0,1]$, $\lambda>0$ and censorship model, let's introduce the event:
\begin{align*}
    \mathcal{H}^{I}_{\text{CEN}}(\delta) &= \left\{\exists a \in [d], t\in[T],N_{a}(t) < (1-\delta)p_{a}\tau_{a}(t) \quad \text{and} \quad \tau_{a}(t) \geq T_{0}(a) \right\},
\end{align*}
where $T_{0}(a)\triangleq 24\log(T)/p_{a}+1$. We then have $\mathbb{P}(\mathcal{H}^{I}_{\text{CEN}}(\delta)) \leq \frac{4d_{\mathit{eff}}}{\delta^{2}}T^{-12\delta^{2}}$.
\end{lemma}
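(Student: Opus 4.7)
The plan is to exploit the independence between the censorship Bernoullis and the action-selection process in order to de-randomize the censorship. Concretely, for each arm $a \in [d]$, I would introduce an external sequence $(B^{(a)}_i)_{i \geq 1}$ of i.i.d.\ $\mathrm{Bernoulli}(p_a)$ variables, independent of everything else in the problem; the $i$-th pull of arm $a$ then uses $B^{(a)}_i$ as its censorship realization. With this coupling, one has $N_a(t)=\sum_{i=1}^{\tau_a(t)} B^{(a)}_i$. Since $N_a(t)$ and $\tau_a(t)$ only change simultaneously (namely at rounds where $a_t=a$), the event $\mathcal{H}^{I}_{\text{CEN}}(\delta)$ is equivalent to
\begin{align*}
\bigcup_{a \in [d]} \bigcup_{k = \lceil T_0(a)\rceil}^{T} \Bigl\{ \textstyle\sum_{i=1}^{k} B^{(a)}_i < (1-\delta)\, p_a k \Bigr\},
\end{align*}
an event whose probability no longer depends on the policy $\pi$.

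Next I would apply the multiplicative Chernoff inequality to each fixed pair $(a,k)$: for $X \sim \mathrm{Bin}(k, p_a)$ and $\delta \in (0,1]$, $\mathbb{P}(X < (1-\delta)\, p_a k) \leq \exp(-\delta^2 p_a k / 2)$. Summing the resulting geometric series from $k = \lceil T_0(a) \rceil$ upwards yields a bound of $\exp(-\delta^2 p_a T_0(a)/2) \cdot (1-\exp(-\delta^2 p_a / 2))^{-1}$. The choice $T_0(a) = 24\log(T)/p_a + 1$ makes the leading factor at most $T^{-12\delta^{2}}$, which explains the exponent in the claim. For the normalizing denominator, since $\delta^2 p_a/2 \leq 1/2$, the elementary bound $1-e^{-x} \geq x/2$ on $[0,1]$ gives $(1-\exp(-\delta^2 p_a/2))^{-1} \leq 4/(\delta^2 p_a)$. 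A final union bound over $a \in [d]$ then produces
\begin{align*}
\mathbb{P}(\mathcal{H}^{I}_{\text{CEN}}(\delta)) \;\leq\; \sum_{a \in [d]} \frac{4}{\delta^2 p_a} T^{-12\delta^{2}} \;=\; \frac{4 d_{\mathit{eff}}}{\delta^2}\, T^{-12\delta^{2}},
\end{align*}
matching the desired bound.

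The main subtlety lies in the first step: the reduction to a policy-independent event requires that, although $\tau_a(t)$ is a random time built from the entire history (feedbacks, past censorship outcomes, policy randomization), the prefix $(B^{(a)}_1,\dots,B^{(a)}_{\tau_a(t)})$ still has the distribution of a fresh Bernoulli sample of the same length. This holds because each $B^{(a)}_i$ is drawn independently of all prior randomness at the round where arm $a$ is pulled for the $i$-th time, so $\tau_a(t)$ acts as a stopping time in a filtration with respect to which the $B^{(a)}_i$ remain i.i.d. This decoupling is also exactly what removes the otherwise unavoidable additional factor of $T$ in the union bound, thereby improving on \cite{stoch_unrest_delay}; peeling over $t$ rather than over $k$ would instead yield a $T$-sized union and a strictly worse dependency.
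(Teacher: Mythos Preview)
Your proposal is correct and follows essentially the same route as the paper: union bound over arms and over the possible values $k$ of $\tau_a(t)$, multiplicative Chernoff for each fixed $(a,k)$, then summation of the resulting exponentials. The only cosmetic differences are that (i) you make the decoupling explicit via an external i.i.d.\ Bernoulli sequence, whereas the paper simply conditions on $\{\tau_a(t)=k_a\}$ and invokes Chernoff directly, and (ii) you sum the tail as a geometric series using $1-e^{-x}\ge x/2$, whereas the paper bounds it by an integral comparison; both yield the same $\frac{4}{\delta^2 p_a}T^{-12\delta^2}$ per arm and hence the stated $\frac{4d_{\mathit{eff}}}{\delta^2}T^{-12\delta^2}$ after summing over $a$.
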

\begin{proof}
First, we apply successively two unions bounds over the action set and the number of realizations, mirroring the analysis of \cite{stoch_unrest_delay}:
\begin{align*}
    \mathbb{P}(\mathcal{H}^{I}_{\text{CEN}}(\delta)) &\leq \sum_{a\in [d]} \mathbb{P}\Big(\Big\{\exists t\in[T],  \tau_{a}(t) \geq T_{0}(a),N_{a}(t) < (1-\delta)p_{a}\tau_{a}(t) \Big\}\Big) \\
    &= \sum_{a\in [d]} \mathbb{P}\Big(\bigcup_{k_{a}\in[T_{0}(a),T]} \bigcup_{t\in[T]}\Big\{\tau_{a}(t) \geq T_{0}(a),N_{a}(t) < (1-\delta)p_{a}\tau_{a}(t), \tau_{a}(t) = k_{a} \Big\}\Big) \\
    &\leq \sum_{a\in [d]} \sum_{k_{a}\geq T_{0}(a)}\mathbb{P}\Big(\bigcup_{t\in[T]}\Big\{N_{a}(t) < (1-\delta)p_{a}\tau_{a}(t)\Big|\tau_{a}(t) = k_{a}\Big\}\Big).
\end{align*}
We then use a multiplicative Chernoff inequality for Binomial Distribution to deduce:
\begin{align*}
    \sum_{a\in[d]} \sum_{k_{a}\geq T_{0}(a)} \mathbb{P}\Big(N_{a}(t) &< (1-\delta)p_{a}\tau_{a}(t)\Big|\tau_{a}(t)=k_{a}\Big) \leq \sum_{a\in[d]} \sum_{k_{a}\geq T_{0}(a)} \exp\{-\frac{\delta^{2}k_{a}p_{a}}{2}\}. 
\end{align*}
The novelty of our proof is to leverage a integral comparison to deduce the improved control:
\begin{align*}
    \sum_{a\in[d]} \sum_{k_{a}\geq T_{0}(a)} \exp\{-\frac{\delta^{2}k_{a}p_{a}}{2}\} &\leq 2 \sum_{a\in[d]} \left[-\frac{2}{\delta^{2}p_{a}}\exp\{-\frac{\delta^{2}k_{a}p_{a}}{2}\}\right]^{\tau_{a}(t)}_{T_{0}(a)-1} \\
    &\leq \frac{4}{\delta^{2}}d_{\mathit{eff}}\frac{1}{T^{12\delta^{2}}} - \frac{4}{\delta^{2}}\sum_{a\in[d]}\frac{1}{p_{a}}\exp\{-\frac{\delta^{2}\tau_{a}(t)p_{a}}{2}\} \leq \frac{4}{\delta^{2}}d_{\mathit{eff}}\frac{1}{T^{12\delta^{2}}}.
\end{align*}
Picking for instance $\delta = \frac{1}{2}$ yields $\mathbb{P}(\mathcal{H}^{I}_{\text{CEN}}(\frac{1}{2})) \leq \frac{16d_{\mathit{eff}}}{T^{3}}$. 
\end{proof}
\subsection{Proof of Lemma \ref{Optimization Lemma Finite}}

\OptimizationLemmaFinite*

\begin{proof}
We first introduce the Lagrangian of the problem $\mathcal{L}(\tau_{1},\dots,\tau_{d},\mu):= \sum_{a\in[d]} \frac{1}{p_{a}}\Big(\psi_{\alpha}(p_{a}\tau_{a}+\lambda_{a})-\psi_{\alpha}(\lambda_{a})\Big) + \mu(T-\sum_{a\in[d]}\tau_{a})$. Differentiating with respect to $\tau_{a}$ for all $a\in[d]$ yields the equations:
\begin{align*}
    \frac{1}{(p_{a}\tau_{a}+\lambda_{a})^{\alpha}} - \mu = 0.
\end{align*}
We then write it equivalently as:
\begin{align*}
    \tau_{a} = \frac{1}{p_{a}}[\mu^{-1/\alpha} - \lambda_{a}].
\end{align*}
However, since $(\tau_{a})$ must be nonnegative, it may not always be possible to find a solution of this form. We then verify using KKT conditions that the solution:
\begin{align*}
    \tau_{a} = \frac{1}{p_{a}}[C - \lambda_{a}]^{+},
\end{align*}
where $C$ ensures the total budget constraint $ \sum_{a\in[d]}\tau^{\star}_{a}=T$, is optimal. In particular, whenever $T\geq \max_{a}\lambda_{a}^{0}$, we recover the solution provided in the second part the Lemma. 
\end{proof}
\subsection{Proof of Prop. \ref{Potential Control Finite} }

\PotentialControlFinite*

\begin{proof}
For a given $\alpha\in]0,1]$, we condition on the event $\mathcal{H}^{I}_{\text{CEN}}(\delta)$ introduced in Lemma \ref{Derando Censo Linear} and consider the cases $\tau_{a}(t)\geq T_{0}(a)$ and $\tau_{a}(t)<T_{0}(a)$. This yields for any policy $\pi \in \Pi$:
\begin{align*}
    \mathbb{V}_{\alpha}(T,\pi|\mathcal{H}^{I}_{\text{CEN}}(\delta)) &\leq \frac{\sum_{a\in[d]}T_{0}(a)}{\lambda^{\alpha}} + \sum_{t=1}^{T}\left((1-\delta)p_{a_{t}}\tau_{a_{t}}(t-1)+\lambda\right)^{-\alpha}\\
    &\leq \frac{24 d_{\mathit{eff}}\log(T)+d}{\lambda^{\alpha}} + \frac{1}{(1-\delta)^{\alpha}}\sum_{t=1}^{T}\left(p_{a_{t}}\tau_{a_{t}}(t-1)+\frac{\lambda}{1-\delta}\right)^{-\alpha}\\
    &\leq \frac{24 d_{\mathit{eff}}\log(T)+d}{\lambda^{\alpha}} + \frac{1}{(1-\delta)^{\alpha}} \sum_{a\in[d]}\int_{0}^{\tau_{a}(T)}\left(p_{a}u+\frac{\lambda}{1-\delta}\right)^{-\alpha}\partial u\\
    &= \frac{24 d_{\mathit{eff}}\log(T)+d}{\lambda^{\alpha}} + \frac{1}{(1-\delta)^{\alpha}} \sum_{a\in[d]}\frac{1}{p_{a}}[\psi_{\alpha}(p_{a}\tau_{a}(T)+\frac{\lambda}{1-\delta})-\psi_{\alpha}(\frac{\lambda}{1-\delta})].
\end{align*}
We then apply the Lemma \ref{Optimization Lemma Finite} with constant $\Tilde{\lambda} \triangleq \lambda/(1-\delta)$ to deduce:
\begin{align*}
    \max_{\pi \in \Pi} \mathbb{V}_{\alpha}(T,\pi|\neg \mathcal{H}^{I}_{\text{CEN}}(\delta)) \leq \frac{24 d_{\mathit{eff}}\log(T)+d}{\lambda^{\alpha}} + \frac{d_{\mathit{eff}}}{(1-\delta)^{\alpha}}\Big[\psi_{\alpha}(\frac{T}{d_{\mathit{eff}}}+\frac{\lambda}{1-\delta}) - \psi_{\alpha}(\frac{\lambda}{1-\delta})\Big].
\end{align*}
Then, we conclude thanks to Lemma \ref{Derando Censo Linear} that:
\begin{align*}
    \max_{\pi \in \Pi} \mathbb{E}[\mathbb{V}_{\alpha}(T,\pi)] &\leq \mathbb{P}(\neg \mathcal{H}^{I}_{\text{CEN}}(\delta))\max_{\pi \in \Pi} \mathbb{V}_{\alpha}(T,\pi|\neg \mathcal{H}^{I}_{\text{CEN}}(\delta)) + (1-\mathbb{P}(\neg \mathcal{H}^{I}_{\text{CEN}}(\delta)))\frac{1}{\lambda^{\alpha}} \\
    &\leq \frac{1}{(1-\delta)^{\alpha}}d_{\mathit{eff}}\left[\psi_{\alpha}(\frac{T}{d_{\mathit{eff}}}+\frac{\lambda}{1-\delta}) - \psi_{\alpha}(\frac{\lambda}{1-\delta})\right] + \frac{24d_{\mathit{eff}}\log(T)+d}{\lambda^{\alpha}} \\&+ \frac{4}{\delta^{2}}d_{\mathit{eff}}\frac{1}{\lambda^{\alpha}T^{12\delta^{2}}}.
\end{align*}
In particular, for $\alpha =1$ and $\delta = \frac{1}{2}$, this involves:
\begin{align*}
    \max_{\pi \in \Pi} \mathbb{E}[\mathbb{V}_{1}(T,\pi)] \leq 2d_{\mathit{eff}}\log(\frac{T}{2\lambda}+1) + \frac{24d_{\mathit{eff}}\log(T)+d}{\lambda} + 16d_{\mathit{eff}}\frac{1}{\lambda T^{2}},
\end{align*}
and for $\alpha=\frac{1}{2}$ and $\delta = \frac{1}{2}$, this yields:
\begin{align*}
    \max_{\pi \in \Pi} \mathbb{E}[\mathbb{V}_{\frac{1}{2}}(T,\pi)] \leq\sqrt{2}d_{\mathit{eff}}\left[\sqrt{\frac{T}{d_{\mathit{eff}}}+2\lambda} - \sqrt{2\lambda}\right] + \frac{24d_{\mathit{eff}}\log(T)+d}{\sqrt{\lambda}} + 16d_{\mathit{eff}}\frac{1}{\sqrt{\lambda}T^{2}}.
\end{align*}
\end{proof}
\subsection{Proof of Thm. \ref{THM Finite arms}}

\THMFinitearms*

\begin{proof}
We first apply Lemma \ref{Potential Reduction Finite} to deduce:
\begin{align*}
     \mathbb{E}[R(T,\pi_{\text{UCB}})] &\leq 2 \sqrt{6\sigma^{2}\log(T)}\mathbb{E}[\mathbb{V}_{\frac{1}{2}}(T,\pi_{\text{UCB}})] + 2\lambda\|\theta^{\star}\|_{\infty}\mathbb{E}[\mathbb{V}_{1}(T,\pi_{\text{UCB}})] + \frac{2d\Delta_{max}}{T} \\
     &\leq 2 \sqrt{6\sigma^{2}\log(T)}\max_{\pi \in \Pi}\mathbb{E}[\mathbb{V}_{\frac{1}{2}}(T,\pi)] + 2\lambda\|\theta^{\star}\|_{\infty}\max_{\pi \in \Pi}\mathbb{E}[\mathbb{V}_{1}(T,\pi)] + \frac{2d\Delta_{max}}{T}.
\end{align*}
We then apply proposition \ref{Potential Control Finite}, with $\delta =1/2$ in order to deduce:
\begin{align*}
    \mathbb{E}[R(T,\pi_{\text{UCB}})] &\leq 2 \sqrt{6\sigma^{2}\log(T)}\Big(\sqrt{2}d_{\mathit{eff}}\Big[\sqrt{\frac{T}{d_{\mathit{eff}}}+2\lambda} - \sqrt{2\lambda}\Big] + \frac{24d_{\mathit{eff}}\log(T)+d}{\sqrt{\lambda}} + 16d_{\mathit{eff}}\frac{1}{\sqrt{\lambda}T^{2}}\Big) \\
    &+ 2\lambda\|\theta^{\star}\|_{\infty}\Big(2d_{\mathit{eff}}\log\Big(\frac{T}{2\lambda}+1\Big) + \frac{24d_{\mathit{eff}}\log(T)+d}{\lambda^{\alpha}} + 16d_{\mathit{eff}}\frac{1}{\lambda T^{2}}\Big) + \frac{2d\Delta_{max}}{T}.
\end{align*}
By taking $\lambda = o(\log(T))$ and considering only the leading order, we conclude that:
\begin{align*}
    \mathbb{E}[R(T,\pi_{\text{UCB}})] \leq \Tilde{\mathcal{O}}( \sigma\sqrt{d_{\mathit{eff}}T}).
\end{align*}
Note that our proof easily allows to get high-probability bounds on regret instead of bounds on its expected value.
\end{proof}

\begin{remark}\label{Hetero Finite 1}
We now extend Thm. \ref{THM Finite arms} to heteroskedastic MAB. In this model, the pseudo-variance of the sub-Gaussian noisy reward is arm-dependent and denoted $\sigma_{a}$. Moreover, the value of $\sigma_{a}$ is known to the designer of the algorithm, that is, it can be used as a parameter for the UCB algorithm. We first apply a slightly modified version of Lemma \ref{Potential Reduction Finite} to deduce:
\begin{align*}
     \mathbb{E}[R(T,\pi_{\text{UCB}})] &\leq 2 \sqrt{6\log(T)}\mathbb{E}[\Bar{\mathbb{V}}_{\frac{1}{2}}(T,\pi_{\text{UCB}})] + 2\lambda\|\theta^{\star}\|_{\infty}\mathbb{E}[\mathbb{V}_{1}(T,\pi_{\text{UCB}})] + \frac{2d\Delta_{max}}{T}, 
\end{align*}
where for $\alpha>0$ and $\pi \in \Pi$, we introduced the variance-based cumulative potential:
\begin{align*}
    \Bar{\mathbb{V}}_{\alpha}(T,\pi) = \sum_{t=1}^{T}(\frac{N_{a_{t}}(t-1)}{\sigma^{1/\alpha}_{a_{t}}}+\frac{\lambda}{\sigma^{1/\alpha}_{a_{t}}})^{-\alpha}.
\end{align*}
Thus, heteroskedasticity induces the mapping $\Breve{p_{a}}\equiv p_{a}/\sigma^{1/\alpha}_{a}$ and $\Breve{\lambda}_{a}\equiv \lambda/\sigma^{1/\alpha}_{a}$. Following the proof of Prop. \ref{Potential Control Finite}, we deduce for any $\alpha>0$ and time allocation $(\tau_{a}(T))_{a\in[d]}$:
\begin{align*}
     \mathbb{V}_{\alpha}(T,\pi|\mathcal{H}^{I}_{\text{CEN}}(\delta)) &\leq \frac{24 d_{\mathit{eff}}\log(T)+d}{\lambda^{\alpha}} + \frac{1}{(1-\delta)^{\alpha}} \sum_{a\in[d]}\frac{1}{\Breve{p_{a}}}[\psi_{\alpha}(\Breve{p_{a}}\tau_{a}(T)+\frac{\Breve{\lambda}_{a}}{1-\delta})-\psi_{\alpha}(\frac{\Breve{\lambda}_{a}}{1-\delta})].
\end{align*}
In order to apply Lemma \ref{Optimization Lemma Finite}, we introduce the notation:
\begin{align*}
    \Breve{d}_{\mathit{eff}} = \sum_{a\in[d]}\frac{\sigma^{1/\alpha}_{a}}{p_{a}}, \quad
    \Breve{\lambda}_{\mathit{eff}} = \frac{\lambda}{1-\delta} \frac{d_{\mathit{eff}}}{\Breve{d}_{\mathit{eff}}} \quad \text{and}\quad
    \Breve{\lambda}_{a}^{0} = \frac{\lambda\Breve{d}_{\mathit{eff}}}{1-\delta}(\frac{1}{\sigma^{1/\alpha}_{a}}- \frac{d_{\mathit{eff}}}{\Breve{d}_{\mathit{eff}}}).
\end{align*}
and we deduce that whenever $T\geq \max_{a} \Breve{\lambda}_{a}^{0}$, we have:
\begin{align*}
    \mathbb{V}_{\alpha}(T,\pi|\mathcal{H}^{I}_{\text{CEN}}(\delta)) &\leq \frac{24 d_{\mathit{eff}}\log(T)+d}{\lambda^{\alpha}} + \frac{1}{(1-\delta)^{\alpha}}\Big[\Breve{d}_{\mathit{eff}}\psi_{\alpha}(\frac{T}{\Breve{d}_{\mathit{eff}}}+\Breve{\lambda}_{\mathit{eff}})-\sum_{a\in[d]}\frac{\sigma^{1/\alpha}_{a}}{p_{a}}\psi_{\alpha}(\frac{\lambda}{(1-\delta)\sigma^{1/\alpha}_{a}})\Big].
\end{align*}
In particular, by considering the case $\alpha=\nicefrac{1}{2}$ and only the leading order, we deduce that:
\begin{align*}
    \mathbb{E}[R(T,\pi_{\text{UCB}})] \leq \Tilde{\mathcal{O}}\Big(\sqrt{\Breve{d}_{\mathit{eff}}T}\Big),
\end{align*}
where as affirmed $\Breve{d}_{\mathit{eff}}=\sum_{a\in[d]}\frac{\sigma_{a}^{2}}{p_{a}}$. 

\end{remark}

\subsection{Proof of Prop. \ref{Instance Dep Regret Finite}}

\InstanceDepRegretFinite*

\begin{proof} As in the proof of Lemma \ref{Potential Control Finite}, for a given round $t\in [T]$, we have under the event $\neg \mathcal{H}_{\text{UCB}}^{\lambda}$
\begin{align*}
    \Delta_{a} = \max_{\Tilde{a}\in\mathcal{A}}\theta^{\star}_{\Tilde{a}} - \theta^{\star}_{a} \leq 2\sqrt{6\sigma^{2} \frac{\log(T)}{N_{a_{t}}(t-1)+\lambda}} + 2\frac{\lambda \|\theta^{\star}\|_{\infty}}{\lambda  + N_{a_{t}}(t-1)}.
\end{align*}
It is as an inequality of the second degree and thus for any $t\in[T]$, $a\in [d]$:
\begin{align*}
    x_{1}\left(\sqrt{\frac{1}{\lambda+N_{a}(t)}}\right)^{2} + x_{2}\sqrt{\frac{1}{\lambda+N_{a}(t)}} - \Delta_{a}\geq 0,
\end{align*}
where $x_{1}=2\lambda \|\theta^{\star}\|_{\infty}$ and $x_{2}=2\sqrt{6\sigma^{2}\log(T)}$. Solving it yields:
\begin{align*}
    \sqrt{\frac{1}{\lambda+N_{a}(t)}} \geq \frac{1}{2x_{1}}(-x_{2}+\sqrt{x_{2}^{2}+4\Delta_{a}x_{1}}),
\end{align*}
or equivalently:
\begin{align*}
    N_{a}(T) &\leq \Big(\frac{4\lambda \|\theta^{\star}\|_{\infty}}{\sqrt{24\sigma^{2}\log(T)+8\Delta_{a}\lambda \|\theta^{\star}\|_{\infty}}-\sqrt{24\sigma^{2}\log(T)}}\Big)^{2}-\lambda \triangleq \Theta(T),
\end{align*}
where we used the notation $\Theta(T)$ to simplify the presentation.Therefore, under $\neg \mathcal{H}^{I}_{\text{CEN}}(\frac{1}{2})$, we have:
\begin{align*}
    \tau_{a}(t) \leq \max(T_{0}(a),\frac{2}{p_{a}}\Theta(T)).
\end{align*}
This yields a conditional regret of:
\begin{align*}
    R(T|&\neg(\mathcal{H}^{I}_{\text{CEN}}(\frac{1}{2})\cup \mathcal{H}^{\lambda}_{\text{UCB}})) \leq \sum_{a\in[d], a \neq a^{\star}}\Delta_{a}\tau_{a}(T) = \sum_{a\in[d], a \neq a^{\star}}\frac{2\Delta_{a}}{p_{a}}\max(12\log(T)+\frac{p_{a}}{2},\Theta(T)), 
\end{align*}
where $a^{\star}\triangleq \operatorname{argmax}_{\Tilde{a}\in\mathcal{A}}\theta^{\star}_{\Tilde{a}}$ and an expected regret of:
\begin{align*}
    \mathbb{E}[R(T,\pi_{\mathit{UCB}})] &\leq \sum_{a\in[d], a \neq a^{\star}}\frac{2\Delta_{a}}{p_{a}}\max(12\log(T)+\frac{p_{a}}{2},\Theta(T)) + \frac{d\Delta_{max}}{T} + \frac{16 d_{\mathit{eff}}\Delta_{max}}{T^{2}}.
\end{align*}
In particular, for the regularization $\lambda = o(\log(T))$, we have the asymptotic:
\begin{align*}
    \Theta(T) = \Big(\frac{4\lambda \|\theta^{\star}\|_{\infty}}{\sqrt{24\sigma^{2}\log(T)+8\Delta_{a}\lambda \|\theta^{\star}\|_{\infty}}-\sqrt{24\sigma^{2}\log(T)}}\Big)^{2} = \frac{24\sigma^{2}\log(T)}{\Delta_{a}^{2}} + \frac{8\lambda \|\theta^{\star}\|_{\infty}}{2\Delta_{a}} + o(1).
\end{align*}
And thus, we conclude that:
\begin{align*}
    \mathbb{E}[R(T,\pi_{\mathit{UCB}})] &\leq \mathcal{O}\Big(\log(T)\sum_{a\in[d], a \neq a^{\star}}\frac{1}{p_{a}}\max(\frac{\sigma^{2}}{\Delta_{a}},\Delta_{a})\Big).
\end{align*}
Again, note that our proof easily allows to get high-probability bounds on regret instead of bounds on its expected value.
\end{proof}
\begin{remark}\label{Hetero Finite 2}
As in the instance-independent case, previous reasoning immediately extends to a-priori known heteroskedasticity and yields the upper bound:
\begin{align*}
    \mathbb{E}[R(T,\pi_{\mathit{UCB}})] &\leq \mathcal{O}\Big(\log(T)\sum_{a\in[d], a \neq a^{\star}}\frac{1}{p_{a}}\max(\frac{\sigma_{a}^{2}}{\Delta_{a}},\Delta_{a})\Big).
\end{align*}
\end{remark}


Next, we provide additional insights to the main result of this section. In particular, we seek to gain intuition about how the policies that are adaptive to the realization of censorship process would perform in expectation against a class of non-adaptive (i.e. offline ) policies. In order to precisely derive asymptotic behavior of such policies, we introduce and study a continuous counterpart of the discrete original policy maximization problem $\max_{\pi \in \Pi}\mathbb{E}[\mathbb{V}_{\alpha}(T,\pi)]$.

Lemma \ref{asympt_off} provides the basis for continuous approach in the case of offline policies by leveraging concentration inequalities for inverse Binomial distribution. We then extend this approach in the proof of Prop. \ref{Monitoring AG}. This extension enables us to provide an exact expression for the asymptotic gain of a policy class that monitors the censorship at a single point in time, as well as estimate the gain from fully adaptive policies.

\subsection{Proof of Lemma \ref{asympt_off}}

\asymptoff*
\begin{proof}
Given the offline nature of the policy class, we have:
\begin{align*}
    \max_{\pi \in \Pi_{\text{off}}} \mathbb{E}[\mathbb{V}_{\alpha}(T,\pi)] = \max_{(\tau_{a})_{a\in[d]}}\sum_{a\in[d]}\mathbb{E}\Big[\sum_{n=1}^{\tau_{a}}\frac{1}{(X^{a}_{n-1}+\lambda)^{\alpha}})\Big] = \max_{(\tau_{a})_{a\in[d]}}\sum_{a\in[d]}\sum_{n=1}^{\tau_{a}}\mathbb{E}\Big[\frac{1}{(X^{a}_{n-1}+\lambda)^{\alpha}})\Big]
\end{align*}
where we have re-indexed $(N_{a}(t))$ by actions, where $(\tau_{a})_{a\in[d]}$ is a time allocation such that $\sum_{a\in[d]}\tau_{a}=T$ and where for a given action $a$, $(X^{a}_{n})_{n\leq \tau_{a}}$ are dependent random variables verifying $X_{n+1} = X^{a}_{n}+\mathcal{B}(p_{a})$ and $X^{a}_{n}\sim \mathcal{B}(n,p_{a})$. 

To lower bound this quantity, we fix a time allocation $(\tau_{a})_{a\in[d]}$ and use the fact that $x\mapsto x^{-\alpha}$ is convex with Jensen's inequality to deduce:
\begin{align*}
    \sum_{n=1}^{\tau_{a}}\mathbb{E}\Big[\frac{1}{(X^{a}_{n-1}+\lambda)^{\alpha}})\Big] &\geq \sum_{n=1}^{\tau_{a}}\frac{1}{(\mathbb{E}[X^{a}_{n-1}]+\lambda)^{\alpha}} = \sum_{n=1}^{\tau_{a}}\frac{1}{(p_{a}(n-1)+\lambda)^{\alpha}} = \frac{1}{\lambda^{\alpha}} + \sum_{n=1}^{\tau_{a}-1}\frac{1}{(p_{a}n+\lambda)^{\alpha}} 
\end{align*}
We then leverage the fact that $(px+\lambda)^{-\alpha}\geq \int_{x-1}^{x}(pu+\lambda)^{-\alpha}\partial u$ to deduce:
\begin{align*}
    \sum_{n=1}^{\tau_{a}}\mathbb{E}\Big[\frac{1}{(X^{a}_{n-1}+\lambda)^{\alpha}})\Big] \geq \frac{1}{\lambda^{\alpha}} + \sum_{a\in[d]}\int_{0}^{\tau_{a}-1}\frac{1}{(p_{a}x+\lambda)^{\alpha}}\partial x = \frac{1}{\lambda^{\alpha}} + \frac{1}{p_{a}}\Big[\psi_{\alpha}(p_{a}(\tau_{a}-1)+\lambda)-\psi_{\alpha}(\lambda)\Big]
\end{align*}
and therefore, for any time allocation $(\tau_{a})_{a\in[d]}$, we have:
\begin{align*}
    \max_{\pi \in \Pi_{\text{off}}} \mathbb{E}[\mathbb{V}_{\alpha}(T,\pi)] \geq \frac{d}{\lambda^{\alpha}} + \sum_{a\in[d]}\frac{1}{p_{a}}\Big[\psi_{\alpha}(p_{a}(\tau_{a}-1)+\lambda)-\psi_{\alpha}(\lambda)\Big]
\end{align*}
Although the maximum over time allocation is given by Lemma \ref{Optimization Lemma Finite}, we simply use the allocation $(\frac{T}{p_{a}d_{\mathit{eff}}})_{a\in[d]}$ to deduce:
\begin{align*}
    \max_{\pi \in \Pi_{\text{off}}} \mathbb{E}[\mathbb{V}_{\alpha}(T,\pi)] \geq  \sum_{a\in[d]}\frac{1}{p_{a}}\psi_{\alpha}(\frac{T}{d_{\mathit{eff}}}+\lambda-\frac{1}{p_{a}})+\frac{d}{\lambda^{\alpha}} -\sum_{a\in[d]}\frac{1}{p_{a}}\psi_{\alpha}(\lambda)
\end{align*}
By making the distinction between $\alpha=1$ and $\alpha<1$ to obtain the explicit expression of $\psi_{\alpha}$, we then show that the LHS is equivalent to $d_{\mathit{eff}}\psi_{\alpha}(\frac{T}{d_{\mathit{eff}}}+\lambda)$. The proof of the upper bound is more involved. In proving it, let's first assume that:
\begin{claim}\label{concentration Bin}
For all $a \in [d]$, $n\geq 1$, there exists a constant $C_{2}^{a}$ such that:
\begin{align*}
    \mathbb{E}\Big[\frac{1}{(X^{a}_{n}+\lambda)^{\alpha}})\Big] \leq (1+\frac{C_{2}^{a}}{(n p_{a})^{1/4}})\frac{1}{(p_{a}n+\lambda)^{\alpha}}
\end{align*}
\end{claim}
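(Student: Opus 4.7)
The plan is to bound $\mathbb{E}[(X^{a}_{n}+\lambda)^{-\alpha}]$ by splitting on a concentration event for the binomial random variable $X^{a}_{n}$ around its mean $\mu \triangleq np_{a}$, and then balancing the two resulting contributions. Fix a parameter $\delta \in (0,1)$ to be chosen later and let $A_{\delta} = \{X^{a}_{n} \geq (1-\delta)\mu\}$. On $A_{\delta}$, the deterministic lower bound on $X^{a}_{n}$ yields $(X^{a}_{n}+\lambda)^{-\alpha} \leq ((1-\delta)\mu+\lambda)^{-\alpha}$. On $A_{\delta}^{c}$, I would use the crude pointwise bound $(X^{a}_{n}+\lambda)^{-\alpha} \leq \lambda^{-\alpha}$ combined with the multiplicative Chernoff estimate $\mathbb{P}(A_{\delta}^{c}) \leq e^{-\delta^{2}\mu/2}$ already invoked in Lemma \ref{Derando Censo Linear}.

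Summing both contributions and dividing by $(\mu+\lambda)^{-\alpha}$ then gives
\[
\frac{\mathbb{E}[(X^{a}_{n}+\lambda)^{-\alpha}]}{(\mu+\lambda)^{-\alpha}} \leq \left(1+\frac{\delta\mu}{(1-\delta)\mu+\lambda}\right)^{\!\alpha} + \left(\frac{\mu+\lambda}{\lambda}\right)^{\!\alpha} e^{-\delta^{2}\mu/2}.
\]
Since $\alpha \in (0,1]$ the concavity bound $(1+x)^{\alpha}\leq 1+\alpha x$ together with $\delta\mu/((1-\delta)\mu+\lambda)\leq 2\delta$ (for $\delta\leq 1/2$) controls the first term by $1 + c\delta$ for a constant $c$ depending only on $\alpha$ and $\lambda$.

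The decisive step is the choice $\delta = \mu^{-1/4}$, valid once $\mu \geq 1$, which gives $\delta^{2}\mu = \mu^{1/2}$ so that the tail term decays as $\mu^{\alpha}e^{-\mu^{1/2}/2} = o(\mu^{-1/4})$, while the concentration term contributes at most $1 + c\mu^{-1/4}$. Both fit into a single constant $C_{2}^{a}$. The remaining regime $np_{a} < 1$ is trivial: the left-hand side is at most $\lambda^{-\alpha}$ and $(np_{a})^{-1/4}\geq 1$, so enlarging $C_{2}^{a}$ absorbs this case as well.

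The main difficulty is really the balancing step: any exponent larger than $-1/4$ in $\delta$ would spoil the exponential decay of $e^{-\delta^{2}\mu/2}$ and leave the tail term dominant, while any smaller exponent would slow the convergence of the concentrated term. The value $1/4$ is thus the natural compromise between the two competing error sources. Once the split $A_{\delta}/A_{\delta}^{c}$ and this choice of $\delta$ are fixed, the remaining calculations (Taylor expansion of $(1-\delta)^{-\alpha}$, uniformity checks in $n$, and collecting the constant $C_{2}^{a}$) are mechanical.
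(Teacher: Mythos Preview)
Your proof is correct and follows the same strategy as the paper: split on a lower-tail concentration event for $X^{a}_{n}$, use the crude bound $\lambda^{-\alpha}$ on the bad event, and balance the two contributions. The only real difference is the cutoff: the paper takes $\theta \propto \log(np_{a}+\lambda)/(np_{a})$ via a Bernstein inequality to obtain the sharper intermediate rate $\sqrt{\log n/n}$ before weakening it to $n^{-1/4}$, whereas your choice $\delta=\mu^{-1/4}$ hits the target rate directly (and note that any $\delta=\mu^{-\beta}$ with $\beta\in(0,\tfrac12)$ would work, so $1/4$ is not actually a forced balance point).
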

Given this result, we deduce:
\begin{align*}
    \sum_{n=1}^{\tau_{a}-1}\mathbb{E}\Big[\frac{1}{(X^{a}_{n}+\lambda)^{\alpha}})\Big] \leq \sum_{n=1}^{\tau_{a}-1} (1+\frac{C_{2}^{a}}{(np_{a})^{1/4}})\frac{1}{(p_{a}n+\lambda)^{\alpha}}.
\end{align*}
Therefore, we have:
\begin{align*}
    \max_{(\tau_{a})_{a\in[d]}}\sum_{a\in[d]}\sum_{n=1}^{\tau_{a}}\mathbb{E}\Big[\frac{1}{(X^{a}_{n-1}+\lambda)^{\alpha}})\Big] &\leq \frac{d}{\lambda^{\alpha}}+ \max_{(\tau_{a})_{a\in[d]}}\sum_{a\in[d]}\sum_{n=1}^{\tau_{a}}\mathbb{E}\Big[\frac{1}{(X^{a}_{n}+\lambda)^{\alpha}})\Big] \\
    &\leq \sum_{a\in[d]}\frac{1}{\lambda^{\alpha}} + \max_{(\tau_{a})_{a\in[d]}}\sum_{a\in[d]}\sum_{n=1}^{\tau_{a}}\frac{1}{(p_{a}n+\lambda)^{\alpha}} \\&\quad \quad \quad+ (\max_{a\in[d]}C_{2}^{a})\max_{(\tau_{a})_{a\in[d]}}\sum_{a\in[d]}\sum_{n=1}^{\tau_{a}}\frac{1}{(p_{a}n)^{1/4}}\frac{1}{(p_{a}n+\lambda)^{\alpha}}.
\end{align*}
We first consider the second maximization problem and note that:
\begin{align*}
    \max_{(\tau_{a})_{a\in[d]}}\sum_{a\in[d]}\sum_{n=1}^{\tau_{a}}\frac{1}{(p_{a}n)^{1/4}}\frac{1}{(p_{a}n+\lambda)^{\alpha}} &\leq \lambda^{1/4} \max_{(\tau_{a})_{a\in[d]}}\sum_{a\in[d]}\sum_{n=1}^{\tau_{a}}\frac{1}{(p_{a}n+\lambda)^{\alpha+1/4}} \\
    &= \mathcal{O}(d_{\mathit{eff}}\psi_{\alpha+\frac{1}{4}}(\frac{T}{d_{\mathit{eff}}}+\lambda)) \\
    &= o(d_{\mathit{eff}}\psi_{\alpha}(\frac{T}{d_{\mathit{eff}}}+\lambda))
\end{align*}
where we used an integral comparison and Lemma \ref{Optimization Lemma Finite} to deduce the $\mathcal{O}$ scaling and the fact that $\alpha \in ]0,1]$ the deduce the $o$ scaling.
Similarly, we know that the first maximization problem scales as $d_{\mathit{eff}}\psi_{\alpha}(\frac{T}{d_{\mathit{eff}}}+\lambda)$ through another integral comparison and use of Lemma \ref{Optimization Lemma Finite}. Given this, we conclude that the upper bound is equivalent to $d_{\mathit{eff}}\psi_{\alpha}(\frac{T}{d_{\mathit{eff}}}+\lambda)$.
Thanks to those two results, we finally affirm that:
\begin{align*}
    \displaystyle \max_{\pi \in \Pi_{\text{off}}} \mathbb{E}[\mathbb{V}_{\alpha}(T,\pi)] \sim d_{\mathit{eff}}\psi_{\alpha}(\frac{T}{d_{\mathit{eff}}}+\lambda).
\end{align*}
The last step needed is to prove Claim. \ref{concentration Bin}. In doing so, we extend Lemma $5.3$ of \cite{thesesarlot} to more general inverse power function (i.e. $\alpha \neq 1$) with regularization $\lambda>0$. We first introduce a tuning parameter $u\geq 0$ and write:
\begin{align*}
    (\mathbb{E}[X^{a}_{n}]+\lambda)^{\alpha}\mathbb{E}\Big[\frac{1}{(X^{a}_{n}+\lambda)^{\alpha}}\Big] &= (n p_{a}+\lambda)^{\alpha} \mathbb{E}\Big[\frac{1}{(X^{a}_{n}+\lambda)^{\alpha}}\mathbf{1}\{X^{a}_{n}\leq u\mathbb{E}[X^{a}_{n}]\}\Big] \\&\quad \quad \quad+ \mathbb{E}\Big[\frac{1}{(X^{a}_{n}+\lambda)^{\alpha}}\mathbf{1}\{X^{a}_{n}> u \mathbb{E}[X^{a}_{n}]\}\Big] \\
    &\leq \frac{(n p_{a}+\lambda)^{\alpha}}{\lambda^{\alpha}}\mathbb{P}(X^{a}_{n}\leq u\cdot n p_{a}) + (\frac{n p_{a}+\lambda}{u\cdot n p_{a}+\lambda})^{\alpha}.
\end{align*}
Using a Berstein inequality for Binomiale variable, we have for all $\theta>0$ and $n\in \mathbb{N}$:
\begin{align*}
    \mathbb{P}\left(X_{n}^{a} \leq\left(1-\sqrt{2 \theta}-\frac{\theta}{3}\right) n p_{a}\right) \leq e^{-\theta n p_{a}}.
\end{align*}
Thus, for all $0<\theta \leq \frac{3(\sqrt{5}-\sqrt{3})^{2}}{2}$, by setting $u \equiv (1-\sqrt{2 \theta}-\frac{\theta}{3})\geq 0$, we obtain that:
\begin{align*}
    (\mathbb{E}[X^{a}_{n}]+\lambda)^{\alpha}\mathbb{E}\Big[\frac{1}{(X^{a}_{n}+\lambda)^{\alpha}}\Big] \leq  \frac{(n p_{a}+\lambda)^{\alpha}}{\lambda^{\alpha}}e^{-\theta n p_{a}} + \Big(\frac{n p_{a}+\lambda}{(1-\sqrt{2 \theta}-\frac{\theta}{3})n p_{a}+\lambda}\Big)^{\alpha}.
\end{align*}
By taking $\theta = A\frac{\log(n p_{a}+\lambda)}{n p_{a}}$ for another tunable parameter $A$ and for $n$ large enough to ensure $A\frac{\log(n p_{a}+\lambda)}{n p_{a}}\leq \frac{3(\sqrt{5}-\sqrt{3})^{2}}{2}$, this yields:
\begin{align*}
    (\mathbb{E}[X^{a}_{n}]+\lambda)^{\alpha}\mathbb{E}\Big[\frac{1}{(X^{a}_{n}+\lambda)^{\alpha}}\Big] &\leq \frac{(n p_{a}+\lambda)^{\alpha}}{\lambda^{\alpha}(np_{a}+\lambda)^{A}} + \Big(\frac{n p_{a}+\lambda}{(1-\sqrt{2 A\frac{\log(np_{a}+\lambda)}{n p_{a}}}-A\frac{\log(n p_{a}+\lambda)}{3 n p_{a}})n p_{a}+\lambda}\Big)^{\alpha} \\
    &= \frac{(n p_{a}+\lambda)^{\alpha}}{\lambda^{\alpha}(np_{a}+\lambda)^{A}} + \Big(\frac{n p_{a}+\lambda}{n p_{a}+\lambda -\sqrt{2 A n p_{a} \log(n p_{a}+\lambda)}-\frac{A\log(n p_{a}+\lambda)}{3}}\Big)^{\alpha} \\
\end{align*}
For $n$ sufficiently large to ensure $\frac{3\sqrt{2 A n p_{a} \log(np_{a})}+A\log(np_{a})}{3(np_{a}+\lambda)}\leq 1/2$ and given that $\alpha \in ]0,1]$, we then have:
\begin{align*}
    \Big(\frac{1}{1 -\frac{2\sqrt{2 A n p_{a} \log(np_{a}+\lambda)}+A\log(np_{a}+\lambda)}{3(np_{a}+\lambda)}}\Big)^{\alpha} 
    &\leq \Big(1 + 2\frac{3\sqrt{2 A n p_{a} \log(np_{a}+\lambda)}+A\log(np_{a}+\lambda)}{3(np_{a}+\lambda)} \Big)^{\alpha}\\
    &\leq 1 + 2\alpha \frac{3\sqrt{2 A n p_{a} \log(np_{a}+\lambda)}+A\log(np_{a}+\lambda)}{3(np_{a}+\lambda)}
\end{align*}
To conclude, we take take $A\equiv(\alpha+1)$ and this ensure that for $n$ sufficiently large, we obtain:
\begin{align*}
    (\mathbb{E}[X^{a}_{n}]+\lambda)^{\alpha}\mathbb{E}\Big[\frac{1}{(X^{a}_{n}+\lambda)^{\alpha}}\Big] &\leq 1 + 2\alpha \frac{3\sqrt{2 (\alpha+1) n p_{a} \log(np_{a}+\lambda)}+(\alpha+1)\log(np_{a}+\lambda)}{3(np_{a}+\lambda)} \\
    &\quad \quad \quad + \frac{1}{\lambda^{\alpha}(np_{a}+\lambda)}. 
\end{align*}
The leading order of this quantity is $\sqrt{\frac{log(n)}{n}}=o(n^{-1/4})$ and therefore, we conclude that there exists a constant $C_{2}^{a}$, depending on $\lambda, \alpha$ and $p_{a}$ such that for all $n\geq 1$:
\begin{align*}
    (\mathbb{E}[X^{a}_{n}]+\lambda)^{\alpha}\mathbb{E}\Big[\frac{1}{(X^{a}_{n}+\lambda)^{\alpha}}\Big] \leq 1 + \frac{C_{2}^{a}}{(n p_{a})^{1/4}},
\end{align*}
where $C_{2}^{a}$ is artificially increased to remove the two lower bounds conditions on $n$.
\end{proof}


\subsection{Proof of Prop. \ref{Monitoring AG}}

\MonitoringAG*

Thus, we find that the power of a single monitoring is sufficient to ensure almost the same gain as adaptivity i.e. constant monitoring. The linear dependency in $T_{0}$ (due to the linear increase of variance in Binomial models) is also surprising. In non-asymptotic regime, it is still true but for $\beta$ verifying $0<\beta_{-}\leq \beta\leq \beta_{+}<1$ for given $(\beta_{-},\beta_{+})$. We also observe a more general concave property of the single monitoring gain seen as a function of $T_{0}$, with limits equals to $0$ on the borders on the interval. We conjecture that this concavity is likely to turn in a submodular dependency for several monitoring shots.


\begin{proof}

\textbf{Single Monitoring:} We first prove a slightly extended version of (\ref{One-Shot}) by considering a monitoring at time $T_{0}$ and we recover the results of Prop. \ref{Monitoring AG} by setting $T_{0}\equiv \beta T$, for a given $\beta \in ]0,1[$. 
For the first step of the proof, we consider the continuous approximation of $\displaystyle \max_{\pi \in \Pi_{\text{single}}(T_{0})}\mathbb{E}[\mathbb{V}_{\alpha}(T,\pi)]$ given by the optimization problem over continuous variables:
\begin{align*}
 \max_{\tau_{a}(T_{0}),\tau_{a}(T)}& \mathbb{E}\Big[\sum_{a\in[d]}\frac{1}{p_{a}}[\psi_{\alpha}(N_{a}(T))-\psi_{\alpha}(N_{a}(T_{0}))] + \sum_{a\in[d]}\frac{1}{p_{a}}[\psi_{\alpha}(N_{a}(T_{0}))-\psi_{\alpha}(\lambda)]\Big]\\ \tag{$\mathcal{S}\mathcal{M}$} \label{SM}
\textrm{s.t.} & \sum_{a\in[d]}\tau_{a}(T_{0}) = T_{0},\\
 & \sum_{a\in[d]}\tau_{a}(T) = T,    \\
 & \forall a \in [d],\quad \tau_{a}(T)\geq \tau_{a}(T_{0}).   
\end{align*}
In \ref{SM}, the single monitoring $\max$ player initially commits to an allocation of the $T_{0}$ first rounds through the policy $(\tau_{a}(T_{0}))_{a\in[d]}$, with resulting gain expressed as the second term of the maximization problem. The player then observes the realization $N_{a}(T_{0})\sim \mathcal{B}(\tau_{a}(T_{0}),p_{a})$ and allocates the rest of the $T-T_{0}$ budget through the allocation $(\tau_{a}(T))_{a\in[d]}$, with resulting gain expressed as the first term of the maximization problem. . Therefore, the single monitoring gain assesses the value of observing the deviation of $N_{a}(T_{0})$ from its expectation $\tau_{a}(T_{0})p_{a}$. In an analogous way, we then introduce the continuous approximation of $\displaystyle \max_{\pi \in \Pi_{\text{off}}}\mathbb{E}[\mathbb{V}_{\alpha}(T,\pi)]$ given by:
\begin{align*}
 \max_{\tau_{a}(T)}& \quad \mathbb{E}\Big[\sum_{a\in[d]}\frac{1}{p_{a}}[\psi_{\alpha}(N_{a}(T))-\psi_{\alpha}(\lambda)]\Big]\\ \tag{$\mathcal{O}\mathcal{F}\mathcal{F}$} \label{OFF}
\textrm{s.t.} & \sum_{a\in[d]}\tau_{a}(T) = T.  
\end{align*}
In \ref{OFF}, $(N_{a}(T_{0}))_{a\in[d]}$ is not observed and thus can not be leveraged by the offline player to adapt the second part of the allocation. On what follows, we use $\mathbf{E}[\mathbf{N}]$ and $\mathbf{V}[\mathbf{N}]$ to denote respectively the mean and variance of $\mathbf{N}$, the empirical discrete distribution over $(N_{a}(T_{0}))_{a\in[d]}$ with associated weights $(1/p_{a}d_{\mathit{eff}})_{a\in[d]}$. Given a realization of $(N_{a}(T_{0}))_{a\in[d]}$, we use Lemma \ref{Optimization Lemma Finite} to deduce the optimal choice of $(\tau_{a}(T))_{a\in[d]}$ in \ref{SM} and resulting expected conditional gain:
\begin{align*}
    \sum_{a\in[d]}\underbrace{\frac{1}{p_{a}}\Big[ \psi_{\alpha}(\frac{T-T_{0}}{d_{\mathit{eff}}}+\mathbf{E}[\mathbf{N}]) -\psi_{\alpha}(N_{a}(T_{0}))\Big]}_{\textit{Gain between $T_{0}$ and $T$ for arm $a$}} + \sum_{a\in[d]}\underbrace{\frac{1}{p_{a}}\Big[ \psi_{\alpha}(N_{a}(T_{0})) -\psi_{\alpha}(\lambda)\Big]}_{\textit{Gain between $0$ and $T_{0}$ for arm $a$}},
\end{align*}
where the formula is valid under the assumption $\forall a \in [d]$, $T-T_{0} \geq d_{\mathit{eff}}(N_{a}(T_{0})-\mathbf{E}[\mathbf{N}])$. Such assumption encompass the fact that the remaining budget $T-T_{0}$ should be sufficient to correct the deviation observed. Logically, we know that on expectation $\mathbb{E}[N_{a}(T_{0})-\mathbf{E}[\mathbf{N}]] = 0$, that is no systematic deviation is expected. For for all realization of randomness, the following deterministic crude upper bound hold:
\begin{align*}
    d_{\mathit{eff}}(N_{a}(T_{0})-\mathbf{E}[\mathbf{N}]) \leq d_{\mathit{eff}}(1-\frac{1}{d_{\mathit{eff}}p_{a}})\frac{T_{0}}{p_{a}d_{\mathit{eff}}}
\end{align*}
this in turn imposes:
\begin{align*}
    \frac{T_{0}}{T} \leq \min_{a\in[d]}\frac{1}{1 + \frac{d_{\mathit{eff}}-\frac{1}{p_{a}}}{p_{a}d_{\mathit{eff}}}}.
\end{align*}
For instance, in the uniform censorship model, this yields condition $\frac{T_{0}}{T} \leq \frac{dp}{dp + d-1}$. Nevertheless, this is overly conservative and we can get considerably stronger results by considering high-probability concentration results on $N_{a}(T_{0})$.  Indeed, thanks to Chernoff Bounds for Binomial distribution, we have for $\delta \equiv T_{0}^{-1/4}$, that with probability at least $1-2d\exp\{-\delta^{2}T_{0}/3d_{\mathit{eff}}\}$, for all $a$, $(1-\delta)T_{0}/d_{\mathit{eff}} \leq N_{a}(T_{0})\leq (1+\delta)T_{0}/d_{\mathit{eff}}$. In particular, this yields $(1-\delta)T_{0}/d_{\mathit{eff}}\leq \mathbf{E}[\mathbf{N}]\leq (1+\delta)T_{0}/d_{\mathit{eff}}$. Under this event, we have $d_{\mathit{eff}}(N_{a}(T_{0})-\mathbf{E}[\mathbf{N}])\leq 2\delta T_{0}$, which imposes $T\geq (1+2\delta)T_{0} = T_{0} + 2T_{0}^{3/4}$. In particular, for $T_{0}\equiv \beta T$, where $\beta \in ]0,1[$, such condition will always be verified for $T$ large enough. 

On the other hand, still using Lemma \ref{Optimization Lemma Finite}, we write the conditional expected gain of the offline policy on this same realization of $(N_{a}(T))_{a\in[d]}$ for \ref{OFF} as: 
\begin{align*}
    \sum_{a\in[d]}\underbrace{\frac{1}{p_{a}}\Big[ \psi_{\alpha}(\frac{T-T_{0}}{d_{\mathit{eff}}}+N_{a}(T_{0})) -\psi_{\alpha}(N_{a}(T_{0}))\Big]}_{\textit{Gain between $T_{0}$ and $T$ for arm $a$}} 
    + \sum_{a\in[d]}\underbrace{\frac{1}{p_{a}}\Big[ \psi_{\alpha}(N_{a}(T_{0})) -\psi_{\alpha}(\lambda)\Big]}_{\textit{Gain between $0$ and $T_{0}$ for arm $a$}}.
\end{align*}
where $\psi_{\alpha}(N_{a}(T_{0}))$ is artificially introduced. The difference between the two comes from the possibility for the monitoring policy to homogenize the realized $N_{a}(T_{0})$ into a uniform $\mathbf{E}[\mathbf{N}]$. The random difference $\mathcal{G}_{\mathit{single}}(T_{0})$, seen as a function of the realization of $(N_{a}(T_{0}))$ is then equal to:
\begin{align*}
    \mathcal{G}_{\mathit{single}}(T_{0}) &\triangleq \sum_{a\in[d]}\frac{1}{p_{a}}[\psi_{\alpha}(\frac{T-T_{0}}{d_{\mathit{eff}}}+\mathbf{E}[\mathbf{N}])-\psi_{\alpha}(\frac{T-T_{0}}{d_{\mathit{eff}}}+N_{a}(T_{0}))] \\
    &= d_{\mathit{eff}}\Big[\Bar{\psi}_{\alpha}(\mathbf{E}[\mathbf{N}])-\mathbf{E}[\Bar{\psi}_{\alpha}(\mathbf{N})]\Big],
\end{align*}
which is exactly the Jensen's gap of the concave function $\Bar{\psi}_{\alpha}: x \mapsto \psi_{a}(\frac{T-T_{0}}{d_{\mathit{eff}}}+x)$. The main insight is that this gap is then asymptotically equivalent to:
\begin{align*}
    \Bar{\psi}_{\alpha}(\mathbf{E}[\mathbf{N}])-\mathbf{E}[\Bar{\psi}_{\alpha}(\mathbf{N})]\sim -\frac{\Bar{\psi}^{(2)}_{\alpha}(\mathbf{E}[\mathbf{N}])}{2}\mathbf{V}[\mathbf{N}],
\end{align*}
where the RHS is positive, given that $\Bar{\psi}^{(2)}_{\alpha}(\mathbf{E}[\mathbf{N}])$ is negative. To show this, we use the original proof of Jensen's inequality and introduce the interval $I\triangleq [\min_{a} N_{a}(T_{0}),\max_{a}N_{a}(T_{0})]$ to leverage the mean value theorem. This then yields:
\begin{align*}
     \frac{\min_{y\in I}-\Bar{\psi}^{(2)}_{\alpha}(y)}{-\Bar{\psi}^{(2)}_{\alpha}(\mathbf{E}[\mathbf{N}])}\leq 2\frac{\Bar{\psi}_{\alpha}(\mathbf{E}[\mathbf{N}])-\mathbf{E}[\Bar{\psi}_{\alpha}(\mathbf{N})]}{-\Bar{\psi}^{(2)}_{\alpha}(\mathbf{E}[\mathbf{N}])\mathbf{V}[\mathbf{N}]} \leq \frac{\max_{y\in I}-\Bar{\psi}^{(2)}_{\alpha}(y)}{-\Bar{\psi}^{(2)}_{\alpha}(\mathbf{E}[\mathbf{N}])}
\end{align*}
Whenever $T_{0}$ is a constant independent of $T$, for $T\rightarrow+\infty$ by explicitly writing the definition of the upper and lower bounds, we have almost surely:
\begin{align*}
    \frac{\min_{y\in I}-\Bar{\psi}^{(2)}_{\alpha}(y)}{-\Bar{\psi}^{(2)}_{\alpha}(\mathbf{E}[\mathbf{N}])} \rightarrow 1 \quad \text{and} \quad 
    \frac{\max_{y\in I}-\Bar{\psi}^{(2)}_{\alpha}(y)}{-\Bar{\psi}^{(2)}_{\alpha}(\mathbf{E}[\mathbf{N}])} \rightarrow 1 
\end{align*}
Difficulties arises when $T_{0}$ is a function of $T$, as in the statement of the result where $T_{0}\equiv\beta T$. By considering the same concentration event as the one introduced above, we have:
\begin{align*}
    \frac{\min_{y\in I}-\psi^{(2)}_{\alpha}(y)}{-\Bar{\psi}^{(2)}_{\alpha}(\mathbf{E}[\mathbf{N}])} &= \min_{y\in I} \Big(\frac{T-T_{0}+d_{\mathit{eff}}\mathbf{E}[\mathbf{N}]}{T-T_{0}+d_{\mathit{eff}}y}\Big)^{^{1+\alpha}} \geq \Big(\frac{T-T_{0}+(1-\delta)T_{0}}{T-T_{0}+(1+\delta)T_{0}}\Big)^{^{1+\alpha}} \\&\leq \Big(\frac{T-\delta T_{0}}{T+\delta T_{0}}\Big)^{^{1+\alpha}} =  \Big(\frac{T-T_{0}^{3/4}}{T+T_{0}^{3/4}}\Big)^{^{1+\alpha}} \rightarrow 1.
\end{align*}
and similarly:
\begin{align*}
    \frac{\max_{y\in I}-\psi^{(2)}_{\alpha}(y)}{-\Bar{\psi}^{(2)}_{\alpha}(\mathbf{E}[\mathbf{N}])} &= \max_{y\in I} \Big(\frac{T-T_{0}+d_{\mathit{eff}}\mathbf{E}[\mathbf{N}]}{T-T_{0}+d_{\mathit{eff}}y}\Big)^{^{1+\alpha}} \leq \Big(\frac{T-T_{0}+(1+\delta)T_{0}}{T-T_{0}+(1-\delta)T_{0}}\Big)^{^{1+\alpha}} \\
    &\leq \Big(\frac{T+\delta T_{0}}{T-\delta T_{0}}\Big)^{^{1+\alpha}} =\Big(\frac{T+ T_{0}^{3/4}}{T- T_{0}^{3/4}}\Big)^{^{1+\alpha}} \rightarrow 1.
\end{align*}
Thus, thanks to the exponential concentration, we conclude that:
\begin{align*}
    \mathbb{E}[\mathcal{G}_{\mathit{single}}(T_{0})] = \frac{d_{\mathit{eff}}}{2} \mathbb{E}[\Bar{\psi}_{\alpha}(\mathbf{E}[\mathbf{N}])-\mathbf{E}[\Bar{\psi}_{\alpha}(\mathbf{N})]]\sim -\frac{d_{\mathit{eff}}}{2}\mathbb{E}[\Bar{\psi}^{(2)}_{\alpha}(\mathbf{E}[\mathbf{N}])\mathbf{V}[\mathbf{N}]].
\end{align*}
Next, we affirm that:
\begin{align*}
    \mathbb{E}[\Bar{\psi}^{(2)}_{\alpha}(\mathbf{E}[\mathbf{N}])\mathbf{V}[\mathbf{N}]] &\overset{a)}{\sim} \mathbb{E}\Big[\Bar{\psi}^{(2)}_{\alpha}(\mathbf{E}[\mathbf{N}])\Big]\mathbb{E}\Big[\mathbf{V}[\mathbf{N}]\Big] \overset{b)}{\sim} \Bar{\psi}^{(2)}_{\alpha}(\mathbb{E}[\mathbf{E}[\mathbf{N}]])\mathbb{E}\Big[\mathbf{V}[\mathbf{N}]\Big],
\end{align*}
where $a)$ leverages the previous bounds and where we use for $b)$ similar concentration results on inverse of Binomial as done for the proof of Lemma \ref{asympt_off}..
We then use the fact that $\mathbb{E}[\mathbf{E}[\mathbf{N}]]= T_{0}/d_{\mathit{eff}}$ to conclude:
\begin{align*}
    \mathbb{E}[\mathcal{G}_{\mathit{single}}(T_{0})] \sim -\frac{d_{\mathit{eff}}}{2}\psi^{(2)}_{\alpha}(\frac{T}{d_{\mathit{eff}}})\mathbb{E}\Big[\mathbf{V}[\mathbf{N}]\Big].\tag{$\mathcal{V}$}\label{OS_Gain}
\end{align*}
We consider this result to be one of the main insight for adaptivity, as it involves that at first order, the gain grows linearly in the expected value of the empirical variance of the arm allocation process. In opposition to the single monitoring policy, the adaptive policy continuously exploits such variance. Yet, in doing so, it creates a second order induced variance but we then show that this phenomena is negligible at first order. To reach a result with explicit dependency on the censorship probability $(p_{a})_{a\in[d]}$, we note that:
\begin{align*}
    \mathbb{E}[\mathbf{V}[\mathbf{N}]] = \frac{T_{0}}{d_{\mathit{eff}}^{3}}\sum_{a\in [d]}\frac{1}{p_{a}}\Big[\sum_{b\neq a}\frac{1-p_{b}}{p_{b}}\Big], 
\end{align*}
and therefore:
\begin{align*}
      \mathbb{E}\Big[\mathcal{G}_{\mathit{single}}(T_{0})\Big]  &\sim \frac{\alpha}{2d_{\mathit{eff}}^{2}}(\frac{d_{\mathit{eff}}}{T})^{1+\alpha} T_{0}\sum_{a\in [d]}\frac{1}{p_{a}}\Big[\sum_{b\neq a}\frac{1-p_{b}}{p_{b}}\Big] 
    = \gamma_{\alpha}(\mathbf{p})\frac{T_{0}}{T^{1+\alpha}}.
\end{align*}
In particular, for $T_{0}=\beta T$, this yields $\mathbb{E}\Big[\mathcal{G}_{\mathit{single}}(\beta T)\Big] = \gamma_{\alpha}(\mathbf{p})\frac{\beta}{T^{\alpha}} + o(\frac{1}{T^{\alpha}})$.

The second step closely mirrors the proof of Lemma \ref{asympt_off} and consists in justifying the use of the continuous approximation for the two optimization problems (\ref{OFF}) and (\ref{SM}). As in Lemma \ref{asympt_off}, we show that the difference between the continuous and discrete optimization results at most in a second order gain of $o(\frac{1}{T^{\alpha}})$, even when maximized as a decoupled quantity. By combining those two results, we finally deduce as announced:
\begin{align*}
    \max_{\pi \in \Pi_{\text{single}}(\beta T)} \mathbb{E}[\mathbb{V}_{\alpha}(T,\pi)] - \max_{\pi \in \Pi_{\text{off}}} \mathbb{E}[\mathbb{V}_{\alpha}(T,\pi)] \sim \mathbb{E}\Big[\mathcal{G}_{\mathit{single}}(\beta T)\Big] =   \gamma_{\alpha}(\mathbf{p})\frac{\beta}{T^{\alpha}} + o(\frac{1}{T^{\alpha}})
\end{align*}

\textbf{Complete Adaptivity}

We next tackle the proof of (\ref{Constant}), where the main idea is to show that a formula analogous to (\ref{OS_Gain}) holds for the variance of a suited random process. First, using the same proof technique as in Sec. 4 of \cite{decayingB} thanks to the decaying property of the reward in function of the number of realization, we show that the optimal adaptive policy is the greedy policy, that is the policy that picks at time $t$ the action:
\begin{align*}
    a_{t} \triangleq \operatorname{argmax}_{a\in\mathcal{A}_{t}}(N_{a}(t-1)+\lambda)^{-\alpha},
\end{align*}
with arbitrary but consistent tie-breaking. In particular, this ensures that for all actions $a,b$ and time $t$, we have $|N_{a}(t)-N_{b}(t)|\leq 1$. We then introduce the offline and adaptive allocations:
\begin{align*}
    \tau_{a}^{off}(T) &\triangleq \frac{T}{p_{a}d_{eff}} \quad \text{and} \quad
    \tau_{a}^{on}(T)  \triangleq \sum_{i=1}^{N_{a}(T)} \frac{1}{p_{a}}+\xi^{a}_{i}= \frac{N_{a}(T)}{p_{a}} + S^{a}(N_{a}(T)) 
\end{align*}
where $\frac{1}{p_{a}}+\xi^{a}_{i}$ is the total random number of allocation it takes for action $a$ to be realized in the $i^{th}$ selection, $\xi^{a}_{i}$ being equal the centered deviation with respect to the expected value $\frac{1}{p_{a}}$. Of key importance in our proof is $S^{a}(N_{a}(T))$, the cumulative deviation defined as $\sum_{i=1}^{N_{a}(T)}\xi^{a}_{i}$. Note that it is well approximated in large $T$ regime as a random sum of $N_{a}(T)$ i.i.d. geometric centered variable of parameter $p_{a}$. Given this and the total budget constraint, we have the simple relation $\tau_{a}^{on}(T)=\tau_{a}^{off}(T) + \frac{1}{d_{eff}}\sum_{b}[\frac{S^{a}(N_{a}(T))}{p_{b}}-\frac{S^{b}(N_{b}(T))}{p_{a}}]$.
A relevant quantity to introduce is the random allocation difference $\Delta \tau_{a,b}$ between actions $a$ and $b$ defined by:
\begin{align*}
    \Delta \tau_{a,b} \triangleq \frac{1}{d_{eff}} (\frac{S^{a}(N_{a}(T))}{p_{b}}-\frac{S^{b}(N_{b}(T))}{p_{b}})
\end{align*}
Using this notation, we simply have $\tau_{a}^{on} = \tau_{a}^{off} + \sum_{b} \Delta \tau_{a,b}$. We then introduce the random sets $I^{+}\triangleq\{a: \tau_{a}^{on}\geq \tau_{a}^{off}\}=\{a: \sum_{b} \Delta \tau_{a,b} \geq 0\}$ and $I^{-}\triangleq\{a: \tau_{a}^{on}< \tau_{a}^{off}\}=\{a: \sum_{b} \Delta \tau_{a,b} < 0\}$. On the one hand, $I^{+}$ represents the set of actions that are more sampled by the adaptive policy than by the offline policy i.e. that leads to a gain thanks to the greedy property. One the other hand, $I^{-}$ is the set of actions under-selected by the adaptive policy, leading to a loss although inferior in absolute value to the resulting gain of $I^{+}$. As for the proof of the single monitoring case, we condition on the realization $(N_{a}(T))$ and use a continuous approximation given this conditioning to study the difference of gain. Thus, we have the action gain  for $a \in I^{+}$,:
\begin{align*}
    g_{a} &\triangleq \frac{1}{p_{a}}[\psi_{\alpha}(N_{a}(T))-\psi_{\alpha}(N_{a}(T)-p_{a}\sum_{b}\Delta \tau_{a,b})] \\
    &\approx \frac{1}{p_{a}}\Big[p_{a}\Big(\sum_{b} \Delta \tau_{a,b}\Big)\psi_{\alpha}^{(1)}(N_{a}(T))-\frac{(p_{a}\sum_{b} \Delta \tau_{a,b})^{2}}{2}\psi_{\alpha}^{(2)}(N_{a}(T))\Big].
\end{align*}
On the other hand, for $a \in I^{-}$, we have the action loss still under the continuous approximation:
\begin{align*}
    l_{a}&\triangleq \frac{1}{p_{a}}[\psi_{\alpha}(N_{a}(T)+p_{a}\sum_{b}\Delta \tau_{a,b})-\psi_{\alpha}(N_{a}(T))] \\
    &\approx \frac{1}{p_{a}}\Big[p_{a}\Big(\sum_{b} \Delta \tau_{a,b}\Big)\psi_{\alpha}^{(1)}(N_{a}(T))+\frac{(p_{a}\sum_{b} \Delta \tau_{a,b})^{2}}{2}\psi_{\alpha}^{(2)}(N_{a}(T))\Big].
\end{align*}
By introducing $\mathcal{G}_{\text{adapt}} \triangleq \sum_{a\in I^{+}} g_{a}-\sum_{a\in I^{-}} l_{a}$, the adaptive equivalent of $\mathcal{G}_{\text{single}}$ and combining previous two results, we deduce:
\begin{align*}
    \mathcal{G}_{\text{adapt}} &= \sum_{a\in I^{+}}\frac{1}{p_{a}}\Big[p_{a}\Big(\sum_{b} \Delta \tau_{a,b}\Big)\psi_{\alpha}^{(1)}(N_{a}(T))-\frac{(p_{a}\sum_{b} \Delta \tau_{a,b})^{2}}{2}\psi_{\alpha}^{(2)}(N_{a}(T))\Big] \\&\quad \quad - \sum_{a\in I^{-}} \frac{1}{p_{a}}\Big[p_{a}\Big(\sum_{b} \Delta \tau_{a,b}\Big)\psi_{\alpha}^{(1)}(N_{a}(T))+\frac{(p_{a}\sum_{b} \Delta \tau_{a,b})^{2}}{2}\psi_{\alpha}^{(2)}(N_{a}(T))\Big]\\
    &= \sum_{a\in I^{+}} \Big(\sum_{b} \Delta \tau_{a,b}\Big)\psi_{\alpha}^{(1)}(N_{a}(T)) - \sum_{a\in I^{-}} \Big(\sum_{b} \Delta \tau_{a,b}\Big)\psi_{\alpha}^{(1)}(N_{a}(T)) \\
    &\quad \quad - \frac{1}{2}\sum_{a\in[d]}(p_{a}\sum_{b} \Delta \tau_{a,b})^{2}\psi_{\alpha}^{(2)}(N_{a}(T))
\end{align*}
We then leverage the Taylor expansion $\psi_{\alpha}^{(1)}(N_{a}(T)) = \psi_{\alpha}^{(1)}(\Bar{N}(T)) + \psi_{\alpha}^{(2)}(\bar{N}(T))(\bar{N}(T)-N_{a}(T))$, where $\Bar{N}(T)\triangleq \sum_{a\in [d]}N_{a}(t)/d$. We know that the second term is asymptotically negligible given that $\alpha>0$ and that the difference between $\bar{N}(T)$ and $N_{a}(T)$ is constant with exponential probability, thanks to the greedy policy property. We combine this result with the fact that by definition $\sum_{a\in I^{+}} \sum_{b}\Delta \tau_{a,b} - \sum_{a\in I^{-}} \sum_{b}\Delta \tau_{a,b}=0$ to deduce that at first order:
\begin{align*}
    \mathcal{G}_{\text{adapt}} = - \frac{1}{2}\psi_{\alpha}^{(2)}(\bar{N}(T))\sum_{a\in[d]}(p_{a}\sum_{b} \Delta \tau_{a,b})^{2} \tag{$\mathcal{L}$}\label{AD_Gain}
\end{align*}
We see formula (\ref{AD_Gain}) as the adaptive analogous of (\ref{OS_Gain}). Indeed, it involves the product of the second derivative $\frac{1}{2}\psi_{\alpha}^{(2)}(\bar{N}(T))$, evaluated on a quantity concentrating at $T/d_{\mathit{eff}}$ with a variance term associated to the adaptive action allocation process. We remark that for any $a\in[d]$:
\begin{align*}
    \mathbb{E}\Big[(\sum_{b} \Delta \tau_{a,b})^{2}\Big] &= \frac{1}{d_{\mathit{eff}}^{2}}\mathbb{E}\Big[([\sum_{ b\neq a}\frac{1}{p_{b}}]S^{a}(N_{a}(T))-\frac{1}{p_{a}}\sum_{ b\neq a}S^{b}(N_{b}(T)))^{2}\Big] \\
    &= \frac{1}{d_{\mathit{eff}}^{2}}\left[(d_{\mathit{eff}}-\frac{1}{p_{a}})^{2}\mathbb{V}[S^{a}(N_{a}(T)) ]+\sum_{ b\neq a }\frac{1}{p_{a}^{2}}\mathbb{V}[S^{b}(N_{b}(T)) ]\right]
\end{align*}
and therefore, by summing:
\begin{align*}
    \sum_{a\in[d]}p_{a}\mathbb{E}\Big[(\sum_{b} \Delta \tau_{a,b})^{2}\Big] &= \frac{1}{d_{\mathit{eff}^{2}}}\sum_{a\in[d]}p_{a}\Big[(\sum_{ b\neq a}\frac{1}{p_{b}})^{2}\mathbb{V}[S^{a}(N_{a}(T)) ]+\sum_{ b\neq a }\frac{1}{p_{a}^{2}}\mathbb{V}[S^{b}(N_{b}(T)) ]\Big]\\
    &= \frac{1}{d_{\mathit{eff}^{2}}}\sum_{a\in[d]} \Big[p_{a}(d_{\mathit{eff}}-\frac{1}{p_{a}})^{2}+d_{\mathit{eff}}-\frac{1}{p_{a}}\Big]\mathbb{V}[S^{a}(N_{a}(T)) ]\\
    &= \sum_{a\in[d]} p_{a}\frac{d_{\mathit{eff}}-\frac{1}{p_{a}}}{d_{\mathit{eff}}}\mathbb{V}[S^{a}(N_{a}(T))].
\end{align*}
To obtain the leading order of $\mathbb{V}[S^{a}(N_{a}(T))]$, we use Wald's second equation and the fact that $S^{a}(N_{a}(T))$ is approximated by a sum of geometric random variable of parameter $p_{a}$, modulo a asymptotically negligible summing constraint due to the fixed total budget $T$. This yields $\mathbb{V}[S^{a}(N_{a}(T))] \sim \frac{1-p_{a}}{p_{a}^{2}}\mathbb{E}[N_{a}(T)]\sim\frac{1-p_{a}}{p_{a}^{2}}\mathbb{E}[\bar{N}(T)]$, where the last results leverages again the fact that the difference between the two quantities is constant with exponential probability. We conclude with further algebraic calculation that:
\begin{align*}
    \mathbb{E}[\mathcal{G}_{\text{adapt}}] &\sim -\mathbb{E}[\frac{\psi_{\alpha}^{(2)}}{2}(\bar{N}(T))]\sum_{a\in[d]} p_{a}\frac{d_{\mathit{eff}}-\frac{1}{p_{a}}}{d_{\mathit{eff}}}\frac{1-p_{a}}{p_{a}^{2}}\mathbb{E}[\bar{N}(T)] \\
    &\sim \frac{\alpha}{2}(\frac{d_{\mathit{eff}}}{T})^{1+\alpha} \sum_{a\in [d]}\frac{1}{p_{a}}\Big[\sum_{b\neq a}\frac{1-p_{b}}{p_{b}}\Big]\frac{T}{d_{\mathit{eff}}^{2}} \\
    &\sim \gamma_{\alpha}(\mathbf{p})\frac{1}{T^{\alpha}}.
\end{align*}
By justifying again that the continuous gain approximation leads to terms of order $o(\frac{1}{T^{\alpha}})$, as done in the proof of Lemma \ref{asympt_off}, we conclude that:
\begin{align*}
    \max_{\pi \in \Pi_{\text{adapt}}} \mathbb{E}[\mathbb{V}_{\alpha}(T,\pi)] - \max_{\pi \in \Pi_{\text{off}}} \mathbb{E}[\mathbb{V}_{\alpha}(T,\pi)] = \gamma_{\alpha}(\mathbf{p})\frac{1}{T^{\alpha}} + o(\frac{1}{T^{\alpha}}).
\end{align*}
\end{proof}


\section{Proof of Sec. \ref{CB} - Contextual Bandits} \label{Proof LCB}

In this section, we prove Thm. \ref{THM Linear arms} of Sec.\ref{CB}, extending the results of MAB to LCB. To do so, we prove Lemmas \ref{Potential Reduction Linear}, \ref{Optimistic Lemma Linear} and Prop. \ref{Potential Control Linear}. Note that the proof of Thm. \ref{THM Linear Optim MTM} is differed to next section. We conclude the section by discussing the extension of our analysis to Generalized Linear Contextual Bandits. 


\subsection{Proof of Lemma \ref{Potential Reduction Linear}}

\PotentialReductionLinear*

\begin{proof}
We have under the event $\neg \mathcal{H}_{\text{UCB}}^{II}(\delta)$ introduced in Lemma \ref{Optimistic Lemma Linear} and thanks to Holder inequality:
\begin{align*}
    \Delta_{t}(a)\triangleq  \max_{\Tilde{a}\in\mathcal{A}_{t}}\langle\theta^{\star},\Tilde{a} \rangle - \langle \theta^{\star},a_{t} \rangle \leq 2\beta_{\delta}(t-1) \|a_{t}\|_{(\mathbb{W}^{C}(t-1))^{-1}} .
\end{align*}
Therefore, the conditional regret is upper-bounded by:
\begin{align*}
    R(T|\neg \mathcal{H}_{\text{UCB}}^{II}(\delta)) \leq \beta_{\delta}(T)\sum_{t=1}^{T}\|a_{t}\|_{(\mathbb{W}^{C}(t-1))^{-1}} = \beta_{\delta}(T)\Tilde{\mathbb{V}}_{\frac{1}{2}}(T,\pi),
\end{align*}
where we introduced $\Tilde{\mathbb{V}}_{\frac{1}{2}}(T,\pi) \triangleq \sum_{t=1}^{T}\|a_{t}\|_{\mathbb{W}^{C}(t-1)^{-1}}$.
Cauchy Schwartz inequality then allows to make the junction $\Tilde{\mathbb{V}}_{\frac{1}{2}}(T,\pi) \leq \sqrt{T}\sqrt{\mathbb{V}_{1}(T,\pi)}$. We then introduce $\Tilde{\beta}_{\delta}(T)$ a deterministic upper bound on $\beta_{\delta}(T)$:
\begin{align*}
    \beta_{\delta}(T) &= \sqrt{\sigma^{2} \log \left(\frac{\det(\mathbb{W}^{C}_{T})}{\det(\lambda \mathbb{I}_{d})}\right)+2\sigma^{2}\log(\frac{1}{\delta})}+\sqrt{\lambda} \|\theta^{\star}\|_{2} \\
    &\leq \underbrace{\sqrt{\sigma^{2}d \log (1+\frac{T}{d\lambda})+2\sigma^{2}\log(\frac{1}{\delta})}+\sqrt{\lambda} \|\theta^{\star}\|_{2}}_{ \triangleq \Tilde{\beta}_{\delta}(T)} \\
    &= \Theta(\sqrt{d\log(T)}).
\end{align*}
Using the concavity of square root and Jensen's inequality, we have $\mathbb{E}[\sqrt{\mathbb{V}_{1}(T,\pi)}] \leq \sqrt{\mathbb{E}[\mathbb{V}_{1}(T,\pi)]}$. Finally, thanks to Lemma \ref{Optimistic Lemma Linear}, we conclude that:
\begin{align*}
    \mathbb{E}[R(T,\pi_{\text{UCB}})] \leq 2\Tilde{\beta_{\delta}}(T) \sqrt{T\mathbb{E}[\mathbb{V}_{1}(T,\pi_{\textit{UCB}})]} + \delta T\Delta_{max}.
\end{align*}
\end{proof}

\subsection{Statement and Proof of Lemma \ref{Optimistic Lemma Linear}}
Analogous to Lemma \ref{Fail Optim Finite} for the MAB case, one key step in the proof is introduction of the failure of optimism event. Nevertheless, note the difference with the choice of norm.
\begin{lemma} \label{Optimistic Lemma Linear}
For any $\delta \in ]0,1]$, uniform regularization $\lambda>0$ and censored action generating process $(\mathbb{W}^{C}_{t})_{t\leq T}$, let's introduce the event: 
\begin{align*}
    \mathcal{H}_{\text{UCB}}^{II}(\delta) \triangleq \Big\{\exists t \geq 0, \|\hat{\theta}^{\lambda}_{t}-\theta^{\star}\|_{\mathbb{W}^{C}_{t}} >  \underbrace{\sqrt{\sigma^{2} \log \left(\frac{\det(\mathbb{W}^{C}_{t})}{\det(\lambda \mathbb{I}_{d})}\right)+2\sigma^{2}\log(\frac{1}{\delta})}+\sqrt{\lambda} \|\theta^{\star}\|_{2}}_{\triangleq\beta_{\delta}(t)}\Big\}.
\end{align*}
We then have $\mathbb{P}(\mathcal{H}_{\text{UCB}}^{II}(\delta))\leq \delta$.
\end{lemma}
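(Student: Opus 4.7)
The plan is to follow the template of the classical self-normalized concentration bound of Abbasi-Yadkori--Pál--Szepesvári, adapted in a minimal way to the censored setting. The core difficulty is purely bookkeeping: we must set up the filtration so that the effective covariates $x_{a_\tau} a_\tau$ are predictable while the noise $\epsilon_\tau$ remains conditionally sub-Gaussian.

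First, I would expand the closed-form estimator recalled in Appendix \ref{UCB-algo}. Writing $\sum_{\tau=1}^t \langle a_\tau,\theta^\star\rangle x_{a_\tau} a_\tau = (\mathbb{W}^C_t - \lambda \mathbb{I}_d)\theta^\star$, we obtain the clean decomposition
\begin{align*}
\hat{\theta}^\lambda_t - \theta^\star = (\mathbb{W}^C_t)^{-1} S_t \;-\; \lambda (\mathbb{W}^C_t)^{-1}\theta^\star, \qquad S_t \triangleq \sum_{\tau=1}^t \epsilon_\tau\, x_{a_\tau} a_\tau .
\end{align*}
Taking the $\mathbb{W}^C_t$-norm and applying the triangle inequality separates a stochastic term $\|S_t\|_{(\mathbb{W}^C_t)^{-1}}$ from a deterministic bias $\lambda \|\theta^\star\|_{(\mathbb{W}^C_t)^{-1}}$. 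Because $\mathbb{W}^C_t \succeq \lambda \mathbb{I}_d$ by construction, the bias is bounded above by $\sqrt{\lambda}\|\theta^\star\|_2$, matching the second summand in $\beta_\delta(t)$.

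Next, the main step is the concentration bound on $\|S_t\|_{(\mathbb{W}^C_t)^{-1}}$. I would introduce the filtration $\mathcal{F}_\tau = \sigma\bigl(a_1,x_{a_1},\epsilon_1,\dots,a_\tau,x_{a_\tau},\epsilon_\tau,\,a_{\tau+1},x_{a_{\tau+1}}\bigr)$, which is designed so that $a_{\tau}$ and $x_{a_{\tau}}$ land in $\mathcal{F}_{\tau-1}$ (so that $M_\tau \triangleq x_{a_\tau} a_\tau$ is predictable and $\mathbb{W}^C_t = \lambda\mathbb{I}_d + \sum_\tau M_\tau M_\tau^\top$ is $\mathcal{F}_{t-1}$-measurable), while $\epsilon_\tau$ only enters at step $\tau$. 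Because $\epsilon_\tau$ is i.i.d.\ sub-Gaussian and independent of both the action history and the censorship draws, it is conditionally $\sigma$-sub-Gaussian given $\mathcal{F}_{\tau-1}$. The Abbasi-Yadkori self-normalized martingale inequality, applied with $V=\lambda\mathbb{I}_d$, $M_\tau=x_{a_\tau}a_\tau$, $\eta_\tau=\epsilon_\tau$, then yields with probability at least $1-\delta$, uniformly in $t\geq 0$,
\begin{align*}
\|S_t\|_{(\mathbb{W}^C_t)^{-1}}^2 \;\leq\; 2\sigma^2 \log\!\left(\frac{\det(\mathbb{W}^C_t)^{1/2}\,\det(\lambda\mathbb{I}_d)^{-1/2}}{\delta}\right).
\end{align*}
Combining this with the deterministic bias bound gives exactly $\|\hat{\theta}^\lambda_t - \theta^\star\|_{\mathbb{W}^C_t}\leq \beta_\delta(t)$ simultaneously for all $t$, i.e.\ $\mathbb{P}(\mathcal{H}^{II}_{\text{UCB}}(\delta))\leq\delta$.

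The main ``obstacle'' is not analytical but conceptual: one must verify that placing the censorship indicator $x_{a_\tau}$ in the predictable $\sigma$-algebra is legitimate. This relies on the model assumption that $x_{a_\tau}$ is an independent Bernoulli draw revealed right after $a_\tau$ but before $\epsilon_\tau$ is generated (or, equivalently, independent of $\epsilon_\tau$), so that conditioning on it does not break the sub-Gaussian tail of $\epsilon_\tau$. Once that filtration choice is justified, the remainder is a direct application of a black-box concentration result and some straightforward algebra with $\mathbb{W}^C_t \succeq \lambda\mathbb{I}_d$.
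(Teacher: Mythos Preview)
Your proposal is correct and follows essentially the same approach as the paper: both apply the Abbasi-Yadkori--P\'al--Szepesv\'ari self-normalized bound to the censored covariates $x_{a_\tau}a_\tau$, then combine with the deterministic bias bound $\lambda\|\theta^\star\|_{(\mathbb{W}^C_t)^{-1}}\leq\sqrt{\lambda}\|\theta^\star\|_2$ coming from $\mathbb{W}^C_t\succeq\lambda\mathbb{I}_d$. Your treatment is in fact slightly cleaner, since you go directly to the $\mathbb{W}^C_t$-norm of the error rather than detouring through $\langle a,\hat\theta^\lambda_t-\theta^\star\rangle$ for arbitrary $a$, and you make the filtration choice explicit where the paper only remarks that ``the main subtlety is to apply the results to the censored measurable vectors $(x_{a_t}a_t)$.''
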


\begin{proof}
The proof closely mirrors the self-normalized bound for vector-valued martingales of Thm.$1$ from \cite{NIPS2011_e1d5be1c}. The main subtlety is to apply the results to the censored measurable vectors $(x_{a_{t}}a_{t})$ instead of classically $(a_{t})$. This yields that with probability $1-\delta$, for all $t\geq 0$:
\begin{align*}
    \|\sum_{n=1}^{t}\epsilon_{n}x_{a_{n}}a_{n}\|^{2}_{\mathbb{W}_{t}^{C}} \leq \sigma^{2}\log\frac{\det(\mathbb{W}_{t}^{C})}{\det(\lambda \mathbb{I}_{d})} + 2\log(\frac{1}{\delta}).
\end{align*}
Thus, still on this event, for any $t\geq 0$ and action $a\in \mathbb{R}^{d}$, we have by definition of $\hat{\theta}^{\lambda}_{t}$ (Sec.\ref{UCB-algo}):
\begin{align*}
    \langle a,\hat{\theta}^{\lambda}_{t}\rangle - \langle a,\theta^{\star} \rangle = \langle a, (\mathbb{W}_{t}^{C})^{-1}\sum_{n=1}^{t}\epsilon_{n}x_{a_{n}}a_{n}\rangle - \lambda \langle a,  (\mathbb{W}_{t}^{C})^{-1}\theta^{\star}\rangle,
\end{align*}
and therefore, thanks to Cauchy-Schwartz inequality:
\begin{align*}
    |\langle a,\hat{\theta}^{\lambda}_{t}\rangle - \langle a,\theta^{\star} \rangle| \leq \|a\|_{(\mathbb{W}_{t}^{C})^{-1}}\Big(\|\sum_{n=1}^{t}\epsilon_{t}x_{a_{t}}a_{t}\|_{\mathbb{W}_{t}^{C}} + \lambda^{1/2}\|\theta^{\star}\|_{2}\Big)
\end{align*}
Using previous result, for all $a\in \mathbb{B}_{d}, t\geq 0$, with probability $1-\delta$, we have:
\begin{align*}
     |\langle a,\hat{\theta}^{\lambda}_{t}\rangle - \langle a,\theta^{\star} \rangle| \leq \sigma\sqrt{\log\Big(\frac{\det(\mathbb{W}_{t}^{C})}{\det(\lambda \mathbb{I}_{d})}\Big) + 2\log(\frac{1}{\delta})}+\lambda^{1/2}\|\theta^{\star}\|_{2}
\end{align*}
To conclude, we classically plug-in the value $a=\mathbb{W}_{t}^{C}(\hat{\theta}^{\lambda}_{t}-\theta^{\star})$ and divide both sides by $\|\hat{\theta}^{\lambda}_{t}-\theta^{\star}\|_{\mathbb{W}_{t}^{C}}$ to get that for all $t\geq 0$, with probability $1-\delta$, we have:
\begin{align*}
    \|\hat{\theta}^{\lambda}_{t}-\theta^{\star}\|_{\mathbb{W}^{C}_{t}} \leq  \sigma\sqrt{ \log \Big(\frac{\det(\mathbb{W}^{C}_{t})}{\det(\lambda \mathbb{I}_{d})}\Big)+2\log(\frac{1}{\delta})}+\lambda^{1/2} \|\theta^{\star}\|_{2}
\end{align*}
and therefore, by definition $\mathbb{P}(\mathcal{H}_{\text{UCB}}^{II}(\delta))\leq \delta$.

\end{proof}


\subsection{Proof of Prop. \ref{Potential Control Linear}}

\PotentialControlLinear*

\begin{proof}
First, we use Lemma \ref{CEN Lemma Linear} to deduce that under $\mathcal{H}_{\text{CEN}}^{II}(\delta)$:
\begin{align*}
    \mathbb{V}_{\alpha}(T,\pi|\mathcal{H}_{\text{CEN}}^{II}(\delta)) = \sum_{t=1}^{T}\operatorname{Tr}((\mathbb{W}^{C}_{t-1})^{-\alpha}a_{t}a_{t}^{\top}) \leq c_{\delta}^{\alpha}\sum_{t=1}^{T}\operatorname{Tr}(\mathbb{W}_{t-1}^{-\alpha}a_{t}a_{t}^{\top}).
\end{align*}    
For all $t\geq 1$, we then use the fact that $W_{t}\preceq (1+\frac{1}{\lambda})W_{t-1}$ to deduce $\operatorname{Tr}(\mathbb{W}_{t-1}^{-\alpha}a_{t}a_{t}^{\top}) \leq          (1+\frac{1}{\lambda})^{\alpha}\operatorname{Tr}(\mathbb{W}_{t}^{-\alpha}a_{t}a_{t}^{\top})$. The last and most important step is the integral comparison:
\begin{align*}
    \sum_{t=1}^{T}\operatorname{Tr}(\mathbb{W}_{t}^{-\alpha}a_{t}a_{t}^{\top}) \leq \int_{0}^{T} \operatorname{Tr}(\mathbb{W}(t)^{-\alpha}a(t)a(t)^{\top})\partial t = \operatorname{Tr}\Big(\int_{0}^{T}\mathbb{W}(t)^{-\alpha}a(t)a(t)^{\top}\partial t \Big).
\end{align*}
In the previous result, the continuous extension $(a(t),\mathbb{W}(t))_{t\leq T}$ of $(a_{t},\mathbb{W}_{t})_{t\in [T]}$ for a given policy $\pi$ is defined for any time $t\geq 1$ as:
\begin{align*}
    a(t)\triangleq a_{\lfloor t\rfloor} \quad \text{and} \quad \mathbb{W}(t)\triangleq \int_{u=1}^{t}p_{a(u)}a(u)a(u)^{\top}\partial u =  \mathbb{W}_{\lfloor t\rfloor} + (t-\lfloor t\rfloor)p(a_{\lceil t\rceil})a_{\lceil t\rceil}a_{\lceil t\rceil}^{\top}.
\end{align*}
This yields the result:
\begin{align*}
    \mathbb{V}_{\alpha}(T,\pi|\mathcal{H}_{\text{CEN}}^{II}(\delta)) \leq c_{\delta}^{\alpha}(1+\frac{1}{\lambda})^{\alpha}\operatorname{Tr}\Big(\int_{0}^{T}\mathbb{W}(t)^{-\alpha}a(t)a(t)^{\top}\partial t \Big).
\end{align*}
Finally, we conclude thanks to Lemma \ref{CEN Lemma Linear} that:
\begin{align*}
    \mathbb{E}[\mathbb{V}_{\alpha}(T,\pi)] \leq \frac{\delta}{\lambda^{\alpha}} + C(\delta)^{\alpha} \operatorname{Tr}\Big(\int_{0}^{T}\mathbb{W}(t)^{-\alpha}a(t)a(t)^{\top}\partial t\Big).
\end{align*}
\end{proof}

\begin{remark}\label{tour de force}
The main \textit{tour de force} of the continuous approximation we employ is to relax the maximization problem by considering the class of continuous deterministic integrable policies, which is considerably more tractable from an analysis perspective. On the one hand, it allows to get closed-form solution for the maximization problem whereas the discrete approach can only deal with approximations and upper bounds. On the other hand, it clearly reveals the underlying matrix function the discrete approach is approximating and henceforth allows to leverage powerful integration results. We leverage again this idea in the context of Sec.\ref{CB} to tackle impact of censorship.

To illustrate the abovementioned points, we remark that for the simpler case of classical uncensored environment, we obtain for $\alpha>0, \alpha \neq 1$:
\begin{align*}
    \sum_{t=1}^{T}\|a_{t}\|^{2}_{\mathbb{W}_{t-1}^{-\alpha}} 
    &\leq \Big(\frac{\lambda+1}{\lambda}\Big)^{\alpha}\frac{\operatorname{Tr}\Big(\int_{0}^{T} \partial\mathbb{W}(t)^{1-\alpha}\Big)}{1-\alpha} = \Big(\frac{\lambda+1}{\lambda}\Big)^{\alpha}\frac{\operatorname{Tr}(\mathbb{W}^{1-\alpha}_{T}-\mathbb{W}^{1-\alpha}_{0})}{1-\alpha}.
\end{align*}
For $\alpha < 1$, we then have thanks to Lemma \ref{Optimization Lemma Finite} the worst case bound $\operatorname{Tr}(\mathbb{W}_{T}^{1-\alpha}) \leq d^{\alpha}(d\lambda + T)^{1-\alpha}$ and henceforth:
\begin{align*}
    \sum_{t=1}^{T}\|a_{t}\|^{2}_{\mathbb{W}_{t-1}^{-\alpha}} \leq \Big(\frac{\lambda+1}{\lambda}\Big)^{\alpha} \frac{d^{\alpha}(d\lambda + T)^{1-\alpha}-d\lambda^{1-\alpha}}{1-\alpha}
\end{align*}
On the other hand, for $\alpha > 1$, we deduce:
\begin{align*}
    \sum_{t=1}^{T}\|a_{t}\|^{2}_{\mathbb{W}_{t-1}^{-\alpha}} \leq \Big(\frac{\lambda+1}{\lambda}\Big)^{\alpha}\frac{d\lambda^{1-\alpha}}{\alpha-1}.
\end{align*}
Finally, for $\alpha =1$, we use the formula $\operatorname{Tr}(\log(A))=\log(\det A)$ to deduce:
\begin{align*}
     \sum_{t=1}^{T}\|a_{t}\|^{2}_{\mathbb{W}_{t-1}^{-1}} 
     &\leq \frac{\lambda+1}{\lambda}\int_{0}^{T}\frac{\partial\log\det(\mathbb{W}(t))}{\partial t}\partial t = \frac{\lambda+1}{\lambda}\operatorname{Tr}(\log\mathbb{W}_{T}-\log\mathbb{W}_{0}) = \frac{\lambda+1}{\lambda}\log\frac{\det\mathbb{W}_{T}}{\det\mathbb{W}_{0}}\\
     &\leq \frac{\lambda+1}{\lambda}\log(1+\frac{T}{\lambda d}),
\end{align*}
where we used again Lemma \ref{Optimization Lemma Finite} to obtain the last (worst-case) upper bound. In doing so, we recover and extend the recent results of \cite{carpentier2020elliptical} in a more natural way.\footnote{Yet, we conjecture that the preliminary use of Cauchy Schwartz inequality in the case $\alpha > 1$ to affirm $\sum_{t=1}^{T}\|a_{t}\|_{\mathbb{W}_{t-1}^{-\alpha}}\leq \sqrt{T\sum_{t=1}^{T}\|a_{t}\|^{2}_{\mathbb{W}_{t-1}^{-\alpha}}}$ is suboptimal in this case as it imposes a $\mathcal{O}(\sqrt{T})$ scaling.} Note that the rank $1$ assumption is not needed in the continuous relaxation and therefore our results still hold whenever $a(t)a(t)^{T}$ is replaced by any positive semi-definite matrix $H(t)$.
\end{remark}


\subsection{Statement of Lemma \ref{CEN Lemma Linear}}

In order to prove previous property on $\mathbb{V}_{\alpha}$, a key step mirroring the MAB case is the use of high confidence lower bound on the censorship process, proven using anytime matrix martingale inequalities:
\begin{lemma}(\cite{adpt_cofond_Russo}) \label{CEN Lemma Linear}
For any $\delta \in ]0,1]$, $\lambda>0$ and policy $\pi$, let's introduce the event:
\begin{align*}
    \mathcal{H}_{\text{CEN}}^{II}(\delta) \triangleq \Big\{\exists t\geq 0, \mathbb{W}^{C}_{t} \prec \frac{1}{c_{\delta}} \mathbb{W}_{t}\Big\},
\end{align*}
where $c_{\delta}\triangleq 8\max(\frac{\log(d/\delta))}{\lambda},1)$.
We then have $\mathbb{P}(\mathcal{H}_{\text{CEN}}^{II}(\delta))\leq \delta$.
\end{lemma}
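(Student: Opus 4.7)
The plan is to recast the statement as a matrix-martingale deviation bound on $M_t \triangleq \mathbb{W}_t - \mathbb{W}^C_t = \sum_{n=1}^t (p(a_n) - x_{a_n})\, a_n a_n^\top$. Conditional on the filtration generated by past actions, censorship, and rewards, $\{x_{a_n} - p(a_n)\}_{n\ge 1}$ is a bounded martingale difference sequence with values in $[-1,1]$, so $(M_t)_{t\ge 0}$ is a self-adjoint matrix martingale started at $0$. Since $a_n \in \mathbb{B}_d$, each increment satisfies $\|(p(a_n) - x_{a_n})\,a_n a_n^\top\|_{\mathrm{op}} \le 1$, and its conditional second moment is bounded by $p(a_n)(1-p(a_n))\|a_n\|^2\, a_n a_n^\top \preceq p(a_n)\,a_n a_n^\top$. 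The predictable quadratic variation therefore obeys $\langle M\rangle_t \preceq \mathbb{W}_t - \lambda\mathbb{I}_d$, which is precisely the object against which we want to compare $M_t$.

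Next, I would invoke an anytime matrix Freedman inequality (Tropp) to get that, with probability at least $1-\delta$, for every $t\ge 0$,
\[
\|M_t\|_{\mathrm{op}} \;\le\; \sqrt{2\,\|\langle M\rangle_t\|_{\mathrm{op}}\,\log(d/\delta)}\; +\; \tfrac{2}{3}\log(d/\delta).
\]
Because $\lambda\mathbb{I}_d \preceq \mathbb{W}_t$, one has $\|\langle M\rangle_t\|_{\mathrm{op}} \le \|\mathbb{W}_t\|_{\mathrm{op}}$ and $\log(d/\delta) \le (\log(d/\delta)/\lambda)\,\|\mathbb{W}_t\|_{\mathrm{op}}$. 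Combining these with an AM--GM step $\sqrt{ab}\le \tfrac12(a/c + cb)$, with $c$ tuned so that the coefficient on $\|\mathbb{W}_t\|_{\mathrm{op}}$ becomes $1-1/c_\delta$, yields $\|M_t\|_{\mathrm{op}}\le (1-1/c_\delta)\|\mathbb{W}_t\|_{\mathrm{op}}$ for $c_\delta = 8\max(\log(d/\delta)/\lambda,1)$. Since $M_t$ and $\mathbb{W}_t$ are self-adjoint and $\mathbb{W}_t\succeq \lambda\mathbb{I}_d$, this operator-norm estimate lifts to the Loewner inequality $M_t\preceq (1-1/c_\delta)\,\mathbb{W}_t$, which rearranges to $\mathbb{W}^C_t = \mathbb{W}_t - M_t \succeq \mathbb{W}_t/c_\delta$ and proves the lemma.

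The main obstacle I anticipate is the last conversion step, namely upgrading an additive operator-norm bound into a multiplicative PSD inequality valid on the \emph{entire} spectrum of $\mathbb{W}_t$. The regularization $\lambda\mathbb{I}_d\preceq\mathbb{W}_t$ is the crucial lever that makes this possible: it absorbs the $\log(d/\delta)$ term into a multiple of $\mathbb{W}_t$, which is the source of the $\log(d/\delta)/\lambda$ appearing in $c_\delta$. A second subtlety is the \emph{anytime, data-adaptive} nature of the conclusion: since $\|\mathbb{W}_t\|_{\mathrm{op}}$ is random and grows with $t$, the Freedman inequality must be applied through a peeling scheme over dyadic levels $\|\mathbb{W}_t\|_{\mathrm{op}}\in [2^k\lambda,2^{k+1}\lambda)$, with a union bound over $k$ costing only a doubly-logarithmic factor that is absorbed in the constant $8$ entering $c_\delta$.
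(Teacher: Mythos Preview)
The paper does not supply its own proof of this lemma; it is quoted directly from the cited reference and only the statement appears in the appendix. So there is no in-paper argument to compare against, and I can only assess the internal soundness of your proposal.

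Your argument has a genuine gap at the final ``lifting'' step. From the operator-norm inequality $\|M_t\|_{\mathrm{op}}\le (1-1/c_\delta)\,\|\mathbb{W}_t\|_{\mathrm{op}}$ you conclude the Loewner bound $M_t\preceq (1-1/c_\delta)\,\mathbb{W}_t$. This implication is false in general: $\|M_t\|_{\mathrm{op}}\le C$ yields only $M_t\preceq C\,\mathbb{I}_d$, and passing from $C\,\mathbb{I}_d$ to $(1-1/c_\delta)\,\mathbb{W}_t$ would require $C\le (1-1/c_\delta)\,\lambda_{\min}(\mathbb{W}_t)$, not $C\le (1-1/c_\delta)\,\lambda_{\max}(\mathbb{W}_t)$. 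Concretely, since $\langle M\rangle_t\preceq \mathbb{W}_t-\lambda\mathbb{I}_d$ can have operator norm of order $t$, Freedman only delivers $\|M_t\|_{\mathrm{op}}=O\big(\sqrt{t\log(d/\delta)}\big)$, which far exceeds $(1-1/c_\delta)\,\lambda_{\min}(\mathbb{W}_t)$ whenever one direction of $\mathbb{W}_t$ has been sampled heavily while another remains at order $\lambda$. The hypothesis $\mathbb{W}_t\succeq\lambda\,\mathbb{I}_d$ does not rescue the step; it controls the \emph{smallest} eigenvalue of $\mathbb{W}_t$, whereas the bound you derived is against its \emph{largest}.

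What is actually needed is a \emph{self-normalized} control, i.e.\ a uniform-in-$t$ bound on $\lambda_{\max}\big(\mathbb{W}_t^{-1/2}M_t\,\mathbb{W}_t^{-1/2}\big)$, so that each spectral direction of $\mathbb{W}_t$ is compared against its own accumulated variance rather than the largest one. Your Freedman-plus-peeling machinery is the right engine, but it has to be run in a way that preserves this directional information; the operator-norm shortcut discards exactly what the Loewner conclusion requires. This is the content of the anytime matrix-martingale result the paper cites, where the additional difficulty is that the normalizer $\mathbb{W}_t$ is itself random and time-varying.
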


Note that picking as in the MAB case $\delta \sim d/T^{2}$ would lead to a constant $c_{\delta}=\Theta(\log(T))$, that is a worsening confidence interval, except if we manage to control the initialization. One interesting technical question for future work would be to allow an initialization condition as in Lemma \ref{Derando Censo Linear} ensuring $\mathbb{W}(T_{0})$ counterbalance $\log(d/\delta)$.


\subsection{Proof of Thm. \ref{THM Linear arms}}

\THMLineararms*

\begin{proof}
Analogous to the MAB case, we use Lemma \ref{Potential Reduction Linear} to deduce:
\begin{align*}
    \mathbb{E}[R(T,\pi_{\text{UCB}})] \leq 2\Tilde{\beta_{\delta}}(T) \sqrt{T\mathbb{E}[\mathbb{V}_{1}(T,\pi_{\textit{UCB}})]} + \delta T\Delta_{max},
\end{align*}
where we have:
\begin{align*}
    \Tilde{\beta}_{\delta}(T) = \sqrt{\sigma^{2}d \log (1+\frac{T}{d\lambda})+2\sigma^{2}\log(\frac{1}{\delta})}+\sqrt{\lambda} \|\theta^{\star}\|_{2}.
\end{align*}
We then pick $\delta = \frac{d}{T^{2}}$, which yields:
\begin{align*}
     \mathbb{E}[R(T,\pi_{\text{UCB}})] \leq 2\Big(\sqrt{\sigma^{2}d \log (1+\frac{T}{d\lambda})+2\sigma^{2}\log(\frac{T^{2}}{d})}+\sqrt{\lambda} \|\theta^{\star}\|_{2}\Big) \sqrt{T\mathbb{E}[\mathbb{V}_{1}(T,\pi_{\textit{UCB}})]} + \frac{d \Delta_{max}}{T}.
\end{align*}
We then apply Lemma \ref{Potential Control Linear} with $\alpha=1$ and $\delta = \frac{d}{T^{2}}$ to deduce:
\begin{align*}
     \mathbb{E}[\mathbb{V}_{\alpha}(T,\pi)] &\leq \frac{d}{\lambda T^{2}} + 8\frac{\lambda+1}{\lambda}\max(\frac{2\log(T)}{\lambda},1) \operatorname{Tr}\Big(\int_{0}^{T}\mathbb{W}(t)^{-1}a(t)a(t)^{\top}\partial t\Big) \\
     &\leq \frac{d}{\lambda T^{2}} + 8\frac{\lambda+1}{\lambda}\max(\frac{2\log(T)}{\lambda},1) \max_{\pi \in \Pi}\operatorname{Tr}\Big(\int_{0}^{T}\mathbb{W}(t)^{-1}a(t)a(t)^{\top}\partial t\Big).
\end{align*}
By applying Thm. \ref{THM Linear Optim MTM}, we deduce the two possibilities:
\begin{itemize}
    \item \textbf{Case 1: Single region $i_{l}$.} The effective dimension corresponding to this dynamics is $d/p_{i_{l}}$, with the following equality for $T\geq t_{l-1}$:
    \begin{align*}
        \max_{\pi \in \Pi}\operatorname{Tr}\Big(\int_{0}^{T}\mathbb{W}(t)^{-1}a(t)a(t)^{\top}\partial t\Big) = \frac{1}{p_{i_{l}}}\log\det(\mathbb{W}(T))+ \sum_{n=1}^{l-1} (\frac{1}{p_{i_{n}}}-\frac{1}{p_{i_{n+1}}})\log\det\mathbb{W}(t_{n}),
    \end{align*}
    where we have for $T\geq t_{l-1}$ $\mathbb{W}(T) =  p_{i_{l}}(T-t_{l-1})\mathbb{W}_{i_{l}} + \mathbb{W}(t_{l-1})$. Explicit formula of $(t_{n},\mathbb{W}(t_{n}))$ are given for all $n\leq l$ in Cor. \ref{Path Formula}. We then note that:
    \begin{align*}
        \frac{1}{p_{i_{l}}}\log\det(\mathbb{W}(T)) &= \frac{1}{p_{i_{l}}}\log\det(p_{i_{l}}(T-t_{l-1})\mathbb{W}_{i_{l}} + \mathbb{W}(t_{l-1})) \\
        &= d_{\mathit{eff}}\log(T) + \frac{1}{p_{i_{l}}}\log\det(p_{i_{l}}(1-\frac{t_{l-1}}{T})\mathbb{W}_{i_{l}} + \frac{1}{T} \mathbb{W}(t_{l-1})).
    \end{align*}
    For $T\geq t_{l-1}$, we then write this in the form:
    \begin{align*}
        \max_{\pi \in \Pi}\operatorname{Tr}\Big(\int_{0}^{T}\mathbb{W}(t)^{-1}a(t)a(t)^{\top}\partial t\Big) = d_{\mathit{eff}}\log(T) + f(T),
    \end{align*}
    where $f(T)=o(\log(T))$.
    \item \textbf{Case 2: Bi-region $(i_{l+1},i_{l})$.} Similarly, for $T\geq t_{l}$, we have:
    \begin{align*}
        \max_{\pi \in \Pi}\operatorname{Tr}\Big(\int_{0}^{T}\mathbb{W}(t)^{-1}a(t)a(t)^{\top}\partial t\Big) &= d_{\mathit{eff}}\log(1+\frac{T-t_{l}}{t_{l}+\lambda^{\star}}) + \sum_{n=1}^{l} (\frac{1}{p_{i_{n}}}-\frac{1}{p_{i_{n+1}}})\log\det\mathbb{W}(t_{n}) \\
        &=  d_{\mathit{eff}}\log(T)+ d_{\mathit{eff}}\log(\frac{1}{T}+\frac{1-\frac{t_{l}}{T}}{t_{l}+\lambda^{\star}}) \\
        &\quad \quad + \sum_{n=1}^{l} (\frac{1}{p_{i_{n}}}-\frac{1}{p_{i_{n+1}}})\log\det\mathbb{W}(t_{n}) \\
        &= d_{\mathit{eff}}\log(T)+ f(T),
    \end{align*} 
    where $f(T)=o(\log(T))$.
\end{itemize}
Therefore, for given $d_{\mathit{eff}}$, $f$ and $t_{0}$, we know that the following holds for all $T\geq t_{0}$:
\begin{align*}
    \mathbb{E}[\mathbb{V}_{\alpha}(T,\pi)] &\leq \frac{d}{\lambda T^{2}} + 8\frac{\lambda+1}{\lambda}\max(\frac{2\log(T)}{\lambda},1) \operatorname{Tr}\Big(\int_{0}^{T}\mathbb{W}(t)^{-1}a(t)a(t)^{\top}\partial t\Big) \\
     &\leq \frac{d}{\lambda T^{2}} + 8\frac{\lambda+1}{\lambda}\max(\frac{2\log(T)}{\lambda},1) (d_{\mathit{eff}}\log(T)+ f(T)).
\end{align*}
Putting the pieces together yields for $T\geq t_{0}$:
\begin{align*}
    \mathbb{E}[R(T,\pi_{\text{UCB}})] &\leq 2\Big(\sqrt{\sigma^{2}d \log (1+\frac{T}{d\lambda})+2\sigma^{2}\log(\frac{T^{2}}{d})}+\sqrt{\lambda} \|\theta^{\star}\|_{2}\Big) \sqrt{T}\Big(\frac{d}{\lambda T^{2}} \\&+ 8\frac{\lambda+1}{\lambda}\max(\frac{2\log(T)}{\lambda},1) (d_{\mathit{eff}}\log(T)+ f(T))\Big)^{1/2} + \frac{d \Delta_{max}}{T}.
\end{align*}
By imposing regularization of order $\lambda = o(\log(T))$ only considering the leading order, this yields:
\begin{align*}
    \mathbb{E}[R(T,\pi_{\text{UCB}})] \leq \Tilde{\mathcal{O}}(\sqrt{(d+4)\sigma^{2}}\sqrt{d_{\mathit{eff}}}\sqrt{T}).
\end{align*}
Finally, by working in large $d$ regime, we finally conclude that:
\begin{align*}
    \mathbb{E}[R(T,\pi_{\text{UCB}})] \leq \Tilde{\mathcal{O}}(\sigma\sqrt{d\cdot d_{\mathit{eff}}}\sqrt{T}).
\end{align*}
Again, we note that our proof easily allows to get high-probability bounds on regret instead of bounds on its expected value.

\end{proof}

\subsection{Extension to Generalized Linear Contextual Bandits} \label{gen linear}

On what follows, we provide a sketch of the extension our results to Generalized Linear Contextual Bandits (GLCB) but differ the complete treatment to future work. In this model, the reward of a given action $a$ is assumed to be of the form:
\begin{align*}
    r(a) = \mu(\langle a,\theta^{\star}\rangle)
\end{align*}
for a given function $\mu$ strictly increasing, continuously differentiable and real-valued. Notable instances of such a problem include the Logistic bandit and the Poisson bandit. Of particular importance in the dimensionality study of the problem are the constants:
\begin{align*}
    L_{\mu}=\sup _{a \in \cup\mathcal{A}_{t}} \mu^{(1)}(\langle a, \theta^{\star}\rangle) \quad \text{and} \quad \kappa=\inf _{a \in \cup\mathcal{A}_{t}} \mu^{(1)}(\langle a, \theta^{\star}\rangle).
\end{align*}
An important requirement of GLCB is the assumption $\kappa>0$ needed to ensure identifiability of $\theta^{\star}$ and asymptotic normality. 
Given this, the suited definition of pseudo-regret considered is: 
\begin{align*}
    R(T,\pi) \triangleq \sum_{t=1}^{T} \max_{a\in\mathcal{A}_{t}}\mu(\langle a, \theta^{\star}\rangle) - \mu(\langle a_{t}, \theta^{\star}\rangle) 
\end{align*}
Note that this regret can be easily mapped to the one studied above thanks to the fact that $L_{\mu}$ is a Lipschitz constant for $\mu$: for all $a,\Tilde{a} \in \cup\mathcal{A}_{t}$, $|\mu(\langle a,\theta^{\star}\rangle)-\mu(\langle \Tilde{a},\theta^{\star}\rangle)|\leq L_{\mu}|\langle a,\theta^{\star}\rangle-\langle \Tilde{a},\theta^{\star}\rangle|$. Mirroring the proof of \cite{li2017provably}, we use a Maximum Likelihood Estimator (MLE) instead of a Least-Square Estimator for $\theta^{\star}$. More precisely, we define $\hat{\theta}^{\mathit{MLE}}_{t}$ as the solution of the equation:
\begin{align*}
    \sum_{n=1}^{t}\langle a_{n},\epsilon_{t}+\mu(\langle a_{n},\theta^{\star}\rangle)-\mu(\langle a_{n},\theta\rangle)\rangle = 0
\end{align*}
A minor difference between the approach of \cite{li2017provably} and what precedes is the use of a period of initial random sampling (e.g. \textit{exploration}) instead of the regularization to ensure inversibility of the design matrix $\mathbb{W}^{C}_{t}$. More precisely, the initial sampling ensures that with high-probability, $\lambda_{\min}(\mathbb{W}^{C}_{t})>0$ in a finite time $T_{\text{init}}$. To be possible, this requires the assumption that there exists $\sigma_{0}^{2}>0$ such that for all $t\geq 1$, we have $\lambda_{min}\left(\mathbf{E}_{a \in \mathcal{A}_{t}}\left[a a^{\top}\right]\right) \geq \sigma_{0}^{2}$, where the expectation $\mathbf{E}$ is associated with an uniform sampling of actions. Under the same assumption, the impact of censorship on this initialization step is at worst an increase of the sampling time to $\Tilde{T_{\text{init}}}\triangleq T_{\text{init}}/p_{\min}$, which is still constant. Following Lemma $9$ of \cite{li2017provably}, we then consider the censored high-probability confidence set for any $\delta \in [\frac{1}{T},1]$:
\begin{align*}
    \mathcal{H}_{\text{UCB}}^{III}(\delta) \triangleq \Big\{\exists t \geq 0, \|\hat{\theta}^{\mathit{MLE}}_{t}-\theta^{\star}\|_{\mathbb{W}^{C}_{t}} >  \frac{\sigma}{\kappa} \sqrt{\frac{d}{2} \log (1+2 \frac{t}{d})+\log (1 / \delta)} \quad \text{and} \quad \lambda_{\min}(\mathbb{W}^{C}_{t})>1 \Big\}.
\end{align*}
and a direct extension of their results allows us to conclude $\mathbb{P}(\mathcal{H}_{\text{UCB}}^{III}(\delta))\leq \delta$. Note that the constant $\kappa$ appears when upper bounding in the Loewner order the Fischer Information Matrix of the MLE by the matrix $\mathbb{W}^{C}_{t}$. Post-initialization, the conditional regret is then upper bounded by:
\begin{align*}
    R(T,\pi_{\text{UCB}}|\neg \mathcal{H}_{\text{UCB}}^{III}(\delta)) &\leq \Tilde{T_{\text{init}}}\Delta_{max} + \sum^{T}_{t=T_{\text{init}}}L_{\mu} \frac{\sigma}{\kappa} \sqrt{\frac{d}{2} \log (1+2 \frac{t}{d})+\log (1 / \delta)}\|a_{t}\|_{(\mathbb{W}^{C}_{t})^{-1}} \\
    &\leq \Tilde{T_{\text{init}}}\Delta_{max} + L_{\mu} \frac{\sigma}{\kappa} \sqrt{\frac{d}{2} \log (1+2 T / d)+\log (1 / \delta)}\sqrt{T\mathbb{V}_{1}(\pi_{\text{UCB}},T)},
\end{align*}
Combining these elements and taking $\delta = \frac{1}{T}$, we conclude that:
\begin{align*}
    \mathbb{E}[R(T,\pi_{\text{UCB}})]&\leq \Tilde{\mathcal{O}}\Big(\frac{L_{\mu}}{\kappa}\sqrt{d}\sqrt{T\mathbb{E}[\mathbb{V}_{1}(\pi_{\text{UCB}},T)]}\Big) \leq \Tilde{\mathcal{O}}\Big(L_{\mu}\frac{\sqrt{d\cdot d_{\mathit{eff}}}}{\kappa}\sqrt{T}\Big),
\end{align*}
where we used Thm. \ref{THM Linear Optim MTM} to control $\mathbb{E}[\mathbb{V}_{1}(\pi_{\text{UCB}},T)]$ as done in the proof of Th,\ref{THM Linear arms}.


\section{Effective Dimension and Temporal Dynamics for Multi-Threshold Models} \label{Proof Multi}

In this section, we prove Thm. \ref{THM Linear Optim MTM} and discuss its implications. In doing so, we introduce and prove Lemmas \ref{One-step Transient Analysis}, \ref{Dual Reachability Analysis}, \ref{Bi-Region Effective Dimension} and Cor. \ref{Path Formula}. We conclude the section by illustrating results for the single-threshold model, through Cor. \ref{single_thres}.


\subsection{Supplementary Notations}\label{sup_not}

Without loss of generality (i.e. up to an orthogonal transformation), we can consider that $u\equiv e_{d}$, the $d^{th}$ basis vector. Given this, for two regions $i<j$, we introduce the notations:
\begin{align*}
    l(i,j) \triangleq \frac{\sin^{2}(\rho_{i})}{\sin^{2}(\rho_{j})} \quad &\textit{and} \quad u(i,j)  \triangleq \frac{\cos^{2}(\rho_{i})}{\cos^{2}(\rho_{j})} \\
    r^{\star}(i,j) \triangleq \frac{(d-1) u(i,j)+l(i,j)}{d} \quad &\textit{and} \quad r^{\dagger}(i,j) \triangleq \frac{1}{r^{\star}(j,i)} = \frac{dl(i,j)u(i,j)}{u(i,j)+(d-1) l(i,j)} \\
    \mathbb{W}_{i} \triangleq \begin{pmatrix}
\frac{\cos^{2}(\rho_{i})}{d-1}\mathbb{I}_{d-1} & (0) \\
(0) & \sin^{2}(\rho_{i}) 
\end{pmatrix}\quad &\textit{and} \quad\mathbb{W}(i,j) = \begin{pmatrix}
\cos^{2}(\rho_{j})(u(i,j)-\frac{p_{i}}{p_{j}})\mathbb{I}_{d-1} & (0) \\
(0) & \sin^{2}(\rho_{j})(\frac{p_{i}}{p_{j}}-l(i,j)) 
\end{pmatrix}.
\end{align*}
Whenever $i$ and $j$ are clear from context, we use in $u$ (resp. $l$) as abbreviation for $u(i,j)$ (resp. $l(i,j)$).

\subsection{Proof of Thm. \ref{THM Linear Optim MTM}}

\THMLinearOptimMTM*

\begin{algorithm}
    \SetAlgoLined
    \KwInit{Set current region $S\gets k$}
    \While(\tcc*[f]{Lemma \ref{One-step Transient Analysis},Fig.\ref{reach_stat}}){a region is reachable from region $S$}{
        \textbf{play} region $S$ optimal policy \textbf{until} first reachable region $i^{\star}$ is reached\;
        \eIf(\tcc*[f]{Lemma \ref{Dual Reachability Analysis}, Fig.\ref{reach}}){region $i^{\star}$ is dual reachable from region $S$}{
            Bi-region $(i^{\star},S)$ effective dimension (case 2)\tcc*{Lemma \ref{Bi-Region Effective Dimension}}
            \textbf{play} Bi-region $(i^{\star},S)$ optimal policy\;
            \textbf{End}\;}
            {Update current region $S\gets i^{\star}$\tcc*{Lemma \ref{Dual Reachability Analysis}, Fig.\ref{switch}}
            }
    }
    Single region $S$ effective dimension (case 1)\tcc*{Lemma \ref{One-step Transient Analysis}}
    \textbf{play} region $S$ optimal policy\;
\caption{Algorithmic description of the dynamics of $\mathbb{W}(t)$}
\label{Dyn_MT}
\end{algorithm}

\begin{proof}
We first summarize the dynamics of the optimal policy of (\ref{optim_prob}) through an algorithmic description in Alg. \ref{Dyn_MT}. Two key notions of our analysis are the concepts of reachability and dual reachability of a region $i$ from a base region $j$, as described in Lemmas \ref{One-step Transient Analysis} and \ref{Dual Reachability Analysis} and schematized in Fig.\ref{reach_stat}, \ref{reach} and \ref{switch}. Formally, they can be written as two independent necessary constraints on the ratio $p_{i}/p_{j}$: $p_{i}/p_{j}<r^{\star}(i,j)$ for reachability and $p_{i}/p_{j}>r^{\dagger}(i,j)$ for dual reachability.

The categorization result provided in the statement of Thm. \ref{THM Linear Optim MTM} follows from the two possible termination condition of the algorithm. We use as algorithmic invariant to ensure the termination the fact that the set of reachable regions is strictly decreasing for inclusion and finite. Hence, the while loop will terminate either because a dual reachable region is reached or because no more regions are reachable. In order to not overload the presentation, time aspect is not present in the algorithmic description but is extensively covered in Lemmas \ref{One-step Transient Analysis}, \ref{Dual Reachability Analysis}, \ref{Bi-Region Effective Dimension} and Cor. \ref{Path Formula}, as well as in what follows. One of our main finding is that the dynamics of the optimal policy of (\ref{optim_prob}) are described through $\mathbb{W}(t)$ by two qualitatively different regimes. We emphasize that our continuous approach to analyzing cumulative censored potential is key to obtaining these results.
\paragraph{Transient Regime:}
From the \textbf{while} loop in the algorithmic description results a so-called transient regime. More precisely, there exists a decreasing sequence of censorship regions $\{i_{1}=k,\dots,i_{l}\}$ of length $l \in [k+1]$ and associated time sequence $\{t_{0}\triangleq 0,t_{1},\dots,t_{l}\}$ such that whenever $t_{j}\leq t \leq t_{j+1}$ for a given index $j\leq l-1$, the evolution of $\mathbb{W}(t)$ is given by:
    \begin{align*}
        \mathbb{W}(t) &=  p_{i_{j+1}}(t-t_{j})\mathbb{W}_{i_{j+1}} + \mathbb{W}(t_{j}) = p_{i_{j+1}}(t-t_{j})\mathbb{W}_{i_{j+1}} + \sum_{n=1}^{j} p_{i_{n}}(t_{n}-t_{n-1})\mathbb{W}_{i_{n}} + \lambda \mathbb{I}_{d}.
    \end{align*}
    This result follows from a simple induction with repeated use of Lemma \ref{One-step Transient Analysis}, giving the exact sequence of censorship regions, Moreover, closed-formed formula for the time sequence is provided in Cor. \ref{Path Formula}. We interpret this transient step as an adversarial self-correction of the initial misspecification of censorship at an extra cost. This characterization of transient regime highlights an important consequence of using classical algorithms in censored environments.
\paragraph{Steady State Regime:} Post-transient regime, the dynamics of $\mathbb{W}(t)$ enter a steady state regime, where one of the two cases necessarily arise:  
\begin{itemize}
    \item \textbf{Case 1: Single region $i_{l}$.} This case arises when the \textbf{while} loop ends because no other regions are reachable. It is equivalent to have the last element of the time sequence $t_{l}$ is equal to $+\infty$ and we have the single region evolution for all $t\geq t_{l-1}$ thanks to Lemma \ref{One-step Transient Analysis}:
    \begin{align*}
        \mathbb{W}(t) &=  p_{i_{l}}(t-t_{l-1})\mathbb{W}_{i_{l}} + \mathbb{W}(t_{l-1}) =  p_{i_{l}}(t-t_{l-1})\mathbb{W}_{i_{l}} + \sum_{n=1}^{l-1} p_{i_{n}}(t_{n}-t_{n-1})\mathbb{W}_{i_{n}}  + \lambda \mathbb{I}_{d}.
    \end{align*}
    The effective dimension corresponding to this dynamic is $d/p_{i_{l}}$, with the following equality for $T\geq t_{l-1}$:
    \begin{align*}
        \int_{0}^{T}\frac{1}{p(a(t))}\frac{\partial\log\det(\mathbb{W}(t))}{\partial t}\partial t = \frac{1}{p_{i_{l}}}\log\det(\mathbb{W}(T))+ \sum_{n=1}^{l-1} (\frac{1}{p_{i_{n}}}-\frac{1}{p_{i_{n+1}}})\log\det\mathbb{W}(t_{n}),
    \end{align*}
    where the closed-form formula for $\mathbb{W}(t_{n})$ is provided in Cor. \ref{Path Formula} for all $n\leq l-1$.
    \item \textbf{Case 2: Bi-region $(i_{l+1},i_{l})$.} This case arises when the \textbf{while} loop ends because dual reachable region $i_{l+1}$ is reached from region $i_{l}$, with $i_{l+1}<i_{l}$. For all $t\geq t_{l}$, Lemma \ref{Dual Reachability Analysis} yields the evolution:
    \begin{align*}
         \mathbb{W}(t) &\propto p_{i_{l+1}}(t+\lambda^{\star})\begin{pmatrix}
\cos^{2}(\phi_{i_{l}})(u(i_{l+1},i_{l})-\frac{p_{i_{l+1}}}{p_{i_{l}}})\mathbb{I}_{d-1} & (0) \\
(0)  &\sin^{2}(\phi_{i_{l}})(\frac{p_{i_{l+1}}}{p_{j}}-l(i_{l+1},i_{l})) 
\end{pmatrix}.
    \end{align*}
    where $\lambda^{\star}$ and the proportionality factor are specified in the proof. The corresponding effective dimension is given by (\ref{bi_reg}) and the following equality holds for all $T\geq t_{l}$ thanks to Lemma \ref{Bi-Region Effective Dimension}:
    \begin{align*}
        \int_{0}^{T}\frac{1}{p(a(t))}\frac{\partial\log\det(\mathbb{W}(t))}{\partial t}\partial t = d_{\mathit{eff}}\log(1+\frac{T-t_{l}}{t_{l}+\lambda^{\star}}) + \sum_{n=1}^{l} (\frac{1}{p_{i_{n}}}-\frac{1}{p_{i_{n+1}}})\log\det\mathbb{W}(t_{n}),
    \end{align*}
    where the closed-form formula for $\mathbb{W}(t_{n})$ is provided in Cor. \ref{Path Formula} for all $n\leq l$.
\end{itemize}
\end{proof}
\begin{remark}
Fig.\ref{reach_stat} and \ref{switch} provide further insights on formula (\ref{bi_reg}) for $d_{\mathit{eff}}$. Throughout the proof and as illustrated on Fig.\ref{reach_stat}, we see that for (\ref{bi_reg}) to arise, $\frac{p_{i}}{p_{j}}$ must belong to a certain interval $J\triangleq ]\max(1,r^{\dagger}(i,j)),r^{\star}(i,j)[$. As $r^{\star}(i,j)< u(i,j)$ and $r^{\dagger}(i,j)> l(i,j)$, we see (\ref{bi_reg}) as a weighted average of the relative distance of $\frac{p_{i}}{p_{j}}$ to $u(i,j)$ and $l(i,j)$. Fig.\ref{deff} provides a sketch of the variations of $d_{\mathit{eff}}$ as $\frac{p_{i}}{p_{j}}$ evolves in this interval.
\end{remark}

\begin{figure}[h]
    \centering
    \includegraphics[width=0.8 \textwidth]{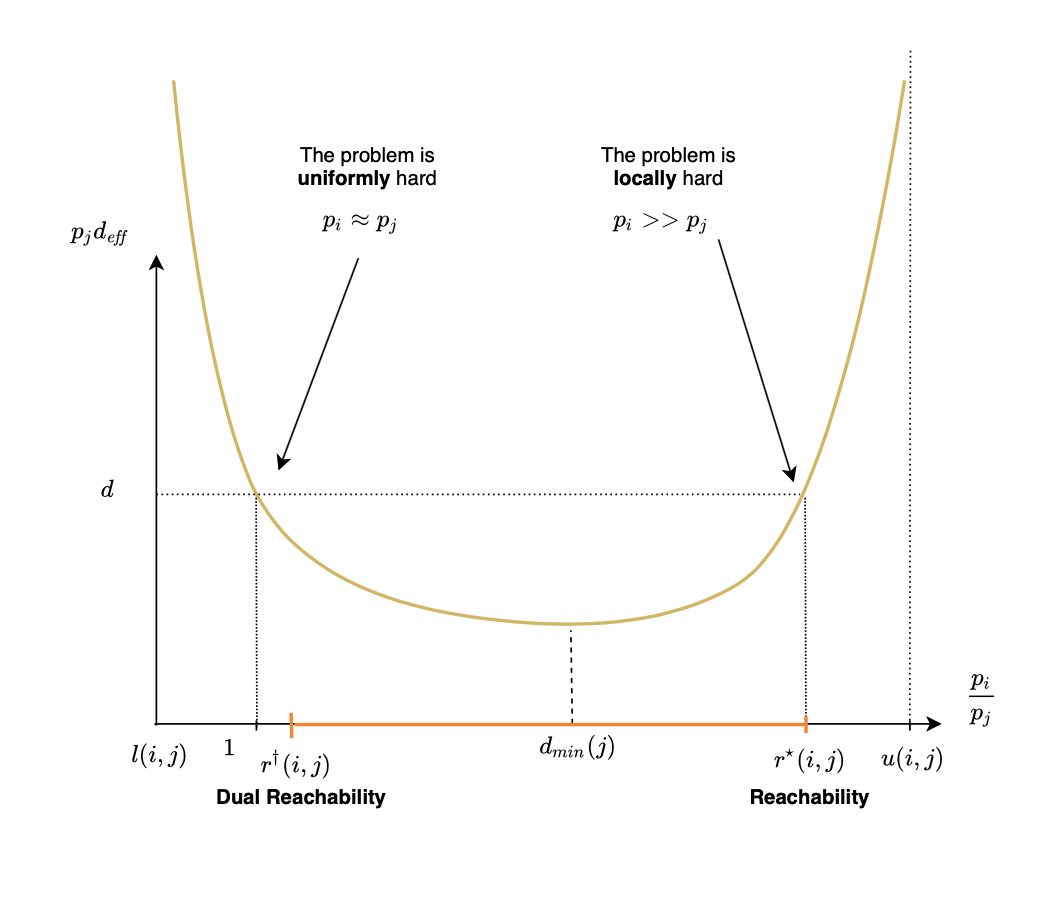}
    \caption{Sketch plot of normalized effective dimension $p_{j}d_{\mathit{eff}}$ with respect to $\frac{p_{i}}{p_{j}}$. We recover the uniform and local hardness conditions mentioned in the discussion of Thm. \ref{THM Linear Optim MTM}, as well as the existence of a \textit{minimum effective dimension} for a certain value of $\frac{p_{i}}{p_{j}}$. The necessary conditions of reachability and dual reachability (Lemma \ref{Dual Reachability Analysis} and \ref{One-step Transient Analysis}) verified by $\frac{p_{i}}{p_{j}}$ impose that it belongs to the orange segment.}
    \label{deff}
\end{figure}


\subsection{Statement and Proof of Lemma \ref{One-step Transient Analysis}}

\begin{lemma}[Reachability Analysis]\label{One-step Transient Analysis}
Let's assume we start at a given time $t_{1}$ in transient censored region $j$, with a matrix
\begin{align*}
    \mathbb{W}(t_{1}) = \begin{pmatrix}
\lambda_{a}\mathbb{I}_{d-1} & (0) \\
(0) & \lambda_{b}
\end{pmatrix},
\end{align*}
where $\lambda_{a}\geq \lambda_{b}$. We introduce $I_{j}\triangleq \{i; i <j \quad \textit{and} \quad \frac{p_{i}}{p_{j}} < r^{\star}(i,j)\}$, the set of reachable regions from region $j$ and affirm that we have the two possible cases:
\begin{itemize}
    \item If $I_{j} = \varnothing$, i.e. no region is reachable from region $j$, we switch to a steady state regime with single region $j$ effective dimension $d_{\mathit{eff}}=d/p_{j}$.
    \item Otherwise, next region added to the transient sequence is $i^{\star}\triangleq \operatorname{argmin}_{i\in I_{j}}\mu^{\star}(i,j,\lambda_{a},\lambda_{b})$, at time $t_{2}\triangleq t_{1}+\frac{1}{p_{j}}\mu^{\star}(i^{\star},j,\lambda_{a},\lambda_{b})$ and we have:
    \begin{align*}
    \mathbb{W}(t_{2}) = \frac{(d-1)\sin^{2}(\phi_{j})\lambda_{a}-\cos^{2}(\phi_{j})\lambda_{b}}{d\cos^{2}(\phi_{j})\sin^{2}(\phi_{j})(r^{\star}(i^{\star},j)-\frac{p_{i}}{p_{j}})}\mathbb{W}(i^{\star},j).
\end{align*}
\end{itemize}
\end{lemma}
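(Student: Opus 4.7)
My plan is to exploit the rotational symmetry around $u = e_d$ preserved by $\mathbb{W}(t_1)$. Within region $j$, the maximization of $\operatorname{Tr}(\mathbb{W}^{-1} a a^\top)$ over actions in the unit ball with $\langle a, e_d \rangle \in [\sin\phi_j, \sin\phi_{j+1})$ is solved by first fixing $\langle a, e_d\rangle = \sin\phi_j$ (the boundary consistent with the sign pattern forced by $\lambda_a \geq \lambda_b$) and then spreading the remaining $d-1$ coordinates uniformly on the orthogonal sphere of radius $|\cos\phi_j|$. Since this produces an effective instantaneous outer product equal to $\mathbb{W}_j$, so long as the policy remains in region $j$ the state evolves linearly as $\mathbb{W}(t) = \mathbb{W}(t_1) + p_j(t-t_1)\,\mathbb{W}_j$ and preserves the diagonal form $\operatorname{diag}(\lambda_a(t)\mathbb{I}_{d-1}, \lambda_b(t))$.

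Next I would characterize when switching to a competing region $i$ becomes favorable. While in region $j$, the integrand of \eqref{optim_prob} telescopes to $\tfrac{1}{p_j}[\log\det\mathbb{W}(t_2)-\log\det\mathbb{W}(t_1)]$, so the overall policy optimization reduces to a convex program over nonnegative time-allocations $(\tau_i)_i$ with $\sum_i \tau_i = T$, with updates of $\mathbb{W}$ confined to the cone generated by the matrices $\{p_i \mathbb{W}_i\}_i$. A KKT analysis mirroring the water-filling argument of Lemma~\ref{Optimization Lemma Finite}, now adapted to the matrix setting, shows that switching from $j$ to $i$ becomes optimal exactly when the normalized marginal rates equalize:
\begin{equation*}
\tfrac{1}{p_i}\operatorname{Tr}\bigl(\mathbb{W}^{-1}\mathbb{W}_i\bigr) \;=\; \tfrac{1}{p_j}\operatorname{Tr}\bigl(\mathbb{W}^{-1}\mathbb{W}_j\bigr).
\end{equation*}
For $\mathbb{W} = \operatorname{diag}(\lambda_a\mathbb{I}_{d-1},\lambda_b)$, this collapses to the scalar ratio condition
\begin{equation*}
\tfrac{\lambda_a}{\lambda_b} \;=\; \tfrac{\cos^2\phi_j\,(u(i,j)-p_i/p_j)}{\sin^2\phi_j\,(p_i/p_j - l(i,j))},
\end{equation*}
which is precisely the ratio of the two distinct eigenvalues of $\mathbb{W}(i,j)$.

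Substituting the in-region parametrization $\lambda_a(t) = \lambda_a + \mu\tfrac{\cos^2\phi_j}{d-1}$ and $\lambda_b(t) = \lambda_b + \mu\sin^2\phi_j$, with $\mu = p_j(t-t_1)$, into the ratio condition produces a linear equation in $\mu$; using the identity $d\,r^\star(i,j) = (d-1)u(i,j) + l(i,j)$ it reduces to the closed form $\mu^\star(i,j,\lambda_a,\lambda_b)$ stated by the lemma. A sign analysis of numerator and denominator shows $\mu^\star$ is finite and strictly positive iff $p_i/p_j < r^\star(i,j)$, so $I_j$ is indeed the set of reachable regions. If $I_j = \varnothing$ the matching condition is never met, the dynamics stay in region $j$, and the single-region steady state of case~1 follows. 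Otherwise $i^\star = \operatorname{argmin}_{i \in I_j}\mu^\star$ is the first reachable region; plugging $\mu^\star$ back into $\mathbb{W}(t_1) + \mu^\star\mathbb{W}_j$ and simplifying the two distinct diagonal entries (which, by design of the switching condition, share the eigenvalue ratio of $\mathbb{W}(i^\star,j)$) recovers the announced scalar multiple, with the proportionality factor obtained by matching either eigenvalue in closed form.

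The main obstacle will be rigorously justifying the Lagrangian switching criterion, since \emph{a priori} an optimal policy might defer switches to anticipate future dynamics rather than equalizing instantaneous marginal rates. I would close this gap by lifting the continuous policy optimization to the abovementioned convex program over $(\tau_i)_i$: there, $\log\det\mathbb{W}$ is a concave function of the allocations, and KKT stationarity forces the marginal rates $\tfrac{1}{p_i}\operatorname{Tr}(\mathbb{W}^{-1}\mathbb{W}_i)$ to be equal across all active regions (with inequalities on inactive ones). Once this reduction is in place, the remaining computations are purely algebraic.
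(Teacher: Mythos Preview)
Your proposal follows essentially the same route as the paper's proof: derive the in-region linear evolution $\mathbb{W}(t)=\mathbb{W}(t_1)+\mu\,\mathbb{W}_j$ via an optimal-action argument (you use rotational symmetry where the paper invokes Kiefer--Wolfowitz), impose the marginal-rate equalization $\tfrac{1}{p_i}\operatorname{Tr}(\mathbb{W}^{-1}\mathbb{W}_i)=\tfrac{1}{p_j}\operatorname{Tr}(\mathbb{W}^{-1}\mathbb{W}_j)$ as the switching criterion, reduce it to a scalar equation in $\mu$, and read off $\mu^\star$, the reachability threshold $r^\star(i,j)$, and the formula for $\mathbb{W}(t_2)$. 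The algebraic skeleton is identical.

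One caution on the justification you flag as ``the main obstacle'': the reduction to a convex program in static allocations $(\tau_i)_i$ with objective $\log\det\mathbb{W}$ does not close the gap as written. The integral $\int_0^T\tfrac{1}{p(a(t))}\partial_t\log\det\mathbb{W}(t)\,\partial t$ is genuinely path-dependent (the order of region visits matters because of the $1/p$ weights), so it is not a function of $(\tau_i)$ alone; and even taking $\log\det\mathbb{W}(T)$ as a surrogate, its KKT condition over $(\tau_i)$ with $\sum_i\tau_i=T$ yields $p_i\operatorname{Tr}(\mathbb{W}^{-1}\mathbb{W}_i)=\text{const}$, not the $\tfrac{1}{p_i}$-weighted version you need. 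The paper itself does not rigorously derive the switching rule either---it simply ``interprets'' it as a first-order optimality condition---so you are not behind the paper here, but a full justification would require an optimal-control (Pontryagin/HJB) argument rather than a static KKT.
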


\begin{figure}[h]
    \centering
    \includegraphics[width=0.9 \textwidth]{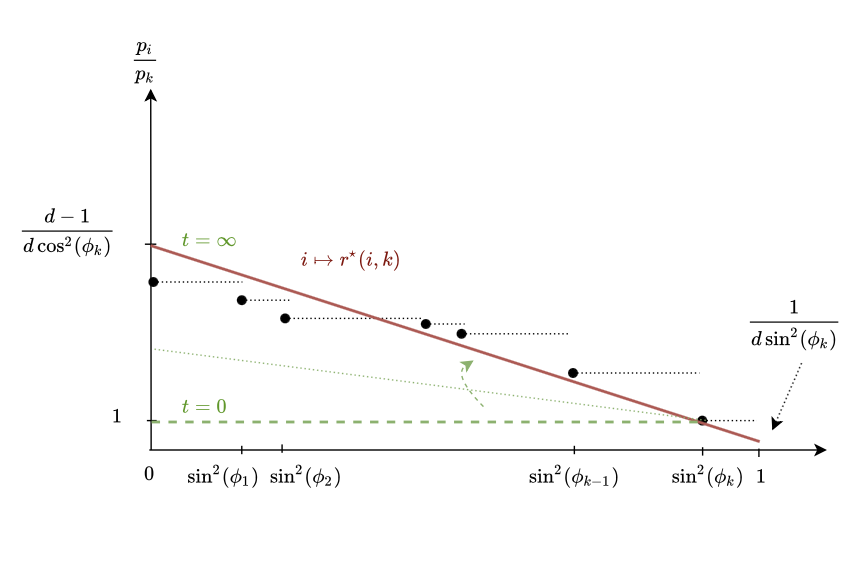}
    \caption{Illustration of the set of reachable regions from a base region $k$, as a function of $\frac{p_{i}}{p_{k}}$. Black dots and lines correspond to censorship regions defined by \ref{MT_model}. In this figure, we see that a region is reachable if and only if the black dot is below the red reachability line. As time increases, the green line rotates with region $k$ as pivot and asymptotically approaches to the red line. Hence, the first reachable region is the one first \textit{reached} by the green line.}
    \label{reach_stat}
\end{figure}

\begin{proof}
First, we note that the initial starting point is recovered for $t_{1}=0$, base censored state $k$  and $\lambda_{a}=\lambda_{b}=\lambda$ but this Lemma allows to go beyond the first step in the study of the behavior of the system. We know the temporal evolution for normalized budget $\mu \triangleq p_{1}(t-t_{1})$ is of the form:
\begin{align*}
    \mathbb{W}(t) =  \begin{pmatrix}
(\mu\frac{\cos^{2}(\phi_{j})}{d-1}+\lambda_{a})\mathbb{I}_{d} & (0) \\
(0) & \mu\sin^{2}(\phi_{j})+\lambda_{b}
\end{pmatrix} = \mu \mathbb{W}_{j} + \mathbb{W}(t_{1}).
\end{align*}
We recall that the set of actions associated with region $j$ is $\{a\in \mathbb{B}_{d}, \sin(\phi_{j}) \leq \langle a,e_{d}\rangle <\sin(\phi_{j+1})\}$. Therefore, the use of Kiefer-Wolfowitz theorem \cite{lattimore2020bandit} combined with the fact $\lambda_{a}\geq \lambda_{b}$ yields that the optimal policy while evolving in region $j$ only plays unit action vector $v_{j}\equiv (\cos(\phi_{j})/(d-1)^{1/2},\dots, \cos(\phi_{j})/(d-1)^{1/2},\sin(\phi_{j}))$. By noting that $v_{j}v_{j}^{\top}=\mathbb{W}_{j}$, we obtain the formula announced. 
Reachability of a given state $i<j$ from state $j$ after time $t_{1}$ is then defined as:
\begin{align*}
    \exists t \geq t_{t}, \quad \frac{1}{p_{i}}\operatorname{Tr}(\mathbb{W}(t)^{-1}\mathbb{W}_{i}) &= \frac{1}{p_{j}}\operatorname{Tr}(\mathbb{W}(t)^{-1}\mathbb{W}_{j}).
\end{align*}
We interpret this as a classical a first-order optimally condition for convex maximization problems, where the matrix $\mathbb{W}_{j}$ is weighted by the censorship probability representing the speed of increase in region $j$. We then rewrite this condition as:
\begin{align*}
    \exists \mu \geq 0, \quad  \frac{1+f(\mu)\cos^{2}(\phi_{i})}{1+f(\mu)\cos^{2}(\phi_{j})} =\frac{p_{i}}{p_{j}} \quad \textit{where} \quad f(\mu) \triangleq \frac{\mu\sin^{2}(\phi_{j})+\lambda_{b}}{\mu\frac{\cos^{2}(\phi_{j})}{d-1}+\lambda_{a}}-1.
\end{align*}
We know that $f$ is increasing in $\mu$ and the LHS of the equation above is decreasing in $f(\mu)$ as $i< j$. Hence, the reachability condition than be stated by looking at the limit of $f$ in $+\infty$. By using the fact that $\lim_{\mu \rightarrow +\infty} f(\mu) = \frac{d\sin^{2}(\phi_{j})-1}{\cos^{2}(\phi_{j})}$, we deduce that the reachability condition is equivalent to looking at the position of $\frac{p_{i}}{p_{j}}$ with respect to: 
\begin{align*}
    r^{\star}(i,j) \triangleq \frac{1+ud[\sin^{2}(\phi_{j})-\frac{1}{d}]}{d\sin^{2}(\phi_{j})} = \frac{(d-1)u+l}{d} = \frac{1}{d}\operatorname{Tr}(\mathbb{W}_{j}^{-1}\mathbb{W}_{i}).
\end{align*}
On the one hand, if $\frac{p_{i}}{p_{j}} \geq r^{\star}(i,j)$, the state in never reachable in a finite time. On the other hand, whenever $\frac{p_{i}}{p_{j}}< r^{\star}(i,j)$, the state is reachable by investing a budget $\mu^{\star}(i,j,\lambda_{a},\lambda_{b})$ such that:
\begin{align*}
    f(\mu^{\star}(i,j,\lambda_{a},\lambda_{b})) = \frac{1}{\cos^{2}(\phi_{j})}\frac{\frac{p_{i}}{p_{j}}-1}{u-\frac{p_{i}}{p_{j}}},
\end{align*}
which in turn involves:
\begin{align*}
    \mu^{\star}(i,j,\lambda_{a},\lambda_{b}) 
    &= \frac{d-1}{d\sin^{2}(\phi_{j})\cos^{2}(\phi_{j})}\frac{(\sin^{2}(\phi_{j})\lambda_{a}+\cos^{2}(\phi_{j})\lambda_{b})\frac{p_{i}}{p_{j}} - (\sin^{2}(\phi_{i})\lambda_{a}+\cos^{2}(\phi_{i})\lambda_{b})}{r^{\star}(i,j)-\frac{p_{i}}{p_{j}}}.
\end{align*}
In particular, at $t_{1}=0$ whenever $\lambda_{b}=\lambda_{a}=\lambda$ and $j=k$, this gives:
\begin{align*}
    \mu^{\star}(i,k,\lambda,\lambda) = \frac{(d-1)\lambda}{d\sin^{2}(\phi_{k})\cos^{2}(\phi_{k})} \frac{\frac{p_{i}}{p_{k}}-1}{r^{\star}(i,k)-\frac{p_{i}}{p_{k}}}.
\end{align*}
The first reachable region from region $j$ is then defined as $i^{\star}\triangleq \operatorname{argmin}_{i\in I}\mu^{\star}(i,j,\lambda_{a},\lambda_{b})$, where $I\triangleq \{i; i <j \quad \textit{and} \quad \frac{p_{i}}{p_{j}} < r^{\star}(i,j)\}$. 
Note that at the moment $t_{2}\triangleq t_{1}+\frac{1}{p_{j}}\mu^{\star}(i^{\star},j,\lambda_{a},\lambda_{b})$ when this region is reached, we have:
\begin{align*}
    \mathbb{W}(t_{2}) = \frac{(d-1)\sin^{2}(\phi_{j})\lambda_{a}-\cos^{2}(\phi_{j})\lambda_{b}}{d\cos^{2}(\phi_{j})\sin^{2}(\phi_{j})(r^{\star}(i^{\star},j)-\frac{p_{i}}{p_{j}})}\mathbb{W}(i,j).
\end{align*}
On the other hand, whenever the set $I$ is empty, by definition, the process reaches case $1$ steady-state regime and only plays optimal policy of region $j$ for remaining budget. 
To be fully general, we note that two or more regions can be reached simultaneously. In this case, the optimal policy tie-breaks by taking the region with maximal index i.e. higher censorship, as further described in Lemma \ref{Dual Reachability Analysis}. 
\end{proof}


\subsection{Statement and Proof of Cor. \ref{Path Formula}}

More generally, this allows us to deduce the next technical corollary:
\begin{corollary}\label{Path Formula}
For a sequence of censored regions $\{i_{1}=k,\dots,i_{l},i_{l+1},\dots\}$, we have for the $l^{th}$ region of the transient sequence, with starting time $t_{l-1}$ and ending time $t_{l}$:
\begin{align*}
    \mathbb{W}(t_{l}) &= \lambda \mathbb{I}_{d}+\sum_{n=1}^{l}\mu^{\star}(i_{n+1},i_{n},\lambda^{\mathbb{W}(t_{n-1})}_{a},\lambda_{b}^{\mathbb{W}(t_{n-1})})\mathbb{W}_{i_{n}}\\
    &=\frac{\lambda\frac{(d-1)\sin^{2}(\phi_{k})-\cos^{2}(\phi_{k})}{\cos^{2}(\phi_{i_{l}})\sin^{2}(\phi_{i_{l}})}\displaystyle\prod_{n=1}^{l-1} \Big(r^{\dagger}(i_{n+1},i_{n})-\frac{p_{i_{n+1}}}{p_{i_{n}}}\Big)}{d^{l}\displaystyle\prod_{n=1}^{l}\Big( r^{\star}(i_{n+1},i_{n})-\frac{p_{i_{n+1}}}{p_{i_{n}}}\Big)\displaystyle\prod_{n=1}^{l-1} \Big(u(i_{n+1},i_{n})+dl(i_{n+1},i_{n})\Big)}\mathbb{W}(i_{l+1},i_{l}),
\end{align*}
where $t_{l}$ is characterized by:
\begin{align*}
    t_{l} = \sum_{n=1}^{l}\frac{1}{p_{i_{n}}}\mu^{\star}(i_{n+1},i_{n},\lambda^{\mathbb{W}(t_{n-1})}_{a},\lambda_{b}^{\mathbb{W}(t_{n-1})}),
\end{align*}
and where $\lambda^{\mathbb{W}(t_{n})}_{a}$ and $\lambda_{b}^{\mathbb{W}(t_{n})}$ refer respectively to the upper and lower coefficient of the diagonal matrix $\mathbb{W}(t_{n})$.
\end{corollary}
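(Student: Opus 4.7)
The plan is to establish both equations by induction on $l$ through repeated application of Lemma \ref{One-step Transient Analysis}. The first, summation-form identity arises as a telescoping of the transient dynamics: during the $n$-th step, between times $t_{n-1}$ and $t_n$, the optimal policy plays exclusively in region $i_n$ and, by the characterization of $\mu^{\star}$ in Lemma \ref{One-step Transient Analysis}, increments the design matrix by exactly $\mu^{\star}(i_{n+1}, i_n, \lambda_a^{\mathbb{W}(t_{n-1})}, \lambda_b^{\mathbb{W}(t_{n-1})}) \, \mathbb{W}_{i_n}$. Unrolling from $n=1$ to $n=l$ with initial condition $\mathbb{W}(0) = \lambda \mathbb{I}_d$ yields the summation. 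The accompanying time formula follows immediately by dividing each increment by $p_{i_n}$, since $\mu^{\star}$ is measured in the normalized budget variable $\mu = p_{i_n}(t - t_{n-1})$ used throughout the proof of Lemma \ref{One-step Transient Analysis}.

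For the closed-form expression, the key structural observation is that Lemma \ref{One-step Transient Analysis} guarantees $\mathbb{W}(t_n)$ is \emph{proportional} to $\mathbb{W}(i_{n+1}, i_n)$ for every $n \geq 1$: the reachability condition at the end of each transient step aligns the matrix exactly with the next transition direction. The induction therefore reduces to tracking a single scalar $C(n)$ defined by $\mathbb{W}(t_n) = C(n)\,\mathbb{W}(i_{n+1}, i_n)$. I would read off the base case $n = 1$ directly from Lemma \ref{One-step Transient Analysis} by setting $\lambda_a = \lambda_b = \lambda$ and $j = i_1 = k$, which matches the corollary's formula at $l = 1$ with empty products interpreted as one.

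For the inductive step, I would substitute $\lambda_a^{\mathbb{W}(t_n)} = C(n)\cos^2(\phi_{i_n})(u(i_{n+1}, i_n) - p_{i_{n+1}}/p_{i_n})$ and $\lambda_b^{\mathbb{W}(t_n)} = C(n)\sin^2(\phi_{i_n})(p_{i_{n+1}}/p_{i_n} - l(i_{n+1}, i_n))$ into Lemma \ref{One-step Transient Analysis} applied with $j = i_{n+1}$ and target $i_{n+2}$. Using the elementary identities $\cos^2(\phi_{i_n})\, u(i_{n+1}, i_n) = \cos^2(\phi_{i_{n+1}})$ and $\sin^2(\phi_{i_n})\, l(i_{n+1}, i_n) = \sin^2(\phi_{i_{n+1}})$, the numerator $(d-1)\sin^2(\phi_{i_{n+1}}) \lambda_a^{\mathbb{W}(t_n)} - \cos^2(\phi_{i_{n+1}}) \lambda_b^{\mathbb{W}(t_n)}$ should collapse by direct algebra to a multiple of $[r^{\dagger}(i_{n+1}, i_n) - p_{i_{n+1}}/p_{i_n}]$, the emergence of $r^{\dagger}$ matching its definition $\tfrac{d\,l\,u}{u + (d-1)l}$ exactly. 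Combined with the denominator from the lemma, namely $d\cos^2(\phi_{i_{n+1}}) \sin^2(\phi_{i_{n+1}})(r^{\star}(i_{n+2}, i_{n+1}) - p_{i_{n+2}}/p_{i_{n+1}})$, this produces a clean multiplicative recursion for $C(n+1)/C(n)$ that, telescoped from the base case, yields the stated products over $n = 1, \dots, l-1$ (for $r^{\dagger}$-terms) and over $n = 1, \dots, l$ (for $r^{\star}$-terms).

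The hard part will be the algebraic bookkeeping in the telescoping: ensuring that the $\cos^2(\phi_{i_n}) \sin^2(\phi_{i_n})$ factors across successive levels cancel down to the lone $\cos^2(\phi_{i_l}) \sin^2(\phi_{i_l})$ surviving in the final denominator, that the powers of $d$ compound to exactly $d^l$, and that the off-by-one indexing between the two products is respected. Sign tracking also matters, since the quantities $u(i_{n+1}, i_n) - p_{i_{n+1}}/p_{i_n}$, $p_{i_{n+1}}/p_{i_n} - l(i_{n+1}, i_n)$, and their $r^{\dagger}$ and $r^{\star}$ analogues all carry definite signs imposed by the reachability and (at the final step) dual-reachability conditions that the transient sequence satisfies in the setup of Thm. \ref{THM Linear Optim MTM}.
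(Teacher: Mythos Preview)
Your proposal is correct and follows essentially the same approach as the paper: induction on $l$ via the recursion $\mathbb{W}(t_{l}) = \frac{(d-1)\sin^{2}(\phi_{i_{l}})\lambda_{a}^{\mathbb{W}(t_{l-1})}-\cos^{2}(\phi_{i_{l}})\lambda_{b}^{\mathbb{W}(t_{l-1})}}{d\cos^{2}(\phi_{i_{l}})\sin^{2}(\phi_{i_{l}})(r^{\star}(i_{l+1},i_{l})-p_{i_{l+1}}/p_{i_{l}})}\,\mathbb{W}(i_{l+1},i_{l})$ and $t_{l} = t_{l-1} + \frac{1}{p_{i_{l}}}\mu^{\star}$ from Lemma \ref{One-step Transient Analysis}, with initialization $\mathbb{W}(0)=\lambda\mathbb{I}_d$. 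The paper's own proof is a two-line invocation of this recursion and leaves all the algebraic telescoping implicit; your roadmap for tracking the scalar $C(n)$ and extracting the $r^{\dagger}$ factor is exactly the bookkeeping the paper omits.
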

\begin{proof}
We leverage a simple induction reasoning using for $l \geq 1$ the formula given within the proof of lemma \ref{One-step Transient Analysis}:
\begin{align*}
    t_{l} &= t_{l-1} + \frac{1}{p_{i_{l}}}\mu^{\star}(i_{l+1},i_{l},\lambda^{\mathbb{W}(t_{l-1})}_{a},\lambda_{b}^{\mathbb{W}(t_{l-1})}) \\
    \mathbb{W}(t_{l}) &= \frac{(d-1)\sin^{2}(\phi_{i_{l}})\lambda^{\mathbb{W}(t_{l-1})}_{a}-\cos^{2}(\phi_{i_{l}})\lambda^{\mathbb{W}(t_{l-1})}_{b}}{d\cos^{2}(\phi_{i_{l}})\sin^{2}(\phi_{i_{l}})(r^{\star}(i_{l+1},i_{l})-\frac{p_{i_{l+1}}}{p_{i_{l}}})}\mathbb{W}(i_{l+1},i_{l}),
\end{align*}
and the initialization conditions $t_{0}=0$ and $\mathbb{W}(0)=\lambda \mathbb{I}_{d}$.
\end{proof}


\subsection{Statement and Proof of Lemma \ref{Dual Reachability Analysis}}

\begin{lemma}[Dual Reachability Analysis]\label{Dual Reachability Analysis}
Let's assume we are currently playing transient region $j$ and we reach the region $i$ at time $t_{l}$. We then have the following two possible cases:
\begin{itemize}
    \item If $\frac{p_{i}}{p_{j}} > r^{\dagger}(i,j)$, we say that regions $i$ is dual reachable from region $j$, leading to a steady state regime with bi-region $(i,j)$ effective dimension. In such case, for $t\geq t_{l}$, the potential increase is of the form:
    \begin{align*}
            \mathbb{W}(t) = \frac{1}{p_{i}/p_{j} + \frac{d}{u+(d-1)l}\frac{r^{\star}(i,j)-p_{i}/p_{j}}{p_{i}/p_{j} - r^{\dagger}(i,j)}}\frac{u-l}{u+(d-1)l}\frac{1}{p_{i}/p_{j}-r^{\dagger}(i,j)}p_{i}(t+\lambda^{\star})\mathbb{W}(i,j).
    \end{align*}
    \item Otherwise, we switch from base region $j$ to base region $i$ and continue in the transient regime.
\end{itemize}
\end{lemma}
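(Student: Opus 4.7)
The plan is to identify what the optimal policy does immediately after the first-order condition $p_i^{-1}\operatorname{Tr}(\mathbb{W}(t)^{-1}\mathbb{W}_i) = p_j^{-1}\operatorname{Tr}(\mathbb{W}(t)^{-1}\mathbb{W}_j)$ becomes active at time $t_l$, by testing the candidate mixed policy that plays the canonical unit vector of region $j$ with density $\alpha$ and that of region $i$ with density $1-\alpha$. By Lemma~\ref{One-step Transient Analysis} we have $\mathbb{W}(t_l) = c\cdot\mathbb{W}(i,j)$ for an explicit $c>0$, and under the mixed policy $\mathbb{W}(t) = \mathbb{W}(t_l) + (t-t_l)M$ with $M = \alpha p_j \mathbb{W}_j + (1-\alpha) p_i \mathbb{W}_i$. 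Sustaining the first-order equality on $[t_l,\infty)$ forces $M \propto \mathbb{W}(i,j)$, since $\mathbb{W}(t_l)$ already is.

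Exploiting the block-diagonal structure of $\mathbb{W}_i, \mathbb{W}_j, \mathbb{W}(i,j)$ (isotropic on the first $d-1$ coordinates, scalar on the last), this matrix proportionality reduces to a single scalar equation that is linear in $\alpha$. Introducing $\rho \triangleq p_i/p_j$ and using the identities $d\,r^{\star}(i,j) = (d-1)u + l$ and $r^{\dagger}(i,j) = dul/((d-1)l+u)$, a direct computation yields
\begin{align*}
\frac{\alpha}{1-\alpha} \;=\; \rho\cdot\frac{((d-1)l+u)(\rho - r^{\dagger}(i,j))}{d(r^{\star}(i,j) - \rho)}.
\end{align*}
Since reachability already gives $\rho < r^{\star}(i,j)$, this ratio lies in $(0,\infty)$ precisely when $\rho > r^{\dagger}(i,j)$, delivering the two cases of the lemma. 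If $\rho \leq r^{\dagger}(i,j)$, no admissible $\alpha\in(0,1)$ exists, so region $j$ must leave the support of the optimal policy and the process re-enters the transient regime of Lemma~\ref{One-step Transient Analysis} with $i$ as the new base region.

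In the dual reachable case, writing $M = m\cdot\mathbb{W}(i,j)$ with $m$ obtained from the same scalar equation, we have $\mathbb{W}(t) = (c + (t-t_l)m)\mathbb{W}(i,j) = m(t+\lambda^{\star})\mathbb{W}(i,j)$ with $\lambda^{\star} \triangleq c/m - t_l$. I would then substitute $c$ from Cor.~\ref{Path Formula} and simplify the expression for $m$ using a key cancellation that surfaces the factor $(u-l)$, to land on the prefactor stated in the lemma. The main obstacle will be this final algebraic bookkeeping --- matching the combination $d(r^{\star}(i,j)-\rho) + (u+(d-1)l)\rho(\rho-r^{\dagger}(i,j))$ that emerges in the denominator --- together with ruling out that a third region $k\neq i,j$ could enter the optimal mixture; the latter follows because, by Lemma~\ref{One-step Transient Analysis}, any such $k$ was strictly slower to reach at $t_l$, and the monotone structure of the multi-threshold model prevents its first-order slack from crossing zero while $\mathbb{W}(t)$ remains on the ray $(0,\infty)\cdot\mathbb{W}(i,j)$.
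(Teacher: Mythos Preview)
Your proposal is correct and follows essentially the same route as the paper: both arguments recognize that $\mathbb{W}(t_l)\propto\mathbb{W}(i,j)$, then look for an infinitesimal mixture $\alpha p_j\mathbb{W}_j+(1-\alpha)p_i\mathbb{W}_i\propto\mathbb{W}(i,j)$, solve the resulting scalar equation (your ratio $\alpha/(1-\alpha)$ is exactly the reciprocal of the paper's $p_i\mu_i/(p_j\mu_j)$ up to the factor $\rho$), and read off the dual-reachability threshold $r^{\dagger}(i,j)$ from the sign condition. One minor simplification available from the paper: to rule out a third region $k$ entering the mixture, you do not need the monotone structure of the model --- once $\mathbb{W}(t)$ stays on the ray $(0,\infty)\cdot\mathbb{W}(i,j)$, every ratio $\operatorname{Tr}(\mathbb{W}(t)^{-1}\mathbb{W}_k)/\operatorname{Tr}(\mathbb{W}(t)^{-1}\mathbb{W}_j)$ is frozen in $t$, so a region that was strictly suboptimal at $t_l$ remains so forever.
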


\begin{figure}[h]
    \centering
    \includegraphics[width=0.8\textwidth]{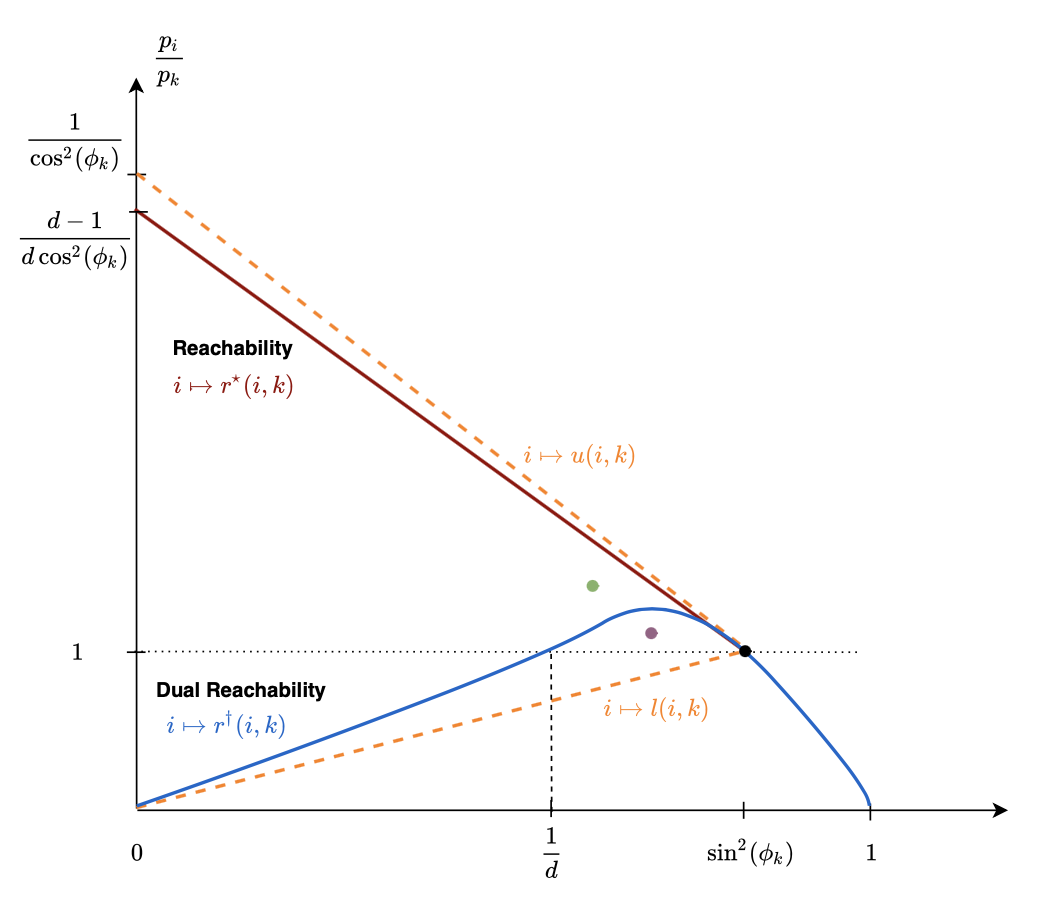}
    \caption{Sketch plot of reachability and dual reachability conditions from base region $k$ associated with the black dot (Lemma \ref{Dual Reachability Analysis} and \ref{One-step Transient Analysis}) as a function of $\frac{p_{i}}{p_{j}}$. For a region $i$ to be reachable, $\frac{p_{i}}{p_{j}}$ has to be below the red line. For a region $i$ to be dual reachable, $\frac{p_{i}}{p_{j}}$ has to be above the blue line. Henceforth, the red dot here is a censorship region that is both reachable and dual reachable whereas the purple dot is a reachable but not dual reachable region. Orange lines represent the functions $u(i,k)$ and $l(i,k)$ introduced above in Sec.\ref{sup_not}.}
    \label{reach}
\end{figure}

\begin{figure}[h]
    \centering
    \includegraphics[width=0.9 \textwidth]{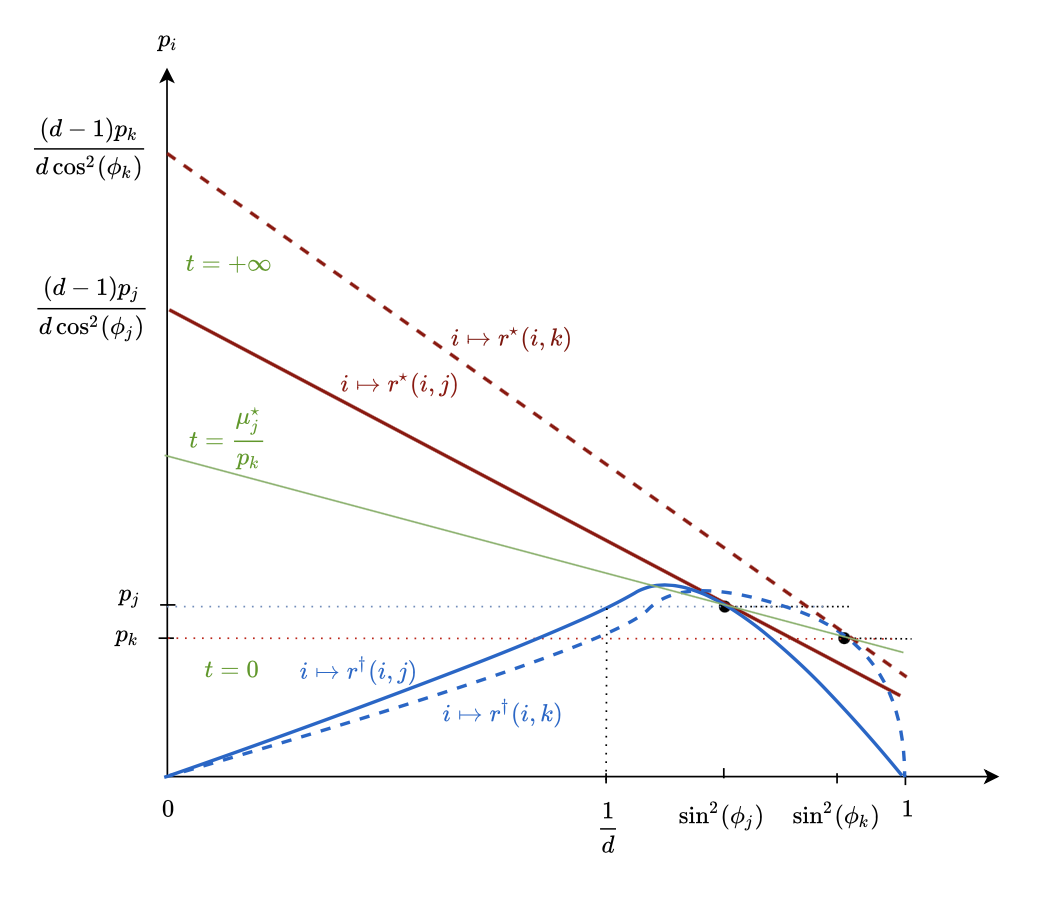}
    \caption{Sketch plot of the evolution of reachability and dual reachability conditions after a region $j$ is reached from region $k$ but is not dual reachable (Else condition in Alg. \ref{Dyn_MT}). Doted red (resp. blue) line is reachability (resp. dual reachability) condition for previous region $k$ and full red (resp. blue) lines is reachability (resp. dual reachability) condition for new region $j$. Instead of starting from horizontal line at $t=0$ to find new reachable state, rotation with region $j$ as pivot is initialized at the green line associated with $t=\frac{\mu^{\star}_{j}}{p_{k}}$. Note that the $y$-axis is not normalized here.}
    \label{switch}
\end{figure}

\begin{proof}
Using previous section, we know that $\mathbb{W}(t_{l}) \propto \mathbb{W}(i,j)$
where we recall that the matrix $\mathbb{W}(i,j)$ has the strong property that the gains in regions $i$ and $j$ are equal i.e.:
\begin{align*}
    \frac{1}{p_{i}}\operatorname{Tr}(\mathbb{W}(i,j)^{-1}\mathbb{W}_{i}) &= \frac{1}{p_{j}}\operatorname{Tr}(\mathbb{W}(i,j)^{-1}\mathbb{W}_{j}).
\end{align*} One of the main result we show in the multi-threshold censorship model is that for $t\geq t_{l}$, we have:
\begin{align*}
    \mathbb{W}(t) - \mathbb{W}(t_{l}) \propto (t-t_{l}) \mathbb{W}(i,j),
\end{align*}
which involves in particular that for $t\geq t_{l}, \mathbb{W}(t)\propto \mathbb{W}(i,j)$. This is possible thanks to the fact that the optimal policy produces a combination of $p_{i}\mathbb{W}_{i}$ and $p_{j}\mathbb{W}_{j}$ proportional to $\mathbb{W}(i,j)$ so that optimally of both regions $i$ and $j$ is maintained while maximal first-order gain is simultaneously ensured. The proportionality condition is then written as the existence of $\mu_{i},\mu_{j}>0$ such that $p_{i}\mu_{i}\mathbb{W}_{i}+p_{j}\mu_{j}\mathbb{W}_{j}\propto  \mathbb{W}(i,j)$ or equivalently as:
\begin{align*}
    \exists \mu_{i},\mu_{j}>0, \quad  \frac{\frac{1}{d-1}[p_{i}\mu_{i}\cos^{2}(\phi_{i})+p_{j}\mu_{j}\cos^{2}(\phi_{j})]}{p_{i}\mu_{i}\sin^{2}(\phi_{i})+p_{j}\mu_{j}\sin^{2}(\phi_{j})} = \frac{\cos^{2}(\phi_{j})(u(i,j)-\frac{p_{i}}{p_{j}})}{\sin^{2}(\phi_{j})(\frac{p_{i}}{p_{j}}-l(i,j))} \triangleq R,
\end{align*}
where $\mu_{i}$ and $\mu_{j}$ are the infinitesimal time increase in regions $i$ and $j$. It leads in turn to the ratio equality:
\begin{align*}
    \frac{p_{i}\mu_{i}}{p_{j}\mu_{j}} = \frac{\sin^{2}(\phi_{j})(d-1)R - \cos^{2}(\phi_{j})}{\cos^{2}(\phi_{i})-\sin^{2}(\phi_{i})(d-1)R} = \frac{(d-1)u+l - d\frac{p_{i}}{p_{j}}}{(u+(d-1)l)\frac{p_{i}}{p_{j}} - dlu} = \frac{d}{u+(d-1)l}\frac{r^{\star}(i,j)-\frac{p_{i}}{p_{j}}}{\frac{p_{i}}{p_{j}} - r^{\dagger}(i,j)}.
\end{align*}
Thus, we see that bi-region stationarity is possible if and only if $\frac{p_{i}}{p_{j}} > r^{\dagger}(i,j)$ where we introduced the dual reachability condition:
\begin{align*}
    r^{\dagger}(i,j) \triangleq \frac{dl(i,j)u(i,j)}{u(i,j)+(d-1) l(i,j)} = \Big(\frac{\frac{d-1}{u(i,j)}+\frac{1}{l(i,j)}}{d}\Big)^{-1} = \Big(\frac{1}{d}\operatorname{Tr}(\mathbb{W}_{i}^{-1}\mathbb{W}_{j})\Big)^{-1} = \frac{1}{r^{\star}(j,i)}.
\end{align*}
Hence, the use of the term dual reachability comes from the fact that region $i$ is dual reachable from region $j$ if and only if region $j$ is reachable from region $j$.
In such case, further algebraic calculation then lead to the instantaneous potential increase $\partial W$ for infinitesimal time $\partial t\triangleq \mu_{j}+\mu_{j}$:
\begin{align*}
    \partial W(\partial t) &\triangleq p_{j}\mu_{j}\mathbb{W}_{j} + p_{i}\mu_{i}\mathbb{W}_{i} = \frac{u-l}{u+(d-1)l}\frac{1}{\frac{p_{i}}{p_{j}}-r^{\dagger}(i,j)}p_{j}\mu_{j}\mathbb{W}(i,j).
\end{align*}
We then note that:
\begin{align*}
    \frac{\mu_{i}+\mu_{j}}{\mu_{j}} = 1 + \frac{1}{\frac{p_{i}}{p_{j}}}\frac{d}{u+(d-1)l}\frac{r^{\star}(i,j)-\frac{p_{i}}{p_{j}}}{\frac{p_{i}}{p_{j}} - r^{\dagger}(i,j)}.
\end{align*}
Therefore, we conclude that:
\begin{align*}
    \partial W(\partial t) &= \frac{1}{p_{i}/p_{j} + \frac{d}{u+(d-1)l}\frac{r^{\star}(i,j)-p_{i}/p_{j}}{p_{i}/p_{j} - r^{\dagger}(i,j)}}\frac{u-l}{u+(d-1)l}\frac{1}{p_{i}/p_{j}-r^{\dagger}(i,j)}p_{i}(\mu_{j}+\mu_{i})\mathbb{W}(i,j) \\
    &= \frac{1}{p_{i}/p_{j} + \frac{d}{u+(d-1)l}\frac{r^{\star}(i,j)-p_{i}/p_{j}}{p_{i}/p_{j} - r^{\dagger}(i,j)}}\frac{u-l}{u+(d-1)l}\frac{1}{p_{i}/p_{j}-r^{\dagger}(i,j)}p_{i}\partial t\mathbb{W}(i,j).
\end{align*}

We then introduce $\lambda^{\star}$ defined such that:
\begin{align*}
    (t_{l}+\lambda^{\star})\mathbb{W}(i,j) \triangleq \frac{1}{p_{i}}\frac{(u+(d-1)l)(p_{i}/p_{j}-r^{\dagger}(i,j))}{u-l}\Big(p_{i}/p_{j} + \frac{d}{u+(d-1)l}\frac{r^{\star}(i,j)-p_{i}/p_{j}}{p_{i}/p_{j} - r^{\dagger}(i,j)}\Big)\mathbb{W}(t_{l}).
\end{align*}
Given the previous two results, we conclude that for all $t\geq t_{l}$:
\begin{align*}
    \mathbb{W}(t) = \frac{1}{p_{i}/p_{j} + \frac{d}{u+(d-1)l}\frac{r^{\star}(i,j)-p_{i}/p_{j}}{p_{i}/p_{j} - r^{\dagger}(i,j)}}\frac{u-l}{u+(d-1)l}\frac{1}{p_{i}/p_{j}-r^{\dagger}(i,j)}p_{i}(t+\lambda^{\star})\mathbb{W}(i,j).
\end{align*}
Note that entering the bi-region stationary regime impedes new regions to be reachable. Indeed, going back to the initial definition of reachability, region $n$ is said to be reachable from region $j$ after time $t_{l}$ if and only if: 
\begin{align*}
    \exists t\geq t_{l}, \quad \frac{1}{p_{n}}\operatorname{Tr}(\mathbb{W}(t)^{-1}\mathbb{W}_{n}) &= \frac{1}{p_{j}}\operatorname{Tr}(\mathbb{W}(t)^{-1}\mathbb{W}_{j}).
\end{align*}
Yet, using previous result on the evolution of $\mathbb{W}(t)$, we know that the ratio of those two quantities remain equal for any $t\geq t_{l}$ i.e. no new regions can be reached.

Moreover, using the optimality criterion of Lemma \ref{One-step Transient Analysis}, when several regions are reached simultaneously, the tie-breaking is performed by considering the most censored region, i.e. the one with the highest $i$ index. If the chosen region is not dual reachable, then the next one is considered. In the case where none of them is dual reachable, the base region becomes the maximally censored region and we immediately reiterate the procedure described in Lemma \ref{Dual Reachability Analysis}.


\end{proof}


\subsection{Statement and Proof of Lemma \ref{Bi-Region Effective Dimension}}

\begin{lemma}[Bi-Region Effective Dimension]\label{Bi-Region Effective Dimension}
Let's assume we reach a bi-region $(i,j)$ steady state regime at time $t_{l}\leq T$. Then, we have:
\begin{align*}
    \int_{t_{l}}^{T}\frac{1}{p(a(t))}\frac{\partial\log\det(\mathbb{W}(t))}{\partial t}\partial t = d_{\mathit{eff}}\log(1+\frac{T-t_{l}}{t_{l}+\lambda^{\star}}) \sim d_{\mathit{eff}}\log(T),
\end{align*}
where $d_{\mathit{eff}}=\frac{1}{p_{j}}\left[(d-1) \frac{1-l(i,j)}{p_{i}/p_{j}-l(i,j)}+\frac{u(i,j)-1}{u(i,j)-p_{i}/p_{j}}\right]$ and $\lambda^{\star}$ is given in the proof of Lemma \ref{Dual Reachability Analysis}. Moreover, we have the cumulative transient potential:
\begin{align*}
     \int_{0}^{t_{l}}\frac{1}{p(a(t))}\frac{\partial\log\det(\mathbb{W}(t))}{\partial t}\partial t &= \sum_{n=1}^{l} \frac{1}{p_{i_{n}}}\int_{t_{n-1}}^{t_{n}}\partial\log\det(\mathbb{W}(t)) = \sum_{n=1}^{l} \frac{1}{p_{i_{n}}} \log\frac{\det(\mathbb{W}(t_{n}))}{\det(\mathbb{W}(t_{n-1}))} \\
    &= \sum_{n=1}^{l} (\frac{1}{p_{i_{n}}}-\frac{1}{p_{i_{n+1}}})\log\det\mathbb{W}(t_{n}).
\end{align*}
\end{lemma}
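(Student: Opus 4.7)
The plan is to exploit the proportionality $\mathbb{W}(t) = C\, p_i (t+\lambda^\star)\,\mathbb{W}(i,j)$ for $t\geq t_l$ established in Lemma~\ref{Dual Reachability Analysis}, and combine it with the bi-region stationarity identity $\frac{1}{p_i}\operatorname{Tr}(\mathbb{W}(t)^{-1}\mathbb{W}_i) = \frac{1}{p_j}\operatorname{Tr}(\mathbb{W}(t)^{-1}\mathbb{W}_j)$. First I would observe that since the steady-state matrix grows linearly (as a scalar multiple of the fixed matrix $\mathbb{W}(i,j)$), we get immediately $\log\det\mathbb{W}(t) = d\log(t+\lambda^\star) + \textrm{const}$, hence $\partial\log\det\mathbb{W}(t)/\partial t = d/(t+\lambda^\star)$. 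The integral $\int_{t_l}^{T}\partial\log\det\mathbb{W}(t)/\partial t\,\partial t$ therefore produces the logarithmic factor $\log(1+(T-t_l)/(t_l+\lambda^\star))$; what remains is to identify the right multiplicative constant as $d_{\mathit{eff}}$.

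Next I would compute this constant. Let $\alpha_i, \alpha_j$ denote the steady-state fractions of time allocated to regions $i$ and $j$ (so that $\alpha_i/\alpha_j = \mu_i/\mu_j$ from Lemma~\ref{Dual Reachability Analysis}). On an infinitesimal interval, the integrand $\tfrac{1}{p(a(t))}\partial\log\det\mathbb{W}$ equals $\operatorname{Tr}(\mathbb{W}^{-1}\mathbb{W}_i)$ or $\operatorname{Tr}(\mathbb{W}^{-1}\mathbb{W}_j)$ depending on the region. Introducing $\beta(t)$ defined by the stationarity identity, both become $p_i\beta(t)$ and $p_j\beta(t)$. The time-average gives $(\alpha_i p_i+\alpha_j p_j)\beta(t)$, while applying $\operatorname{Tr}(\mathbb{W}^{-1}\cdot)$ to the (time-averaged) $\partial\mathbb{W}/\partial t = \alpha_i p_i\mathbb{W}_i + \alpha_j p_j \mathbb{W}_j$ gives $(\alpha_i p_i^2+\alpha_j p_j^2)\beta(t) = d/(t+\lambda^\star)$. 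Solving for $\beta$ and integrating yields
\begin{equation*}
\int_{t_l}^{T}\tfrac{1}{p(a(t))}\partial\log\det\mathbb{W}(t) = \frac{d(\alpha_i p_i+\alpha_j p_j)}{\alpha_i p_i^2+\alpha_j p_j^2}\,\log\!\Big(1+\tfrac{T-t_l}{t_l+\lambda^\star}\Big).
\end{equation*}

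The remaining task, and the main algebraic obstacle, is to simplify the prefactor into the claimed form in~\eqref{bi_reg}. Substituting the ratio $K\triangleq \alpha_i p_i/(\alpha_j p_j) = \tfrac{d}{u+(d-1)l}\tfrac{r^\star-\rho}{\rho-r^\dagger}$ from Lemma~\ref{Dual Reachability Analysis} (with $\rho=p_i/p_j$), the prefactor reduces to $\tfrac{d(K+1)}{p_j(K\rho+1)}$. I would then expand numerator and denominator using $dr^\star = (d-1)u+l$ and $(u+(d-1)l)r^\dagger = dlu$, showing that the denominator factors cleanly as $d(u-\rho)(\rho-l)$ while the numerator regroups into $(d-1)(1-l)(u-\rho) + (u-1)(\rho-l)$; term-by-term division delivers exactly $\frac{1}{p_j}\big[(d-1)\tfrac{1-l}{\rho-l} + \tfrac{u-1}{u-\rho}\big]$, and the asymptotic $\sim d_{\mathit{eff}}\log T$ follows since $t_l+\lambda^\star$ is a finite constant.

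For the transient-regime formula, I would simply use that in the interval $[t_{n-1},t_n]$ the function $p(a(t))$ is constant equal to $p_{i_n}$, so the integral telescopes to $\sum_{n=1}^{l}\tfrac{1}{p_{i_n}}[\log\det\mathbb{W}(t_n)-\log\det\mathbb{W}(t_{n-1})]$; the final claimed form then follows by Abel summation-by-parts, regrouping the sum by the differences $\tfrac{1}{p_{i_n}}-\tfrac{1}{p_{i_{n+1}}}$ (absorbing the boundary terms via the convention that the $(l{+}1)$-st index corresponds to the steady-state region identified in the two cases of Thm.~\ref{THM Linear Optim MTM}). The algebraic simplification of the prefactor is the technically delicate step; everything else is a direct consequence of the explicit formulas recorded in Lemmas~\ref{One-step Transient Analysis} and~\ref{Dual Reachability Analysis}.
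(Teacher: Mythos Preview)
Your proof is correct and follows essentially the same skeleton as the paper: exploit the linear growth $\mathbb{W}(t)\propto (t+\lambda^\star)\mathbb{W}(i,j)$ from Lemma~\ref{Dual Reachability Analysis}, combine with the stationarity identity, and then perform algebra to isolate $d_{\mathit{eff}}$.

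The organization of the main computation differs slightly. The paper writes the infinitesimal gain as $\partial G=\tfrac{p_i\mu_i+p_j\mu_j}{p_j}\operatorname{Tr}(\mathbb{W}(t)^{-1}\mathbb{W}_j)$ and then evaluates $\operatorname{Tr}(\mathbb{W}(i,j)^{-1}\mathbb{W}_j)=\tfrac{1}{u-\rho}+\tfrac{1}{\rho-l}$ explicitly from the diagonal entries of $\mathbb{W}(i,j)$, before plugging in the $\mu_i/\mu_j$ ratio. You instead observe directly that $\det\mathbb{W}(t)\propto(t+\lambda^\star)^d$, so $\partial\log\det\mathbb{W}/\partial t=d/(t+\lambda^\star)$, and use this to solve for $\beta(t)$ without ever computing that trace; the prefactor then emerges as the clean ratio $d(\alpha_ip_i+\alpha_jp_j)/(\alpha_ip_i^2+\alpha_jp_j^2)$. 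This is a nice repackaging: it makes the appearance of $\log(1+\tfrac{T-t_l}{t_l+\lambda^\star})$ immediate and isolates all the model-specific work into the final algebraic reduction of $d(K+1)/[p_j(K\rho+1)]$. The paper's route is more hands-on but requires one fewer substitution at the end. Your transient-regime argument (constant $p$ on each $[t_{n-1},t_n]$ followed by Abel summation) matches the paper exactly.
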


\begin{proof}

For $t\geq t_{l}$, we have the infinitesimal two-step increase $\partial G$ during the infinitesimal time $\partial t\triangleq \mu_{i}+\mu_{j}$:
\begin{align*}
    \partial G(\partial t) &\triangleq \mu_{i}\operatorname{Tr}(\mathbb{W}(t)^{-1}\mathbb{W}_{i}) + \mu_{j}\operatorname{Tr}((\mathbb{W}(t)+\mu_{i}p_{i}\mathbb{W}_{i})^{-1}\mathbb{W}_{j})\\
    &= \mu_{i}\operatorname{Tr}(\mathbb{W}(t)^{-1}\mathbb{W}_{i}) + \mu_{j}\operatorname{Tr}(\mathbb{W}(t)^{-1}\mathbb{W}_{j}) +o(\partial t)\\
    &= \frac{p_{i}\mu_{i}+p_{j}\mu_{j}}{p_{j}}\operatorname{Tr}(\mathbb{W}(t)^{-1}\mathbb{W}_{j}) +o(\partial t),
\end{align*}
where we used the property of $\mathbb{W}(i,j)$. Invoking lemma $\ref{Dual Reachability Analysis}$, we know the evolution of $\mathbb{W}(t)$ for $t\geq t_{l}$:
\begin{align*}
    \mathbb{W}(t)=\frac{1}{1 + \frac{1}{p_{i}/p_{j}}\frac{d}{u+(d-1)l}\frac{r^{\star}(i,j)-p_{i}/p_{j}}{p_{i}/p_{j} - r^{\dagger}(i,j)}}\frac{u-l}{u+(d-1)l}\frac{1}{p_{i}/p_{j}-r^{\dagger}(i,j)}p_{j}(t+\lambda^{\star})\mathbb{W}(i,j),
\end{align*}
as well as the relations between $\mu_{i}$ and $\mu_{j}$:
\begin{align*}
    \begin{cases}
\frac{p_{i}\mu_{i}+p_{j}\mu_{j}}{p_{j}} &= \mu_{j}(1+\frac{d}{u+(d-1)l}\frac{r^{\star}(i,j)-p_{i}/p_{j}}{p_{i}/p_{j} - r^{\dagger}(i,j)})\\
\frac{\mu_{i}+\mu_{j}}{\mu_{j}} &= 1 + \frac{1}{p_{i}/p_{j}}\frac{d}{u+(d-1)l}\frac{r^{\star}(i,j)-p_{i}/p_{j}}{\frac{p_{i}}{p_{j}} - r^{\dagger}(i,j)}.
\end{cases}
\end{align*}
We invoke the fact that $\operatorname{Tr}(\mathbb{W}(i,j)^{-1}\mathbb{W}_{j})=\frac{1}{u-p_{i}/p_{j}}+\frac{1}{p_{i}/p_{j}-l}$ to conclude that:
\begin{align*}
    \partial G(\partial t) &=\frac{1}{p_{j}}\frac{[(d-1)l+u-d]\frac{p_{i}}{p_{j}}-[d l u-((d-1)u+l)]}{(u-\frac{p_{i}}{p_{j}})(\frac{p_{i}}{p_{j}}-l)}\frac{(1 + \frac{1}{p_{i}/p_{j}}\frac{d}{u+(d-1)l}\frac{r^{\star}(i,j)-p_{i}/p_{j}}{p_{i}/p_{j} - r^{\dagger}(i,j)})\mu_{j}}{t+\lambda^{\star}} \\
    &= \frac{1}{p_{j}}\left[(d-1) \frac{1-l}{\frac{p_{i}}{p_{j}}-l}+\frac{u-1}{u-\frac{p_{i}}{p_{j}}}\right] \frac{\partial t}{t+\lambda^{\star}} \\
    &= d_{\mathit{eff}}\frac{\partial t}{t+\lambda^{\star}}.
\end{align*}
Given that $\partial t$ is an infinitesimal time increase, we have in the steady state regime:
\begin{align*}
    \int_{t_{l}}^{T} \partial G = d_{\mathit{eff}}\int_{t_{l}}^{T}\frac{\partial t}{t+\lambda^{\star}} = d_{\mathit{eff}}\log(\frac{T+\lambda^{\star}}{t_{l}+\lambda^{\star}}) =  d_{\mathit{eff}}\log(1+\frac{T-t_{l}}{t_{l}+\lambda^{\star}}).
\end{align*}
We finally note that the cumulative potential coming from the transient period is equal to:
\begin{align*}
    \int_{0}^{t_{l}} \partial G &= \sum_{n=1}^{l} \frac{1}{p_{i_{n}}}\int_{t_{n-1}}^{t_{n}}\partial\log\det(\mathbb{W}(t)) = \sum_{n=1}^{l} \frac{1}{p_{i_{n}}} \log\frac{\det(\mathbb{W}(t_{n}))}{\det(\mathbb{W}(t_{n-1}))} \\
    &= \sum_{n=1}^{l} (\frac{1}{p_{i_{n}}}-\frac{1}{p_{i_{n+1}}})\log\det\mathbb{W}(t_{n}),
\end{align*}
where the closed-form expression of $\mathbb{W}(t_{n})$ is given in Corollary \ref{Path Formula}.
\end{proof}


\subsection{Special case: Single-threshold model}

\begin{corollary}\label{single_thres}
For the single threshold model with two regions $0$ and $1$ and associated censorship probabilities $p_{0}< p_{1}$, our main theorem yields:
\begin{itemize}
    \item If $\frac{p_{0}}{p_{1}} < \frac{d-1}{d\cos^{2}(\phi_{1})}$, then we reach bi-region steady state regime and have the effective dimension:
    \begin{align*}
        d_{\mathit{eff}} = \frac{d-1}{p_{0}}+\frac{1}{p_{0}}\frac{\sin^{2}(\phi_{1})}{\frac{p_{1}}{p_{0}}-\cos^{2}(\phi_{1})} \in [\frac{d}{p_{0}},\frac{d}{p_{1}}].
    \end{align*}
    \item Otherwise, we are from $t=0$ in single-region steady state regime and have the effective dimension $d_{\mathit{eff}} = d/p_{1}$.
\end{itemize}
\end{corollary}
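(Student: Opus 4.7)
The plan is to invoke Theorem \ref{THM Linear Optim MTM} (and the dynamics laid out in Algorithm \ref{Dyn_MT}) specialized to $k=1$. Because there is only one non-trivial threshold $\phi_1$, the reachability sequence produced by the algorithm has length at most two, so the entire argument reduces to computing one reachability threshold and then selecting between Case~1 and Case~2 of the theorem.

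I would begin by setting up Algorithm \ref{Dyn_MT} with initial region $k=1$. The only candidate transition is to region $0$, and it is governed by the constant $r^\star(0,1)$ defined in Section \ref{sup_not}. A direct substitution of $\phi_0=-\pi/2$ and $\phi_2=\pi/2$ into the Kiefer--Wolfowitz optimal-design computation underlying Lemma \ref{One-step Transient Analysis} collapses the general expression for $r^\star(0,1)$ to the clean value $(d-1)/(d\cos^2\phi_1)$ used in the corollary's case split.

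I would then split on the sign of $p_0/p_1 - (d-1)/(d\cos^2\phi_1)$. When the ratio exceeds the threshold, Lemma \ref{One-step Transient Analysis} guarantees that no transition ever occurs and the dynamics stay in region $1$ forever; Case~1 of Theorem \ref{THM Linear Optim MTM} then yields $d_{\mathit{eff}} = d/p_1$. When the ratio is strictly below the threshold, region $0$ is reachable, and a short calculation using Lemma \ref{Dual Reachability Analysis} shows that the dual-reachability condition $p_0/p_1 > r^\dagger(0,1)$ is vacuously satisfied, because the degeneracy at $\phi_0=-\pi/2$ pushes $r^\dagger(0,1)$ to a trivial value. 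Case~2 therefore applies; substituting $l(0,1)$ and $u(0,1)$ into formula $(\mathcal{D})$ and simplifying gives
\[
  d_{\mathit{eff}} \;=\; \frac{d-1}{p_0} + \frac{1}{p_0}\,\frac{\sin^2\phi_1}{p_1/p_0 - \cos^2\phi_1}.
\]
The sandwich estimate $d_{\mathit{eff}} \in [d/p_0, d/p_1]$ follows by monotonicity in $p_0/p_1$ together with evaluation at the two endpoints of the admissible ratio window (one endpoint recovers the Case~1 value $d/p_1$ on the reachability boundary, the other recovers the uniform-censorship value $d/p_0$).

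The only delicate point is the degeneracy at $\phi_0=-\pi/2$, which forces $u(0,1)$ and $l(0,1)$ to boundary values outside the range cleanly covered in the proof of Theorem \ref{THM Linear Optim MTM}. I would sidestep this by re-deriving the two key quantities directly rather than plugging into $(\mathcal{D})$: region $0$'s D-optimal design collapses to the single axis vector $-e_d$ whose contribution to $\mathbb{W}(t)$ increments only the last diagonal entry, and writing the first-order optimality condition between the $v_1$-play in region $1$ and the $-e_d$-play in region $0$ produces both the reachability threshold $(d-1)/(d\cos^2\phi_1)$ and, through the analogous two-region balance equation in the proof of Lemma \ref{Dual Reachability Analysis}, the compact form of the effective dimension stated in the corollary.
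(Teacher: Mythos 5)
Your high-level plan (specialize Theorem \ref{THM Linear Optim MTM} and Lemmas \ref{One-step Transient Analysis}, \ref{Dual Reachability Analysis} to $k=1$, observe that dual reachability is automatic because $r^{\dagger}(0,1)=0$, then read off the two cases) is exactly the paper's route, and your observation that $r^\dagger(0,1)$ degenerates to $0$ is correct under either convention for region $0$. However, your concrete handling of the degeneracy at $\phi_0=-\pi/2$ is wrong, and it is precisely the step that determines the formulas in the statement. You take region $0$'s optimal design to be the single axis vector $-e_d$, i.e.\ $\mathbb{W}_0=\operatorname{diag}(0,\dots,0,1)$, which corresponds to $u(0,1)=0$ and $l(0,1)=1/\sin^2\phi_1$. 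Plugging these into $r^\star(0,1)=\frac{(d-1)u+l}{d}$ gives $\frac{1}{d\sin^2\phi_1}$, not $\frac{d-1}{d\cos^2\phi_1}$, and plugging them into $(\mathcal{D})$ gives $d_{\mathit{eff}}=\frac{1}{p_0}+\frac{(d-1)\cos^2\phi_1}{p_1-p_0\sin^2\phi_1}$ — the multiplicities of the ``$d-1$ directions'' and ``$1$ direction'' terms are swapped relative to the corollary. The stated threshold and effective dimension are reproduced only if region $0$ is represented by the \emph{equatorial} design $\frac{1}{d-1}\operatorname{diag}(\mathbb{I}_{d-1},0)$ (i.e.\ mass on $\{a:\ a_d=0,\ \|a\|=1\}$, so $u(0,1)=1/\cos^2\phi_1$, $l(0,1)=0$), which increments the first $d-1$ diagonal entries of $\mathbb{W}(t)$ rather than the last one. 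This is not an arbitrary convention: after the transient phase in region $1$, the last diagonal entry of $\mathbb{W}(t)$ grows like $\mu\sin^2\phi_1$ versus $\mu\cos^2\phi_1/(d-1)$ for the others, and reachability together with $p_0/p_1>1$ forces $d\sin^2\phi_1>1$, so the last coordinate is the over-explored one and the maximizer of $\|a\|^2_{\mathbb{W}(t)^{-1}}$ over region $0$ sits on the equator, not at $-e_d$. Your ``first-order optimality condition between the $v_1$-play and the $-e_d$-play'' therefore does not produce the claimed threshold; as written, your derivation proves a different statement.

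A secondary, smaller issue: the sandwich $d_{\mathit{eff}}\in[d/p_0,\,d/p_1]$ does not follow from ``monotonicity in $p_0/p_1$.'' Writing $s=p_0/p_1$, the normalized quantity $p_1 d_{\mathit{eff}}(s)=\frac{d-1}{s}+\frac{\sin^2\phi_1}{1-s\cos^2\phi_1}$ equals $d$ at both ends of the admissible window $s\in(1,r^\star(0,1))$ and dips below $d$ in between (this is exactly the non-monotone profile of Fig.\ \ref{deff}), so monotonicity fails. The lower bound $p_1 d_{\mathit{eff}}\ge d/s$ reduces to $s\ge 1$ by a one-line computation, and the upper bound $p_1 d_{\mathit{eff}}\le d$ follows from convexity of $s\mapsto p_1 d_{\mathit{eff}}(s)$ on the window together with the two endpoint evaluations.
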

\begin{proof}
Using Lemma \ref{Dual Reachability Analysis} in the case of the single threshold model, we note that if region $0$ is reachable, it is necessarily dual reachable given that $r^{\dagger}(0,1)=0$ and henceforth, we always have $p_{0}/p_{1}> r^{\dagger}(0,1)$. Thanks to the results of Lemma \ref{One-step Transient Analysis}, we also note that $r^{\star}(0,1)=\frac{p_{0}}{p_{1}} < \frac{d-1}{d\cos^{2}(\phi_{1})}$ and that if region $0$ is reachable, it is done in a time:
\begin{align*}
    t_{1} = \frac{1}{p_{1}}\frac{(d-1)\lambda}{d\sin^{2}(\phi_{1})\cos^{2}(\phi_{1})}\frac{\frac{p_{0}}{p_{1}}-1}{\frac{d-1}{d\cos^{2}(\phi_{1})}-\frac{p_{0}}{p_{1}}}
\end{align*}
\end{proof}

\end{document}